\begin{document}
%
\title{Towards Rotation-only Imaging Geometry:\\ Rotation Estimation}
%
%
%
%

\author{Xinrui~Li, Qi~Cai, and~Yuanxin~Wu, ~\IEEEmembership{Senior~Member,~IEEE}
\IEEEcompsocitemizethanks{\IEEEcompsocthanksitem  Xinrui~Li, Qi~Cai, and Yuanxin~Wu are with the Shanghai Key Laboratory of Navigation and Location-based Services, School of Electronic Information and Electrical Engineering, Shanghai Jiao Tong University, Shanghai 200240, China.
E-mail: physalis@sjtu.edu.cn, qicaicn@gmail.com, yuanx\_wu@hotmail.com.
Qi~Cai and Yuanxin~Wu contributted equally to this work.}
\thanks{}}

\IEEEtitleabstractindextext{%
\begin{abstract}
  Structure from Motion (SfM) is a critical task in computer vision, aiming to recover the 3D scene structure and camera motion from a sequence of 2D images. The recent pose-only imaging geometry decouples 3D coordinates from camera poses and demonstrates significantly better SfM performance through pose adjustment. 
  Continuing the pose-only perspective, this paper explores the critical relationship between the scene structures, rotation and translation. Notably, the translation can be expressed in terms of rotation, allowing us to condense the imaging geometry representation onto the rotation manifold. A rotation-only optimization framework based on reprojection error is proposed for both two-view and multi-view scenarios. 
  The experiment results demonstrate superior accuracy and robustness performance  over the current state-of-the-art rotation estimation methods, even comparable to multiple bundle adjustment iteration results. 
  Hopefully, this work contributes to even more accurate, efficient and reliable 3D visual computing.
\end{abstract}

\begin{IEEEkeywords}
  Structure from Motion, imaging geometry, reprojection error, rotation manifold.

\end{IEEEkeywords}}

\maketitle

\IEEEdisplaynontitleabstractindextext

%
\IEEEpeerreviewmaketitle

\IEEEraisesectionheading{\section{Introduction}\label{sec:introduction}}
\IEEEPARstart{S}{tructure} from motion plays a significant role in modern computer vision, enabling the recovery of three-dimensional (3D) scene structure and camera motion trajectories from a sequence of 2D images. SfM techniques have been extensively applied to domains including but not limited to autonomous driving, robotic navigation, augmented reality, and scene reconstruction.

Nonlinear optimization methods are the key determinants in the overall accuracy of vision systems. The Bundle Adjustment (BA) algorithm \cite{BAinLarge,BundleAdjustmentAModernSynthesis} is widely employed as a mainstream optimization approach in various SfM platforms \cite{KneipOpenGV,OpenMVGPierre,theia-manual, glomap}.
The BA algorithm needs initial camera poses and 3D point coordinates as inputs, and is highly sensitive to these initial values \cite{OutOfCoreBA, DistributedBA,1dsfm}. 
In two-view 3D estimation, there are many methods for camera pose initialization \cite{STEWENIUS2006284,NisterRelative,Pizarro2003RelativePE,SevenPtHartley,LONGUETHIGGINS198761,cai2024linearrelativeposeestimation}, among which the five-point method \cite{STEWENIUS2006284,NisterRelative} is popular because of its adaptability to different scene structures and good accuracy. In the multi-view 3D estimation, global rotation initialization \cite{Govindu01,Martinec07,Govindu004,HartleyL1,ChatterjeePAMI2018}, global translation initialization \cite{QiCai_TPAMI, LUD, 1dsfm}, and global 3D reconstruction \cite{QiCai_TPAMI,KANG20142974,MultipleViewGeometry} need to be carried out in sequence, serving as initial inputs to global optimization.
In order to ensure the overall accuracy, these initial inputs require delicate processing before being fed into the optimizer. 
For example, the OpenMVG platforms perform multiple-rounds of BA optimization on different parameters to achieve satisfactory performance \cite{OpenMVGPierre}.
However, some special scene structures might lead to abnormal initial solutions \cite{Maybank90,MultipleViewGeometry,1dsfm, jiangGlobalLinearMethod2013a}, which seriously affect the accuracy and robustness of the final optimization. Additionally, it is worth mentioning that too many feature points lead to the curse of dimensionality in the BA \cite{BAinLarge, erikssonConsensusBasedFrameworkDistributed2016}.

Agarwal \cite{agarwal2022chiral} pointed out that the classical multi-view geometry founded on the epipolar relationship is incomplete, which isolates 3D depth information and ignores the chirality constraint.
Recent work \cite{QiCai_IJCV} proposed a complete pose-only imaging geometry, in which 3D scene points are represented by camera poses, and thus the imaging geometry is formulated on the pose manifold \cite{QiCai_TPAMI}.
Drawing inspiration from the process of deriving camera translations from rotations and observations \cite{QiCai_TPAMI}, we are led to question whether this implies that the origin of imaging geometry should be attributed to the problem of rotation estimation.

Rotation estimation has always been a fundamental challenge in modern SfM. 
Yet, experiments have shown that prevailing methods, whether two-view or multi-view, exhibit a gap in the accuracy of direct rotation estimation compared to the class of BA optimization methods, particularly in multi-view 3D estimation \cite{KneipOpenGV,OpenMVGPierre}. 
This might stem from the lack of a pure-rotation perspective in addressing imaging geometry, particularly the underlying reprojection error model.

Thus, our work---focused on a rotation-only imaging geometry and its corresponding rotation estimation methods---unfolds from here. The technical contributions of this work include:

\begin{enumerate}
    \item \textbf{Rotation-only imaging geometry:} Proposing to condense our recent pose-only representation into a lower dimension, namely, a rotation-only parametrization. It is independent of camera translation estimation.

    \item \textbf{New framework for rotation estimation:} Representing the reprojection error totally on the rotation manifold, leading to superior accuracy, robustness and efficiency of rotation estimation in both two-view and multi-view scenes.

    \item \textbf{Rationale and detector of translation degradation:} Revealing the impact of scene structure on translation estimation with a scene structure detector.
\end{enumerate}

\section{Related Works}

\subsection{Two-view Rotation Optimization Methods}
Mainstream two-view rotation optimization methods typically employ the BA algorithm \cite{BundleAdjustmentAModernSynthesis} to simultaneously estimate the camera's intrinsic parameters and poses as well as 3D feature points. The objective function is designed to minimize the reprojection errors of all observations. Many studies have made modifications to the reprojection error to enhance BA's adaptability to specific scenarios. 
For instance, Kneip \cite{KneipOpenGV} formulated a spherical reprojection error based on unit sphere chordal distances. A rotation-only optimization method was also proposed by Kneip in \cite{Kneip13ICCV}, leveraging the smallest eigenvalue associated with visual-geometric properties. 
Additionally, Zhao \cite{ZhaoJi} circumvented direct optimization of camera rotation parameters in two-view geometry framework, by solving the essential matrix through a quadratically constrained quadratic program (QCQP). Nevertheless, 
its performance degrades significantly in planar scenarios \cite{cai2024linearrelativeposeestimation}. Cai \cite{cai2024linearrelativeposeestimation} introduced a pose-only optimization framework for two-view estimation, which demonstrates remarkable enhancements in terms of scene adaptability, robustness and precision.

\subsection{Multi-view Rotation Estimation Methods}
Multi-view rotation estimation is widely carried out by global averaging or graph optimization methods, which feed on relative rotations derived from two-view pose estimation. 
Due to the computational efficiency, global averaging and graph optimization methods have been extensively adopted in various SfM software platforms such as OpenMVG \cite{OpenMVGPierre}, GLOMAP \cite{glomap} and Theia Vision Library \cite{theia-manual}. Govindu \cite{Govindu01} employed quaternion metric distances between relative and global rotations to solve the global rotation estimation problem, which transforms the original optimization into a linear least squares problem and significantly improves computational efficiency. Later in \cite{Govindu004}, Govindu further proposed an iterative averaging method for rotation estimation on the \( \mathrm{SO}(3) \) manifold.
Martinec \cite{Martinec07} introduced an approximate rotation representation and used Singular Value Decomposition (SVD) to compute chordal distance, while enforcing constraints within the \( \mathrm{SO}(3) \) manifold to maintain the orthogonality of rotation matrices. Hartley \cite{HartleyL1} leveraged the transitivity of relative rotations by the Weiszfeld algorithm to iteratively solve the $L_1$ mean of each camera rotation on the \( \mathrm{SO}(3) \) manifold. Chatterjee and Govindu \cite{Chatterjee13,ChatterjeePAMI2018} proposed global rotation estimation based on the quasi-Newton method and iterative reweighted least squares (IRLS), which significantly enhances robustness to outliers.

However, a key limitation of the methods discussed above lies in a strong dependence on the accuracy of relative pose estimates. Incorporating image measurements generally produces superior accuracy and robustness \cite{ROBA}.
Lee \cite{ROBA} extended observation matrix in the work \cite{Kneip13ICCV} to a multi-view optimization by representing relative rotations as global rotations, achieving the rotation-only estimation, although strictly speaking, it does not fall under the category of minimizing reprojection error. 

Commonly serving as the final stage of 3D estimation, global optimization algorithms, including the class of BA optimization methods \cite{BundleAdjustmentAModernSynthesis,BAinLarge, zhaoParallaxBABundleAdjustment2015} and the recent pose adjustment (PA) \cite{QiCai_TPAMI} optimization method, minimizing the reprojection error, can generally be used for optimal rotation estimation. PA requires the initialization of translation, while BA additionally requires initial 3D coordinates.

\section{Mathematical Notations and Equalities}
We use bold font to denote vectors and matrices. The symbol $\left[ {\boldsymbol{v}} \right]_\times$ represents a skew-symmetric matrix corresponding to the vector $\boldsymbol{v}$ in the three-dimensional real space $\mathbb{R}^3$. The notation $\|\boldsymbol{v}\|$ represents the norm of the vector $\boldsymbol{v}$, and $\vec{\boldsymbol{v}}$ denotes the unit direction of the vector $\boldsymbol{v}$, i.e., $\vec{\boldsymbol{v}} = \frac{\boldsymbol{v}}{\|\boldsymbol{v}\|}$. The notation $\boldsymbol{v}^{\pm}$ is used to represent the vector $\vec{\boldsymbol{v}}$  in both positive and negative directions. The element in the $i$-th row and $j$-th column of the matrix $\boldsymbol{M}$ is denoted by $\boldsymbol{M}_{(i,j)}$, and the $i$-th element of the vector $\boldsymbol{v}$ is denoted by $\boldsymbol{v}_{(i)}$.

The natural basis vectors of three-dimensional space are defined as $\boldsymbol{e}_1 = \left[ {1, 0, 0} \right]^T$, $\boldsymbol{e}_2 = \left[ {0, 1, 0} \right]^T$, and $\boldsymbol{e}_3 = \left[ {0, 0, 1} \right]^T$, whereas the identity matrix in this space is represented by $\boldsymbol{I}$. In addition, let $\boldsymbol{a}$ and $\boldsymbol{b}$ denote two vectors in three-dimensional space, with the angle between them expressed as $\angle \left( \boldsymbol{a}, \boldsymbol{b} \right)$ and the span plane formed by these two vectors denoted as \( \langle \boldsymbol{a}, \boldsymbol{b} \rangle \).

The following mathematical equalities will be utilized throughout the mathematical derivations in this paper:
\begin{equation}
    {\left[ {\boldsymbol{a}} \right]_ \times }{\left[ {\boldsymbol{b}} \right]_ \times } = {\boldsymbol{b}}{{\boldsymbol{a}}^T} - {{\boldsymbol{a}}^T}{\boldsymbol{bI}}.
    \label{eq:fundamental1}
\end{equation}
\begin{equation}
   {\left[ {{{\left[ {\boldsymbol{a}} \right]}_ \times }{\boldsymbol{b}}} \right]_ \times } = {\left[ {\boldsymbol{a}} \right]_ \times }{\left[ {\boldsymbol{b}} \right]_ \times } - {\left[ {\boldsymbol{b}} \right]_ \times }{\left[ {\boldsymbol{a}} \right]_ \times } = {\boldsymbol{b}}{{\boldsymbol{a}}^T} - {\boldsymbol{a}}{{\boldsymbol{b}}^T}.
    \label{eq:fundamental2}
\end{equation}

\section{Two-View Representation on Rotation Manifold}
\label{section:TWOVIEW}

Suppose there are $m$ 3D points and $n$ observing views. Let the set of 3D point coordinates be denoted as: ${P_f} = \left\{ {{\boldsymbol{X}}_k^w = {{\left( {x_k^w,y_k^w,z_k^w} \right)}^T}\left| {k = 1,...,m} \right.} \right\}$, where $\boldsymbol{X}_k^w $ represents the coordinates of the $k$-th point in the world coordinate system. Let the set of camera poses be denoted as ${P_c} = \left\{ {\left[ {{{\boldsymbol{R}}_i}\left| {{{\boldsymbol{t}}_i}} \right.} \right]\left| {i = 1,...,n} \right.} \right\}$, where $\boldsymbol{R}_i$, $\boldsymbol{t}_i$ represent the global rotation and translation of the $i$-th view, respectively. In the noise-free case, the set of observations is denoted as ${P_o} = \left\{ {{{\boldsymbol{X}}_{ik}} = {{\left( {{x_{ik}},{y_{ik}},1} \right)}^T}\left| {i = 1,...,n,k = 1,...,m} \right.} \right\}$, where $\boldsymbol{X}_{ik}$ represents the observation of the $k$-th 3D point by the $i$-th view.

In multi-view geometry, the motion structure of views can be represented by a graph $\mathcal{G} = (\mathcal{V}, \mathcal{E})$. The set of nodes is denoted as $\mathcal{V}=\left\{ {{C_1},{C_2},...,{C_n}} \right\}$, where each node $C_i$ represents the pose and observation information of the $i$-th view. The set of edges is denoted as $\mathcal{E}=\left\{ e_{ij} \mid C_i \text{ and } C_j \text{ have matched observations} \right\}$, as illustrated in Fig.~\ref{fig:connectivity_graph}. In the connectivity graph, each edge inherently represents a two-view constraint.  The relative rotation and relative translation are denoted as $\boldsymbol{R}_{ij} = {{\boldsymbol{R}}_j}{\boldsymbol{R}}_i^T$ and ${{\boldsymbol{t}}_{ij}} = {{\boldsymbol{R}}_j}\left( {{{\boldsymbol{t}}_i} - {{\boldsymbol{t}}_j}} \right)$, respectively. The section will introduce the key property of pose-only imaging geometry \cite{QiCai_IJCV}---the pair of pose-only (PPO) constraint---and then formulate the visual geometric constraints on the rotation manifold, serving as the theoretical foundation for subsequent optimization.

\begin{figure}[!t] 
    \centering 
    \includegraphics[width=0.48\textwidth]{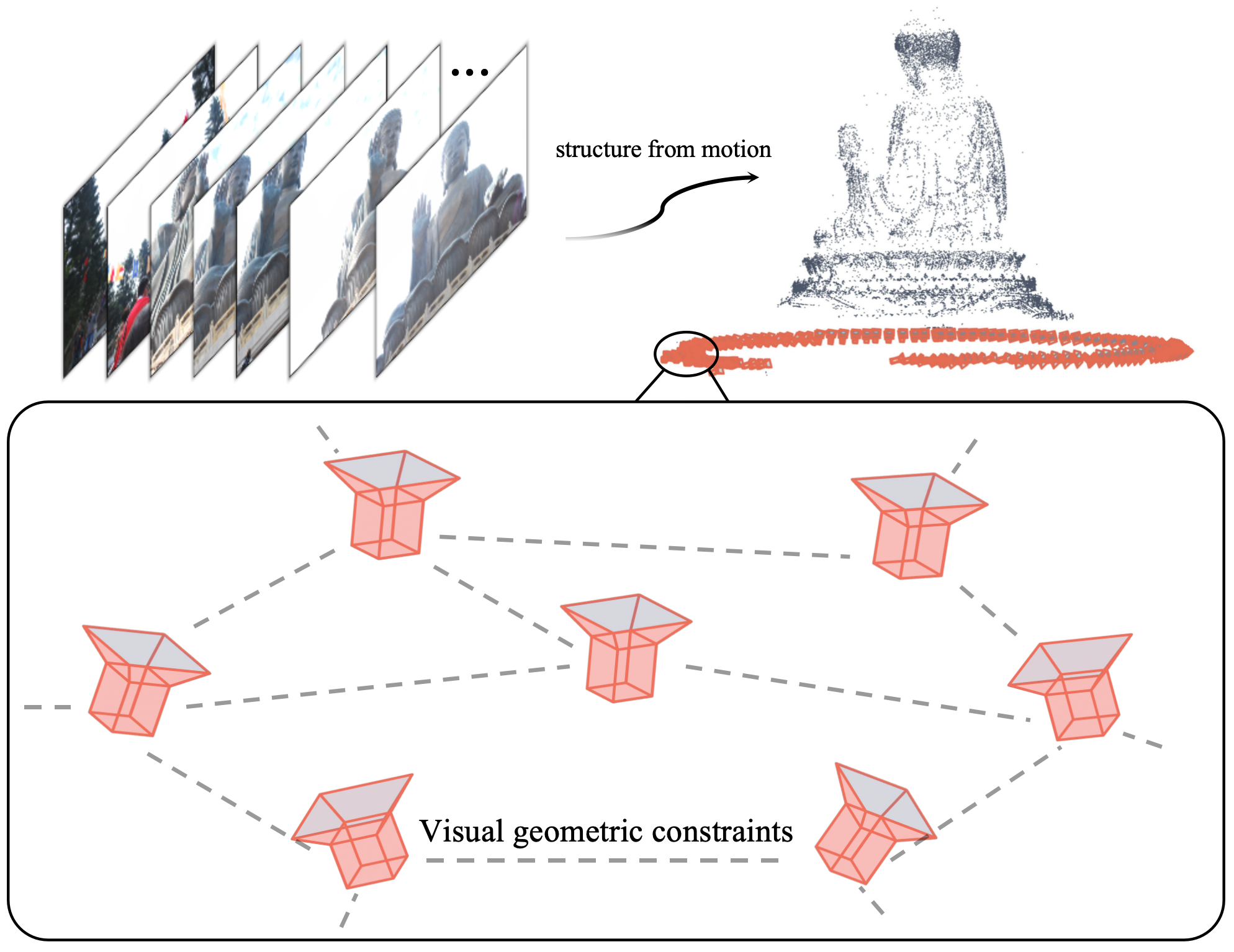} 
    \caption{An illustration of reconstruction using Lund dataset \cite{lunddataset}. Views form a connectivity graph $\mathcal{G}$, where nodes of graph include observation and pose information of views, while edges connecting nodes indicate presence of matched observations and visual geometric constraints.
} 
    \label{fig:connectivity_graph} 
\end{figure}

\subsection{Exploration of PPO Constraints}
Without loss of generality, assume that the left view is the \(i\)-th view and the right view is the \(j\)-th view. Suppose that there are \(m_{ij}\) 3D points observed by both views \(i\) and \(j\). The coordinates of the 3D point \(k\) in views \(i\) and \(j\) are denoted as ${\boldsymbol{X}}_{ik}^C = {\left( {x_{ik}^C,y_{ik}^C,z_{ik}^C} \right)^T}$ and ${\boldsymbol{X}}_{jk}^C = {\left( {x_{jk}^C,y_{jk}^C,z_{jk}^C} \right)^T}$, respectively. The following coordinate system mapping relationship holds:
\begin{equation}
    {{\boldsymbol{X}}_{jk}^C = {{\boldsymbol{R}}_{ij}}{\boldsymbol{X}}_{ik}^C + {{\boldsymbol{t}}_{ij}}}.
    \label{eq:RigidTransformation}
\end{equation}

By the ideal perspective camera model, we obtain the two-view imaging equation as follows:
\begin{equation}
    z_{jk}^C{{\boldsymbol{X}}_{jk}} = z_{ik}^C{{\boldsymbol{R}}_{ij}}{{\boldsymbol{X}}_{ik}} + {{\boldsymbol{t}}_{ij}},
    \label{eq:two-view_imaging_equation}
\end{equation}
where $z_{ik}^C$ and $z_{jk}^C$ represent depths of the 3D point \( k \) in views \( i \) and \( j \), respectively. The chirality constraint must be satisfied \cite{MultipleViewGeometry}, i.e., \( z_{ik}^C,z_{jk}^C \in {\mathbb{R}^ + } \). That is to say, 3D points in the real world must be located in front of cameras to be observable. Left-multiplying both sides of (\ref{eq:two-view_imaging_equation}) by \( \left[ {\boldsymbol{X}_{jk}} \right]_{\times} \) yields:
\begin{equation}
    z_{ik}^C{{\boldsymbol{\theta }}_{ijk}} = {{\boldsymbol{\beta }}_{ijk}}\|{{\boldsymbol{t}}_{ij}}\|.
    \label{eq:crosspruduct1}
\end{equation}
Here, 
\begin{equation}
    {{\boldsymbol{\theta }}_{ijk}} = {\left[ {{{\boldsymbol{R}}_{ij}}{{\boldsymbol{X}}_{ik}}} \right]_ \times }{{\boldsymbol{X}}_{jk}},
    \label{eq:theta}
\end{equation}
and ${{\boldsymbol{\beta }}_{ijk}} = {\left[ {{{\boldsymbol{X}}_{jk}}} \right]_ \times }{\overrightarrow {\boldsymbol{t}} _{ij}}$. This indicates that the direction of the vector ${{\boldsymbol{\theta }}_{ijk}}$ must be the same as that of ${{\boldsymbol{\beta }}_{ijk}}$, which is a necessary and sufficient condition for $z_{ik}^C$ to satisfy the chirality constraint \cite{QiCai_TPAMI}. Similarly, left-multiplying both sides of equation (\ref{eq:two-view_imaging_equation}) by $ {\left[ {{{\boldsymbol{R}}_{ij}}{{\boldsymbol{X}}_{ik}}} \right]_ \times }$yields:
\begin{equation}
    z_{jk}^C{{\boldsymbol{\theta }}_{ijk}} = {{\boldsymbol{\alpha }}_{ijk}}\|{{\boldsymbol{t}}_{ij}}\|.
    \label{eq:crosspruduct2}
\end{equation}
where ${{\boldsymbol{\alpha }}_{ijk}} = {\left[ {{{\boldsymbol{R}}_{ij}}{{\boldsymbol{X}}_{ik}}} \right]_ \times }{\overrightarrow {\boldsymbol{t}} _{ij}}$.
It follows that the vectors \( {\boldsymbol{\theta }}_{ijk} \) and \( {{\boldsymbol{\alpha }}_{ijk}} \) must be in the same direction, which is a necessary and sufficient condition for $z_{jk}^C$ to satisfy the chirality constraint. Since \( z_{ik}^C,z_{jk}^C \in {\mathbb{R}^ + } \), we can take the magnitude of both sides of (\ref{eq:crosspruduct1}) and (\ref{eq:crosspruduct2}) to obtain:
\begin{equation}
    z_{ik}^C\left\|{{\boldsymbol{\theta }}_{ijk}}\right\| = 
    \|{{\boldsymbol{\beta }}_{ijk}}\|
    \|{{\boldsymbol{t}}_{ij}}\|.
    \label{eq:modulus1}
\end{equation}
\begin{equation}
    z_{jk}^C\left\| {{{\boldsymbol{\theta }}_{ijk}}} \right\| = 
    \|{{\boldsymbol{\alpha }}_{ijk}}\|
    \|{{\boldsymbol{t}}_{ij}}\|.
    \label{eq:modulus2}
\end{equation}

Multiplying both sides of (\ref{eq:two-view_imaging_equation}) by $\left\| {{{\boldsymbol{\theta }}_{ijk}}} \right\|$ and using (\ref{eq:modulus1}) and (\ref{eq:modulus2}), we obtain:
\begin{equation}
    \|{{\boldsymbol{\alpha }}_{ijk}}\|
    \|{{\boldsymbol{t}}_{ij}}\|{{\boldsymbol{X}}_{jk}} = \|{{\boldsymbol{\beta }}_{ijk}}\|
    \|{{\boldsymbol{t}}_{ij}}\|{{\boldsymbol{R}}_{ij}}{{\boldsymbol{X}}_{ik}} + \left\| {{{\boldsymbol{\theta }}_{ijk}}} \right\|{{\boldsymbol{t}}_{ij}}.
    \label{eq:PPO1}
\end{equation}
or
\begin{equation}
   \left\| {{{\boldsymbol{t}}_{ij}}} \right\|\left( {\left\| {{{\boldsymbol{\alpha }}_{ijk}}} \right\|{{\boldsymbol{X}}_{jk}} - \left\| {{{\boldsymbol{\beta }}_{ijk}}} \right\|{{\boldsymbol{R}}_{ij}}{{\boldsymbol{X}}_{ik}} - \left\| {{{\boldsymbol{\theta }}_{ijk}}} \right\|{{\overrightarrow {\boldsymbol{t}} }_{ij}}} \right) = {\boldsymbol{0}},
    \label{eq:PPO2}
\end{equation}
where ${{\boldsymbol{\alpha }}_{ijk}} = {\left[ {{{\boldsymbol{R}}_{ij}}{{\boldsymbol{X}}_{ik}}} \right]_ \times }{\overrightarrow {\boldsymbol{t}} _{ij}}$ and ${{\boldsymbol{\beta }}_{ijk}} = {\left[ {{{\boldsymbol{X}}_{jk}}} \right]_ \times }{\overrightarrow {\boldsymbol{t}} _{ij}}$. When the corresponding camera undergoes non-pure rotation motion, namely $\left\| {\boldsymbol{t}} \right\| \ne 0$, the above constraint can be derived from (\ref{eq:PPO2})
\begin{equation}
   \left\| {{{\boldsymbol{\alpha }}_{ijk}}} \right\|{{\boldsymbol{X}}_{jk}} = \left\| {{{\boldsymbol{\beta }}_{ijk}}} \right\|{{\boldsymbol{R}}_{ij}}{{\boldsymbol{X}}_{ik}} + \left\| {{{\boldsymbol{\theta }}_{ijk}}} \right\|{\overrightarrow {\boldsymbol{t}} _{ij}}.
    \label{eq:PPO3}
\end{equation}

When the corresponding camera undergoes pure rotation motion, $\left\| {{{\boldsymbol{t}}_{ij}}} \right\| = 0$. From (\ref{eq:modulus1}), we know that $\left\| {{{\boldsymbol{\theta }}_{ijk}}} \right\| = 0$. From (\ref{eq:two-view_imaging_equation}), it follows that the vector ${{\boldsymbol{X}}_{jk}}$ is in the same direction as ${{\boldsymbol{R}}_{ij}}{{\boldsymbol{X}}_{ik}}$, so is the vector $\left\| {{{\boldsymbol{\alpha }}_{ijk}}} \right\|{{\boldsymbol{X}}_{jk}}$ with $\left\| {{{\boldsymbol{\beta }}_{ijk}}} \right\|{{\boldsymbol{R}}_{ij}}{{\boldsymbol{X}}_{ik}} $. Therefore, we have $\sin \angle \left( {{{\boldsymbol{R}}_{ij}}{{\boldsymbol{X}}_{ik}},\overrightarrow {\boldsymbol{t_{ij}}} } \right) = \sin \angle \left( {{{\boldsymbol{X}}_{jk}},{{\overrightarrow {\boldsymbol{t}} }_{ij}}} \right)$. We can further analyze the magnitude of the elements in (\ref{eq:PPO3}):
\begin{equation}
  \left\| {{{\boldsymbol{\alpha }}_{ijk}}} \right\|\left\| {{{\boldsymbol{X}}_{jk}}} \right\| = \left\| {{{\boldsymbol{X}}_{jk}}} \right\|\left\| {{{\boldsymbol{X}}_{ik}}} \right\|\sin \angle \left( {{{\boldsymbol{R}}_{ij}}{{\boldsymbol{X}}_{ik}},{{\overrightarrow {\boldsymbol{t}} }_{ij}}} \right).
    \label{eq:PPOconfirm1}
\end{equation}
\begin{equation}
  \left\| {{{\boldsymbol{\beta }}_{ijk}}} \right\|  \left\| {{\boldsymbol{R}}_{ij}}{{\boldsymbol{X}}_{ik}} \right\|    = \left\| {{{\boldsymbol{X}}_{jk}}} \right\|\left\| {{{\boldsymbol{X}}_{ik}}} \right\|\sin \angle \left( {{{\boldsymbol{X}}_{jk}},{{\overrightarrow {\boldsymbol{t}} }_{ij}}} \right).
    \label{eq:PPOconfirm2}
\end{equation}

Therefore, $\left\| {{{\boldsymbol{\alpha }}_{ijk}}} \right\|\left\| {{{\boldsymbol{X}}_{jk}}} \right\| = \left\| {{{\boldsymbol{\beta }}_{ijk}}} \right\|\left\| {{{\boldsymbol{R}}_{ij}}{{\boldsymbol{X}}_{ik}}} \right\|$. From the above, it can be concluded that when the corresponding cameras' relative motion is pure rotation, we have
\begin{equation}
  \left\| {{{\boldsymbol{\alpha }}_{ijk}}} \right\|{{\boldsymbol{X}}_{jk}} = \left\| {{{\boldsymbol{\beta }}_{ijk}}} \right\|{{\boldsymbol{R}}_{ij}}{{\boldsymbol{X}}_{ik}}.
    \label{eq:PPO_PureRotation}
\end{equation}

Thus, the constraint (\ref{eq:PPO3}) holds regardless of whether the corresponding cameras' motion is pure rotation. It was first raised in \cite{QiCai_IJCV} and referred to as the PPO constraint.

We will next explore how camera translation can be represented on the rotation manifold.

Similar to the derivation process in \cite{cai2024linearrelativeposeestimation}, taking the \( j \)-th view as an example, left-multiplying both sides of the PPO constraint (\ref{eq:PPO3})  by $\left\| {{{\boldsymbol{\theta }}_{ijk}}} \right\|{\left[ {{{\boldsymbol{X}}_{jk}}} \right]_ \times }$, the linear relative translation (LiRT) constraint is constructed as follows:
\begin{equation}
 \left\| {{{\boldsymbol{\theta }}_{ijk}}} \right\|\left\| {{{\boldsymbol{\beta }}_{ijk}}} \right\|{{\boldsymbol{\theta }}_{ijk}} - {\left\| {{{\boldsymbol{\theta }}_{ijk}}} \right\|^2}{{\boldsymbol{\beta }}_{ijk}} = 0.
    \label{eq:TwoViewLiGT}
\end{equation}

It indicates that the LiRT constraint only ensures that the direction of $\boldsymbol{\theta }_{ijk}$ is the same as that of $\boldsymbol{\beta }_{ijk}$. According to the necessary and sufficient condition for $z_{ik}^C$ to satisfy the chirality constraint, the LiRT constraint only imposes a chirality constraint on view \( i \) but does not impose the chirality constraint for the other view. Multiplying both sides of (\ref{eq:PPO3}) by $\left\| {{{\boldsymbol{\theta }}_{ijk}}} \right\|$, a linear translation constraint that satisfies the chirality constraint in both views \cite{QiCai_TPAMI} can be constructed:
\begin{equation}
 \left\| {{{\boldsymbol{\theta }}_{ijk}}} \right\|\left\| {{{\boldsymbol{\alpha }}_{ijk}}} \right\|{{\boldsymbol{X}}_{jk}} - \left\| {{{\boldsymbol{\theta }}_{ijk}}} \right\|\left\| {{{\boldsymbol{\beta }}_{ijk}}} \right\|{{\boldsymbol{R}}_{ij}}{{\boldsymbol{X}}_{ik}} - {\left\| {{{\boldsymbol{\theta }}_{ijk}}} \right\|^2}{\overrightarrow {\boldsymbol{t}} _{ij}} = 0.
    \label{eq:TwoViewPPO}
\end{equation}
Since the directions of  \( {\boldsymbol{\theta }}_{ijk} \), ${{{\boldsymbol{\alpha }}_{ijk}}}$, and \( {{\boldsymbol{\beta }}_{ijk}} \) are the same, the following property holds:
\begin{equation}
    \left\| {{{\boldsymbol{\theta }}_{ijk}}} \right\|\left\| {{{\boldsymbol{\alpha }}_{ijk}}} \right\| = {\boldsymbol{\theta }}_{ijk}^T{\left[ {{{\boldsymbol{R}}_{ij}}{{\boldsymbol{X}}_{ik}}} \right]_ \times }{\overrightarrow {\boldsymbol{t}} _{ij}}.
    \label{eq:modulus3}
\end{equation}
\begin{equation}
    \left\| {{{\boldsymbol{\theta }}_{ijk}}} \right\|\left\| {{{\boldsymbol{\beta }}_{ijk}}} \right\| = {\boldsymbol{\theta }}_{ijk}^T{\left[ {{{\boldsymbol{X}}_{jk}}} \right]_ \times }{\overrightarrow {\boldsymbol{t}} _{ij}}.
    \label{eq:modulus4}
\end{equation}

Based on properties (\ref{eq:modulus3}) and (\ref{eq:modulus4}), the above relationship can be expressed as follows:
\begin{equation}
{{\boldsymbol{P}}_{ijk}}{\overrightarrow {\boldsymbol{t}} _{ij}} = 0,
    \label{eq:TwoViewPPOLinear}
\end{equation}
where
\begin{equation}
\scalebox{0.92}{${{\boldsymbol{P}}_{ijk}} =  - {{\boldsymbol{X}}_{jk}}{\boldsymbol{\theta }}_{ijk}^T{\left[ {{{\boldsymbol{R}}_{ij}}{{\boldsymbol{X}}_{ik}}} \right]_ \times } + {{\boldsymbol{R}}_{ij}}{{\boldsymbol{X}}_{ik}}{\boldsymbol{\theta }}_{ijk}^T{\left[ {\boldsymbol{X}}_{jk} \right]_ \times } + {\left\| {{{\boldsymbol{\theta }}_{ijk}}} \right\|^2}{\boldsymbol{I}}.$
}
\label{eq:TwoViewPPOLinear2}
\end{equation}

\newtheorem{theorem}{Theorem}
\begin{theorem}
${{\boldsymbol{P}}_{ijk}} = {{\boldsymbol{\theta }}_{ijk}}{\boldsymbol{\theta }}_{ijk}^T$.
\label{theorem:theorem1}
\end{theorem}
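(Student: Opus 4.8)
The plan is to reduce the whole statement to elementary skew-symmetric algebra driven entirely by the two given identities (\ref{eq:fundamental1}) and (\ref{eq:fundamental2}). First I would abbreviate $\boldsymbol{a} = \boldsymbol{R}_{ij}\boldsymbol{X}_{ik}$ and $\boldsymbol{b} = \boldsymbol{X}_{jk}$, so that $\boldsymbol{\theta}_{ijk} = [\boldsymbol{a}]_\times \boldsymbol{b}$ and the three pieces of $\boldsymbol{P}_{ijk}$ in (\ref{eq:TwoViewPPOLinear2}) read $-\boldsymbol{b}\,\boldsymbol{\theta}_{ijk}^T[\boldsymbol{a}]_\times$, $\boldsymbol{a}\,\boldsymbol{\theta}_{ijk}^T[\boldsymbol{b}]_\times$ and $\|\boldsymbol{\theta}_{ijk}\|^2\boldsymbol{I}$. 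The crucial observation is that $\boldsymbol{\theta}_{ijk}^T = \boldsymbol{b}^T[\boldsymbol{a}]_\times^T = -\boldsymbol{b}^T[\boldsymbol{a}]_\times$, which turns the two awkward row vectors into $\boldsymbol{\theta}_{ijk}^T[\boldsymbol{a}]_\times = -\boldsymbol{b}^T[\boldsymbol{a}]_\times[\boldsymbol{a}]_\times$ and $\boldsymbol{\theta}_{ijk}^T[\boldsymbol{b}]_\times = -\boldsymbol{b}^T[\boldsymbol{a}]_\times[\boldsymbol{b}]_\times$, each of which collapses to a combination of $\boldsymbol{a}^T$ and $\boldsymbol{b}^T$ by the product identity (\ref{eq:fundamental1}).

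Concretely, (\ref{eq:fundamental1}) gives $[\boldsymbol{a}]_\times^2 = \boldsymbol{a}\boldsymbol{a}^T - \|\boldsymbol{a}\|^2\boldsymbol{I}$ and $[\boldsymbol{a}]_\times[\boldsymbol{b}]_\times = \boldsymbol{b}\boldsymbol{a}^T - (\boldsymbol{a}^T\boldsymbol{b})\boldsymbol{I}$, whence $\boldsymbol{\theta}_{ijk}^T[\boldsymbol{a}]_\times = -(\boldsymbol{a}^T\boldsymbol{b})\boldsymbol{a}^T + \|\boldsymbol{a}\|^2\boldsymbol{b}^T$ and $\boldsymbol{\theta}_{ijk}^T[\boldsymbol{b}]_\times = -\|\boldsymbol{b}\|^2\boldsymbol{a}^T + (\boldsymbol{a}^T\boldsymbol{b})\boldsymbol{b}^T$. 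Left-multiplying the first by $-\boldsymbol{b}$ and the second by $\boldsymbol{a}$ and reinstating the scalar term, I expect the first two pieces to contribute the rank-one outer products and $\boldsymbol{P}_{ijk}$ to collect into $(\boldsymbol{a}^T\boldsymbol{b})(\boldsymbol{b}\boldsymbol{a}^T + \boldsymbol{a}\boldsymbol{b}^T) - \|\boldsymbol{a}\|^2\boldsymbol{b}\boldsymbol{b}^T - \|\boldsymbol{b}\|^2\boldsymbol{a}\boldsymbol{a}^T + \|\boldsymbol{\theta}_{ijk}\|^2\boldsymbol{I}$.

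For the right-hand side I would expand $\boldsymbol{\theta}_{ijk}\boldsymbol{\theta}_{ijk}^T$ independently. Taking $\boldsymbol{v}=\boldsymbol{\theta}_{ijk}$ in the specialization $\boldsymbol{v}\boldsymbol{v}^T = [\boldsymbol{v}]_\times^2 + \|\boldsymbol{v}\|^2\boldsymbol{I}$ of (\ref{eq:fundamental1}), together with identity (\ref{eq:fundamental2}) in the form $[\boldsymbol{\theta}_{ijk}]_\times = [[\boldsymbol{a}]_\times\boldsymbol{b}]_\times = \boldsymbol{b}\boldsymbol{a}^T - \boldsymbol{a}\boldsymbol{b}^T$, I would square the latter to get $[\boldsymbol{\theta}_{ijk}]_\times^2 = (\boldsymbol{a}^T\boldsymbol{b})(\boldsymbol{b}\boldsymbol{a}^T + \boldsymbol{a}\boldsymbol{b}^T) - \|\boldsymbol{a}\|^2\boldsymbol{b}\boldsymbol{b}^T - \|\boldsymbol{b}\|^2\boldsymbol{a}\boldsymbol{a}^T$, so that $\boldsymbol{\theta}_{ijk}\boldsymbol{\theta}_{ijk}^T$ equals exactly this plus $\|\boldsymbol{\theta}_{ijk}\|^2\boldsymbol{I}$, matching the collected form above term by term. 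A pleasant feature is that the isotropic part $\|\boldsymbol{\theta}_{ijk}\|^2\boldsymbol{I}$ is common to both sides and carries through unchanged, so no separate norm (Lagrange) identity is even required. The computation is entirely mechanical, hence the only real obstacle is disciplined sign and bookkeeping of the rank-one outer products through the two substitutions; once the rewriting $\boldsymbol{\theta}_{ijk}^T[\,\cdot\,]_\times = -\boldsymbol{b}^T[\boldsymbol{a}]_\times[\,\cdot\,]_\times$ is in place, there is no conceptual difficulty.
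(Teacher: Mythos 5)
Your proposal is correct: with $\boldsymbol{a}=\boldsymbol{R}_{ij}\boldsymbol{X}_{ik}$, $\boldsymbol{b}=\boldsymbol{X}_{jk}$, both of your expansions check out, and the two sides do meet at the common normal form $\left(\boldsymbol{a}^T\boldsymbol{b}\right)\left(\boldsymbol{b}\boldsymbol{a}^T+\boldsymbol{a}\boldsymbol{b}^T\right)-\|\boldsymbol{a}\|^2\boldsymbol{b}\boldsymbol{b}^T-\|\boldsymbol{b}\|^2\boldsymbol{a}\boldsymbol{a}^T+\|\boldsymbol{\theta}_{ijk}\|^2\boldsymbol{I}$. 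The paper shares your skeleton---split off the isotropic term as in (\ref{eq:ActionP}), work on $\boldsymbol{N}$, and close with $\boldsymbol{\theta}_{ijk}\boldsymbol{\theta}_{ijk}^T=[\boldsymbol{\theta}_{ijk}]_\times^2+\|\boldsymbol{\theta}_{ijk}\|^2\boldsymbol{I}$ from (\ref{eq:fundamental1})---but executes the central step quite differently: it proves the operator identity $\boldsymbol{N}=[\boldsymbol{\theta}_{ijk}]_\times[\boldsymbol{\theta}_{ijk}]_\times$ by \emph{factoring} $[\boldsymbol{\theta}_{ijk}]_\times$ out of $\boldsymbol{N}$, which requires introducing four auxiliary matrices $\boldsymbol{F},\boldsymbol{G},\boldsymbol{U},\boldsymbol{V}$ in (\ref{eq:dividee4}), the non-obvious regrouping of $(\boldsymbol{F}-\boldsymbol{G})-(\boldsymbol{U}-\boldsymbol{V})$ into $(\boldsymbol{F}-\boldsymbol{U})-(\boldsymbol{G}-\boldsymbol{V})$, and three applications of (\ref{eq:fundamental2}) (once inside each of (\ref{eq:dividee5}) and (\ref{eq:dividee6}), and once more in (\ref{eq:devidee7}) to collapse $\boldsymbol{b}\boldsymbol{a}^T-\boldsymbol{a}\boldsymbol{b}^T$ back to $[\boldsymbol{\theta}_{ijk}]_\times$). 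You instead expand \emph{both} $\boldsymbol{N}$ and $[\boldsymbol{\theta}_{ijk}]_\times^2$ into the outer-product basis $\left\{\boldsymbol{a}\boldsymbol{a}^T,\boldsymbol{b}\boldsymbol{b}^T,\boldsymbol{a}\boldsymbol{b}^T,\boldsymbol{b}\boldsymbol{a}^T\right\}$ and match coefficients, invoking (\ref{eq:fundamental2}) only once (to write $[\boldsymbol{\theta}_{ijk}]_\times=\boldsymbol{b}\boldsymbol{a}^T-\boldsymbol{a}\boldsymbol{b}^T$) and (\ref{eq:fundamental1}) everywhere else; your observation that $\|\boldsymbol{\theta}_{ijk}\|^2\boldsymbol{I}$ is common to both sides, so no Lagrange norm identity is needed, is also right. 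Your meet-in-the-middle version is more mechanical and sidesteps the regrouping, which is the one delicate move in the paper's proof and the likeliest place for sign errors; the paper's factorization, in exchange, exhibits $\boldsymbol{N}=[\boldsymbol{\theta}_{ijk}]_\times^2$ as a structural fact in its own right (the projector-like decomposition of $\boldsymbol{P}_{ijk}$) rather than as a coincidence of four matched coefficients. Either route is fully rigorous; the only place your write-up demands care is the single sign in $\boldsymbol{\theta}_{ijk}^T=-\boldsymbol{b}^T[\boldsymbol{a}]_\times$, and you have it right.
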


\begin{proof}
We will express (\ref{eq:TwoViewPPOLinear2}) in the following form:
\begin{equation}
{{\boldsymbol{P}}_{ijk}} = {\boldsymbol{N}} + {\left\| {{{\boldsymbol{\theta }}_{ijk}}} \right\|^2}{\boldsymbol{I}},
    \label{eq:ActionP}
\end{equation}
where ${\boldsymbol{N}} =  - {{\boldsymbol{X}}_{jk}}{\boldsymbol{\theta }}_{ijk}^T{\left[ {{{\boldsymbol{R}}_{ij}}{{\boldsymbol{X}}_{ik}}} \right]_ \times } + {{\boldsymbol{R}}_{ij}}{{\boldsymbol{X}}_{ik}}{\boldsymbol{\theta }}_{ijk}^T{\left[ {\boldsymbol{X}}_{jk} \right]_ \times }$. Substituting ${{\boldsymbol{\theta }}_{ijk}} = {\left[ {{{\boldsymbol{R}}_{ij}}{{\boldsymbol{X}}_{ik}}} \right]_ \times }{{\boldsymbol{X}}_{jk}}$ into $\boldsymbol{N}$, we get
\begin{equation}
\scalebox{0.74}{$\begin{aligned}
{\boldsymbol{N}} = & - {{\boldsymbol{X}}_{jk}}{\left( \left[ {\boldsymbol{R}}_{ij}{\boldsymbol{X}}_{ik} \right]_\times {\boldsymbol{X}}_{jk} \right)^T}{\left[ {\boldsymbol{R}}_{ij}{\boldsymbol{X}}_{ik} \right]_\times}  + {\boldsymbol{R}}_{ij}{\boldsymbol{X}}_{ik}{\left( \left[ {\boldsymbol{R}}_{ij}{\boldsymbol{X}}_{ik} \right]_\times {\boldsymbol{X}}_{jk} \right)^T}{\left[ {\boldsymbol{X}}_{jk} \right]_\times} \\
= & {\boldsymbol{X}}_{jk}{\boldsymbol{X}}_{jk}^T{\left[ {\boldsymbol{R}}_{ij}{\boldsymbol{X}}_{ik} \right]_\times}{\left[ {\boldsymbol{R}}_{ij}{\boldsymbol{X}}_{ik} \right]_\times}  - {\boldsymbol{R}}_{ij}{\boldsymbol{X}}_{ik}{\boldsymbol{X}}_{jk}^T{\left[ {\boldsymbol{R}}_{ij}{\boldsymbol{X}}_{ik} \right]_\times}{\left[ {\boldsymbol{X}}_{jk} \right]_\times}.
\end{aligned}$
}
\label{eq:divideE1}
\end{equation}

According to the equality (\ref{eq:fundamental1}), we further express (\ref{eq:divideE1}) as:
\begin{equation}
{\boldsymbol{N}} = \left( {{\boldsymbol{F}} - {\boldsymbol{G}}} \right) - \left( {{\boldsymbol{U}} - {\boldsymbol{V}}} \right),
\label{eq:devidee2}
\end{equation}

\noindent
where, 
\begin{equation}
\begin{array}{c}
{{\boldsymbol{X}}_{jk}}{\boldsymbol{X}}_{jk}^T{\left[ {{{\boldsymbol{R}}_{ij}}{{\boldsymbol{X}}_{ik}}} \right]_ \times }{\left[ {{{\boldsymbol{R}}_{ij}}{{\boldsymbol{X}}_{ik}}} \right]_ \times } = {\boldsymbol{F}} - {\boldsymbol{G}},\\
{{\boldsymbol{R}}_{ij}}{{\boldsymbol{X}}_{ik}}{\boldsymbol{X}}_{jk}^T{\left[ {{{\boldsymbol{R}}_{ij}}{{\boldsymbol{X}}_{ik}}} \right]_ \times }{\left[ {{{\boldsymbol{X}}_{jk}}} \right]_ \times } = {\boldsymbol{U}} - {\boldsymbol{V}}.
\end{array}
\label{eq:dividee3}
\end{equation}
\begin{equation}
\scalebox{0.84}{$\begin{array}{c}
{\boldsymbol{F}} = {{\boldsymbol{X}}_{jk}}{\boldsymbol{X}}_{jk}^T{{\boldsymbol{R}}_{ij}}{{\boldsymbol{X}}_{ik}}{\left( {{{\boldsymbol{R}}_{ij}}{{\boldsymbol{X}}_{ik}}} \right)^T},{\boldsymbol{G}} = {{\boldsymbol{X}}_{jk}}{\boldsymbol{X}}_{jk}^T{\left\| {{{\boldsymbol{R}}_{ij}}{{\boldsymbol{X}}_{ik}}} \right\|^2}{\boldsymbol{I}},\\
{\boldsymbol{U}} = {{\boldsymbol{R}}_{ij}}{{\boldsymbol{X}}_{ik}}{\boldsymbol{X}}_{jk}^T{{\boldsymbol{X}}_{jk}}{\left( {{{\boldsymbol{R}}_{ij}}{{\boldsymbol{X}}_{ik}}} \right)^T},{\boldsymbol{V}} = {{\boldsymbol{R}}_{ij}}{{\boldsymbol{X}}_{ik}}{\boldsymbol{X}}_{jk}^T{\left( {{{\boldsymbol{R}}_{ij}}{{\boldsymbol{X}}_{ik}}} \right)^T}{{\boldsymbol{X}}_{jk}}{\boldsymbol{I}}.
\end{array}$
}
\label{eq:dividee4}
\end{equation}

\noindent
Substituting (\ref{eq:dividee4}) into (\ref{eq:devidee2}) and changing the order of operations, (\ref{eq:devidee2}) can be rewritten as: ${\boldsymbol{N}} = \left( {{\boldsymbol{F}} - {\boldsymbol{U}}} \right) - \left( {{\boldsymbol{G}} - {\boldsymbol{V}}} \right)$. According to the equality (\ref{eq:fundamental2}),  ${{\boldsymbol{F}} - {\boldsymbol{U}}}$ and ${{\boldsymbol{G}} - {\boldsymbol{V}}}$ can be simplified as follows:
\begin{equation}
\scalebox{0.87}{$\begin{aligned}
{\boldsymbol{F}} - {\boldsymbol{U}} = &{{\boldsymbol{X}}_{jk}}{\boldsymbol{X}}_{jk}^T{{\boldsymbol{R}}_{ij}}{{\boldsymbol{X}}_{ik}}{\left( {{{\boldsymbol{R}}_{ij}}{{\boldsymbol{X}}_{ik}}} \right)^T} - {{\boldsymbol{R}}_{ij}}{{\boldsymbol{X}}_{ik}}{\boldsymbol{X}}_{jk}^T{{\boldsymbol{X}}_{jk}}{\left( {{{\boldsymbol{R}}_{ij}}{{\boldsymbol{X}}_{ik}}} \right)^T}\\
 = & {\boldsymbol{X}}_{jk}^T{{\boldsymbol{R}}_{ij}}{{\boldsymbol{X}}_{ik}}{{\boldsymbol{X}}_{jk}}{\left( {{{\boldsymbol{R}}_{ij}}{{\boldsymbol{X}}_{ik}}} \right)^T} - {{\boldsymbol{R}}_{ij}}{{\boldsymbol{X}}_{ik}}{\boldsymbol{X}}_{jk}^T{{\boldsymbol{X}}_{jk}}{\left( {{{\boldsymbol{R}}_{ij}}{{\boldsymbol{X}}_{ik}}} \right)^T}\\
 = & \left( {{\boldsymbol{X}}_{jk}^T{{\boldsymbol{R}}_{ij}}{{\boldsymbol{X}}_{ik}}{\boldsymbol{I}} - {{\boldsymbol{R}}_{ij}}{{\boldsymbol{X}}_{ik}}{\boldsymbol{X}}_{jk}^T} \right){{\boldsymbol{X}}_{jk}}{\left( {{{\boldsymbol{R}}_{ij}}{{\boldsymbol{X}}_{ik}}} \right)^T}\\
 = &  - {\left[ {{{\boldsymbol{X}}_{jk}}} \right]_ \times }{\left[ {{{\boldsymbol{R}}_{ij}}{{\boldsymbol{X}}_{ik}}} \right]_ \times }{{\boldsymbol{X}}_{jk}}{\left( {{{\boldsymbol{R}}_{ij}}{{\boldsymbol{X}}_{ik}}} \right)^T}\\
 = & {\left[ {{{\boldsymbol{\theta }}_{ijk}}} \right]_ \times }{{\boldsymbol{X}}_{jk}}{\left( {{{\boldsymbol{R}}_{ij}}{{\boldsymbol{X}}_{ik}}} \right)^T} - {\left[ {{{\boldsymbol{R}}_{ij}}{{\boldsymbol{X}}_{ik}}} \right]_ \times }{\left[ {{{\boldsymbol{X}}_{jk}}} \right]_ \times }{{\boldsymbol{X}}_{jk}}{\left( {{{\boldsymbol{R}}_{ij}}{{\boldsymbol{X}}_{ik}}} \right)^T}\\
 = & {\left[ {{{\boldsymbol{\theta }}_{ijk}}} \right]_ \times }{{\boldsymbol{X}}_{jk}}{\left( {{{\boldsymbol{R}}_{ij}}{{\boldsymbol{X}}_{ik}}} \right)^T},
\end{aligned}$
}
\label{eq:dividee5}
\end{equation}
\begin{equation}
\scalebox{0.87}{$\begin{aligned}
{\boldsymbol{G}} - {\boldsymbol{V}} =& {{\boldsymbol{X}}_{jk}}{\boldsymbol{X}}_{jk}^T{\left\| {{{\boldsymbol{R}}_{ij}}{{\boldsymbol{X}}_{ik}}} \right\|^2}{\boldsymbol{I}} - {{\boldsymbol{R}}_{ij}}{{\boldsymbol{X}}_{ik}}{\boldsymbol{X}}_{jk}^T{\left( {{{\boldsymbol{R}}_{ij}}{{\boldsymbol{X}}_{ik}}} \right)^T}{{\boldsymbol{X}}_{jk}}{\boldsymbol{I}}\\
 =& {\left( {{{\boldsymbol{R}}_{ij}}{{\boldsymbol{X}}_{ik}}} \right)^T}{{\boldsymbol{R}}_{ij}}{{\boldsymbol{X}}_{ik}}{{\boldsymbol{X}}_{jk}}{\boldsymbol{X}}_{jk}^T - {\left( {{{\boldsymbol{R}}_{ij}}{{\boldsymbol{X}}_{ik}}} \right)^T}{{\boldsymbol{X}}_{jk}}{{\boldsymbol{R}}_{ij}}{{\boldsymbol{X}}_{ik}}{\boldsymbol{X}}_{jk}^T\\
 =& \left( {{{\left( {{{\boldsymbol{R}}_{ij}}{{\boldsymbol{X}}_{ik}}} \right)}^T}{{\boldsymbol{R}}_{ij}}{{\boldsymbol{X}}_{ik}}{{\boldsymbol{X}}_{jk}} - {{\left( {{{\boldsymbol{R}}_{ij}}{{\boldsymbol{X}}_{ik}}} \right)}^T}{{\boldsymbol{X}}_{jk}}{{\boldsymbol{R}}_{ij}}{{\boldsymbol{X}}_{ik}}} \right){\boldsymbol{X}}_{jk}^T\\
 =& {\left[ {{{\boldsymbol{R}}_{ij}}{{\boldsymbol{X}}_{ik}}} \right]_ \times }{\left[ {{{\boldsymbol{X}}_{jk}}} \right]_ \times }{{\boldsymbol{R}}_{ij}}{{\boldsymbol{X}}_{ik}}{\boldsymbol{X}}_{jk}^T\\
 =& {\left[ {{{\boldsymbol{\theta }}_{ijk}}} \right]_ \times }{{\boldsymbol{R}}_{ij}}{{\boldsymbol{X}}_{ik}}{\boldsymbol{X}}_{jk}^T + {\left[ {{{\boldsymbol{X}}_{jk}}} \right]_ \times }{\left[ {{{\boldsymbol{R}}_{ij}}{{\boldsymbol{X}}_{ik}}} \right]_ \times }{{\boldsymbol{R}}_{ij}}{{\boldsymbol{X}}_{ik}}{\boldsymbol{X}}_{jk}^T\\
 =& {\left[ {{{\boldsymbol{\theta }}_{ijk}}} \right]_ \times }{{\boldsymbol{R}}_{ij}}{{\boldsymbol{X}}_{ik}}{\boldsymbol{X}}_{jk}^T.
\end{aligned}$
}
\label{eq:dividee6}
\end{equation}
Based on (\ref{eq:dividee5}) and (\ref{eq:dividee6}), \(\boldsymbol{N}\) can be rewritten as:
\begin{equation}
\begin{aligned}
{\boldsymbol{N}} =& {\left[ {{{\boldsymbol{\theta }}_{ijk}}} \right]_ \times }{{\boldsymbol{X}}_{jk}}{\left( {{{\boldsymbol{R}}_{ij}}{{\boldsymbol{X}}_{ik}}} \right)^T} - {\left[ {{{\boldsymbol{\theta }}_{ijk}}} \right]_ \times }{{\boldsymbol{R}}_{ij}}{{\boldsymbol{X}}_{ik}}{\boldsymbol{X}}_{jk}^T\\
 =& {\left[ {{{\boldsymbol{\theta }}_{ijk}}} \right]_ \times }\left( {{{\boldsymbol{X}}_{jk}}{{\left( {{{\boldsymbol{R}}_{ij}}{{\boldsymbol{X}}_{ik}}} \right)}^T} - {{\boldsymbol{R}}_{ij}}{{\boldsymbol{X}}_{ik}}{\boldsymbol{X}}_{jk}^T} \right)\\
 =& {\left[ {{{\boldsymbol{\theta }}_{ijk}}} \right]_ \times }{\left[ {{{\boldsymbol{\theta }}_{ijk}}} \right]_ \times }
\end{aligned}
\label{eq:devidee7}
\end{equation}
Substituting (\ref{eq:devidee7}) into (\ref{eq:ActionP}), we get
\begin{equation}
\begin{aligned}
{{\boldsymbol{P}}_{ijk}} =& {\left[ {{{\boldsymbol{\theta }}_{ijk}}} \right]_ \times }{\left[ {{{\boldsymbol{\theta }}_{ijk}}} \right]_ \times } + {\left\| {{{\boldsymbol{\theta }}_{ijk}}} \right\|^2}{\boldsymbol{I}}\\
 =& {{\boldsymbol{\theta }}_{ijk}}{\boldsymbol{\theta }}_{ijk}^T.
\end{aligned}
    \label{eq:ActionP2}
\end{equation}
\end{proof}
We define the joint observation matrix as 
${{\boldsymbol{P}}^J_{ij}} = {\left[ {{\boldsymbol{P}}_{ij1},{\boldsymbol{P}}_{ij2},...,{\boldsymbol{P}}_{ijk}} \right]^T}$. 
As ${{\boldsymbol{\theta }}_{ijk}}{\boldsymbol{\theta }}_{ijk}^T{\boldsymbol{x}} = 0$ and ${\boldsymbol{\theta }}_{ijk}^T{\boldsymbol{x}} = 0$ have the same solution, ${{\boldsymbol{P}}^J_{ij}}{\boldsymbol{x}} = {\boldsymbol{0}}$ is equivalent to ${\left[ {{{\boldsymbol{\theta }}_{ij1}},{{\boldsymbol{\theta }}_{ij2}},...,{{\boldsymbol{\theta }}_{ijk}}} \right]^T}{\boldsymbol{x}} = 0$, i.e.,
\begin{equation}
\begin{aligned}
\left[ {{{\boldsymbol{\theta }}_{ij1}},{{\boldsymbol{\theta }}_{ij2}},...,{{\boldsymbol{\theta }}_{ijk}}} \right]\left[ \begin{array}{l}
{\boldsymbol{\theta }}_{ij1}^T\\
{\boldsymbol{\theta }}_{ij2}^T\\
...\\
{\boldsymbol{\theta }}_{ijk}^T
\end{array} \right]{\boldsymbol{x}} = 0.
\end{aligned}
    \label{eq:observeDerive1}
\end{equation} 

Let the simplified joint observation matrix be denoted as
\begin{equation}
\begin{aligned}
{{\boldsymbol{P}}^S_{ij}} = \left[ {{{\boldsymbol{\theta }}_{ij1}},{{\boldsymbol{\theta }}_{ij2}},...,{{\boldsymbol{\theta }}_{ijk}}} \right]\left[ \begin{array}{l}
{\boldsymbol{\theta }}_{ij1}^T\\
{\boldsymbol{\theta }}_{ij2}^T\\
...\\
{\boldsymbol{\theta }}_{ijk}^T
\end{array} \right] = \sum\limits_{k = 1}^{{m_{ij}}} {{{\boldsymbol{P}}_{ijk}}} .
\end{aligned}
    \label{eq:observeDerive2}
\end{equation} 
It is evident that $\boldsymbol{P}^S_{ij}$ and  $\boldsymbol{P}_{ijk}$ only depend on the relative rotation, and the relative translation ${{\boldsymbol{t}}_{ij}}$  is a solution to the system of equations ${{\boldsymbol{P}}^S_{ij}}{\boldsymbol{x}} = {\boldsymbol{0}}$, which therefore implies that the relative translation can be represented on the rotation manifold. We next further analyze how $\boldsymbol{P}^S_{ij}$ restricts relative translation under different scene structures.

\subsection{Properties of Translation Solution Space}

\newtheorem{corollary}{Corollary}

\begin{corollary}\label{corollary:one}
The solution space of ${{{\boldsymbol{P}}^S_{ij}}}$ is determined by its rank. Specifically, three different ranks correspond to three distinct scene structure cases.

    {Case 1:} $ rank\left( {{{\boldsymbol{P}}^S_{ij}}} \right) = 0 \Leftrightarrow$ The corresponding cameras' motion is pure rotation, or all observed 3D points lie on the camera translation baseline or at infinity. We refer to this scene structure as \textit{PR/B/I} for short.

    {Case 2:} $rank\left( {{{\boldsymbol{P}}^S_{ij}}} \right) = 1 \Leftrightarrow$ The corresponding cameras' motion is non-pure rotation, and at least one scene point does not lie on the translation baseline or at infinity. Furthermore, all non-infinite 3D points lie on the same plane as the corresponding cameras. In this scene structure, which we refer to as \textit{Holoplane}, the corresponding solution space of ${{\boldsymbol{P}}^S_{ij}}{\boldsymbol{x}} = {\boldsymbol{0}}$ is the plane. 

    {Case 3: }$ rank\left( {{{\boldsymbol{P}}^S_{ij}}} \right) = 2 \Leftrightarrow$ The corresponding cameras' motion is non-pure rotation, and all the non-infinite 3D points and cameras do not lie on a single plane. We refer to this as \textit{RankRegular} scene structure.

\end{corollary}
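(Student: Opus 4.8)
The plan is to reduce the corollary to an analysis of the linear span of the vectors $\{\boldsymbol{\theta}_{ijk}\}_{k=1}^{m_{ij}}$. Because $\boldsymbol{P}^S_{ij} = \sum_{k} \boldsymbol{\theta}_{ijk}\boldsymbol{\theta}_{ijk}^T$ is a sum of rank-one positive semidefinite matrices, its column space equals $\mathrm{span}\{\boldsymbol{\theta}_{ijk}\}$, so $\mathrm{rank}(\boldsymbol{P}^S_{ij}) = \dim \mathrm{span}\{\boldsymbol{\theta}_{ijk}\}$; I would establish this elementary fact first. Consequently the rank values $0,1,2$ correspond exactly to the span being the origin, a line, or a plane. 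An a priori bound $\mathrm{rank}(\boldsymbol{P}^S_{ij}) \le 2$ follows from (\ref{eq:crosspruduct1}): whenever $\|\boldsymbol{t}_{ij}\| \ne 0$, each $\boldsymbol{\theta}_{ijk}$ is a scalar multiple of $\boldsymbol{\beta}_{ijk} = {\left[ {\boldsymbol{X}}_{jk} \right]_\times}\overrightarrow{\boldsymbol{t}}_{ij}$ and is therefore orthogonal to $\overrightarrow{\boldsymbol{t}}_{ij}$, so all $\boldsymbol{\theta}_{ijk}$ lie in the fixed plane $\overrightarrow{\boldsymbol{t}}_{ij}^{\perp}$.

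The second step is a geometric interpretation of $\boldsymbol{\theta}_{ijk}$. Using the rigid transformation (\ref{eq:RigidTransformation}) together with $\boldsymbol{X}_{ik}^C = z_{ik}^C \boldsymbol{X}_{ik}$ and $\boldsymbol{X}_{jk}^C = z_{jk}^C \boldsymbol{X}_{jk}$, one gets $\boldsymbol{R}_{ij}\boldsymbol{X}_{ik} \parallel \boldsymbol{X}_{jk}^C - \boldsymbol{t}_{ij}$ and $\boldsymbol{X}_{jk} \parallel \boldsymbol{X}_{jk}^C$, so that $\boldsymbol{\theta}_{ijk}$ is a positive multiple of $(\boldsymbol{X}_{jk}^C - \boldsymbol{t}_{ij}) \times \boldsymbol{X}_{jk}^C = \boldsymbol{X}_{jk}^C \times \boldsymbol{t}_{ij}$, i.e.\ $\boldsymbol{\theta}_{ijk}$ is, up to scale, the normal of the epipolar plane spanned by the baseline $\boldsymbol{t}_{ij}$ and the ray $\boldsymbol{X}_{jk}^C$. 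From this representation I would extract two characterizations: (i) $\boldsymbol{\theta}_{ijk} = \boldsymbol{0}$ iff $\|\boldsymbol{t}_{ij}\| = 0$, or $\boldsymbol{X}_{jk}^C \parallel \boldsymbol{t}_{ij}$ (point on the baseline), or $z_{ik}^C \to \infty$ (point at infinity); and (ii) two nonzero $\boldsymbol{\theta}_{ijk}, \boldsymbol{\theta}_{ijk'}$ are parallel iff $\boldsymbol{X}_{jk}^C, \boldsymbol{X}_{jk'}^C, \boldsymbol{t}_{ij}$ are linearly dependent, i.e.\ the two 3D points lie in a common plane through the baseline.

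Assembling the cases is then bookkeeping over the possible span dimensions. Case 1 is $\mathrm{span}\{\boldsymbol{\theta}_{ijk}\} = \{\boldsymbol{0}\}$, equivalently every $\boldsymbol{\theta}_{ijk}$ vanishes, which by characterization (i) is precisely the \textit{PR/B/I} scene. In the remaining cases the motion is non-pure and at least one finite, off-baseline point exists, yielding a nonzero normal. Case 2 is $\dim \mathrm{span} = 1$, i.e.\ all nonzero $\boldsymbol{\theta}_{ijk}$ are mutually parallel; by characterization (ii) this holds iff all finite, off-baseline points share one epipolar plane, i.e.\ these points and both camera centers are coplanar (the \textit{Holoplane}), and the null space of $\boldsymbol{P}^S_{ij}$ is then the plane orthogonal to this common normal, which indeed contains $\overrightarrow{\boldsymbol{t}}_{ij}$. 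Case 3 is the complementary situation $\dim \mathrm{span} = 2$: two off-baseline points with distinct epipolar planes force two independent normals spanning $\overrightarrow{\boldsymbol{t}}_{ij}^{\perp}$, the \textit{RankRegular} scene. Since the three span dimensions are mutually exclusive and exhaustive, each $\Leftrightarrow$ follows.

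The main obstacle I anticipate is making the geometric equivalences rigorous rather than merely suggestive: proving characterization (ii) in both directions and correctly handling the boundary subcases, namely points lying exactly on the baseline, points at infinity, and mixtures of finite and infinite points within a single view pair. In particular I would need to verify that deleting the vanishing and at-infinity terms from $\{\boldsymbol{\theta}_{ijk}\}$ does not change the span, and that the informal phrase ``coplanar with the corresponding cameras'' coincides exactly with linear dependence of $\{\boldsymbol{X}_{jk}^C, \boldsymbol{X}_{jk'}^C, \boldsymbol{t}_{ij}\}$ expressed in frame $j$, so that the stated scene-structure descriptions match the algebraic rank conditions verbatim.
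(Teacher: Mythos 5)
Your proposal is correct and takes essentially the same route as the paper's own proof: both reduce $\mathrm{rank}\left(\boldsymbol{P}^S_{ij}\right)$ to $\dim\mathrm{span}\left\{\boldsymbol{\theta}_{ijk}\right\}$, characterize when the $\boldsymbol{\theta}_{ijk}$ vanish or are mutually parallel in terms of points on the baseline/at infinity and coplanarity with the baseline and cameras, and settle the converse of Case~3 by exclusion using $\boldsymbol{P}^S_{ij}\boldsymbol{t}_{ij}=\boldsymbol{0}$ so that the rank is at most $2$. Your explicit identity $\boldsymbol{\theta}_{ijk}\propto\boldsymbol{X}^C_{jk}\times\boldsymbol{t}_{ij}$ is merely a compact repackaging of the paper's relation $z^C_{ik}\boldsymbol{\theta}_{ijk}=\boldsymbol{\beta}_{ijk}\left\|\boldsymbol{t}_{ij}\right\|$ in (\ref{eq:crosspruduct1}), with the only cosmetic difference being that you rederive the $\boldsymbol{\theta}_{ijk}=\boldsymbol{0}$ characterization that the paper imports from \cite{QiCai_IJCV}.
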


\begin{proof}
    The proofs are as follows:
    
    {Case 1:} $ \Rightarrow $: 
    When $rank\left( {{{\boldsymbol{P}}^S_{ij}}} \right) = 0$, ${{\boldsymbol{P}}^S_{ij}} = {\boldsymbol{0}}$. It follows from (\ref{eq:observeDerive2}) that $tr\left( {{{\boldsymbol{P}}^S_{ij}}} \right) = \sum\limits_{k = 1}^{{m_{ij}}} {{{\left\| {{{\boldsymbol{\theta }}_{ijk}}} \right\|}^2}}  = 0$, which indicates that for any 3D points \( k \), we have ${{\boldsymbol{\theta }}_{ijk}} = {\boldsymbol{0}}$. According to \cite{QiCai_IJCV}, the scene structure is \textit{PR/B/I}.
        
        $ \Leftarrow $: 
        It follows from \cite{QiCai_IJCV} that all observed 3D points \( k \) satisfy ${{\boldsymbol{\theta }}_{ijk}} = {\boldsymbol{0}}$, therefore from (\ref{eq:ActionP2}), it can be inferred that $\boldsymbol{P}_{ijk}=0$. And it follows from (\ref{eq:observeDerive2}) that ${{\boldsymbol{P}}^S_{ij}} = \sum\limits_{k = 1}^{{m_{ij}}} {{{\boldsymbol{P}}_{ijk}}}  = 0$. Therefore, $rank\left( {{{\boldsymbol{P}}^S_{ij}}} \right) = 0$. 
        \hfill
        
        {Case 2:}
        $ \Rightarrow $:
        If $rank\left( {{{\boldsymbol{P}}^S_{ij}}} \right) = 1$, then there exists at least one 3D point $k$ does not lie on the translation baseline or at infinity that satisfies ${{\boldsymbol{\theta }}_{ijk}} \neq {\boldsymbol{0}}$.
        For $r \in \{1, 2, \ldots, m_{ij}\}, \, r \neq k$, $\boldsymbol{\theta}_{ijr}$ has two possible sub-cases:
        \begin{itemize}
            \item \( \boldsymbol{\theta}_{ijr} \neq \boldsymbol{0} \) and \( \boldsymbol{\theta}_{ijr} \) is parallel to \( \boldsymbol{\theta}_{ijk} \). As $\boldsymbol{\theta}_{ijr} \perp \boldsymbol{X}_{jr}$, so $\boldsymbol{\theta}_{ijk} \perp z_{jr}^C \boldsymbol{X}_{jr}$. It can be inferred that 3D point $r$ is neither at infinity or on the relative translation baseline \cite{QiCai_IJCV}; it lies on a plane with normal vector $\boldsymbol{\theta}_{ijk}$.
            \item \( \boldsymbol{\theta}_{ijr} = \boldsymbol{0} \). 3D point $r$ lies on the camera translation baseline or at infinity. 
        \end{itemize}

        It can be concluded that at least one 3D point does not lie on the relative translation baseline or at infinity, and all non-infinite 3D points lie on the plane with normal vector $\boldsymbol{\theta}_{ijk}$. Since the corresponding cameras also lie on the plane with normal vector $\boldsymbol{\theta}_{ijk}$, it follows that all non-infinite 3D points lie on the same plane as the corresponding cameras.

        $ \Leftarrow $: 
        Under the given conditions, it follows that the rank of $\boldsymbol{P}^S_{ij}$ is at least $1$, and there exists a 3D point $k$ satisfies ${{\boldsymbol{\theta }}_{ijk}} \neq {\boldsymbol{0}}$. For other 3D points, we denote one of them as $r$, and there are three possible positions for the 3D points.

        \begin{itemize}
            \item At infinity, then ${\boldsymbol{\theta }}_{ijr} = \boldsymbol{0}$.
            \item Lie on the relative translation baseline, then $\boldsymbol{\theta}_{ijr} = \boldsymbol{0}$.
            \item Lie on the same plane as the 3D point $k$ and corresponding cameras. It follows that $\left[\boldsymbol{X}_{jk}\right]_\times \boldsymbol{t}_{ij} \text{ and } \left[\boldsymbol{X}_{jr}\right]_\times \boldsymbol{t}_{ij}$ are parallel. Based on (\ref{eq:crosspruduct1}), it's shown that $\boldsymbol{\theta}_{ijk}$ is parallel to $\boldsymbol{\theta}_{ijr}$.
        \end{itemize}
        
        Since $rank\left( {{{\boldsymbol{P}}^S_{ij}}} \right) = rank\left( {\left[ {{{\boldsymbol{\theta }}_{ij1}},...,{{\boldsymbol{\theta }}_{ij{m_{ij}}}}} \right]} \right)$, none of the positions mentioned above will increase the rank of $\boldsymbol{P}^S_{ij}$. Therefore, we have $rank\left( {{{\boldsymbol{P}}^S_{ij}}} \right) = rank\left( {\left[ {{{\boldsymbol{\theta }}_{ij1}},...,{{\boldsymbol{\theta }}_{ij{m_{ij}}}}} \right]} \right)=rank\left( { {{{\boldsymbol{\theta }}_{ijk}}} } \right)=1$.

        In \textit{Holoplane}, the rank of the solution space of \( \boldsymbol{P}^S_{ij} \boldsymbol{x} = \boldsymbol{0} \) is 2. For any 3D points \( k \) not located on the relative translation baseline or at infinity satisfies $\boldsymbol{P}^S_{ij}\left( \boldsymbol{R}_{ij} \boldsymbol{X}_{ik} \right) = \boldsymbol{0}$ and $\boldsymbol{P}^S_{ij} \boldsymbol{X}_{jk} = \boldsymbol{0}$. Therefore, the solution space of \( \boldsymbol{P}^S_{ij} \boldsymbol{x} = \boldsymbol{0} \) is \( \left\langle \boldsymbol{R}_{ij} \boldsymbol{X}_{ij}, \boldsymbol{X}_{jk} \right\rangle \). Since \( \boldsymbol{t}_{ij} \) also lies on \( \left\langle \boldsymbol{R}_{ij} \boldsymbol{X}_{ij}, \boldsymbol{X}_{jk} \right\rangle \), the solution space is the plane containing corresponding cameras, and all 3D points that are not at infinity. \hfill
        
        {Case 3:} 
        $\Rightarrow$ : 
        When ${rank}\left({{{\boldsymbol{P}}^S_{ij}}}\right)=2$, since ${rank}\left({{{\boldsymbol{P}}^S_{ij}}}\right) = {rank}\left(\left[\boldsymbol{\theta}_{i j 1}, \ldots, \boldsymbol{\theta}_{i j m_{i j}}\right]\right)$, it follows that there are at least two 3D points $k$ and $r$ such that $\boldsymbol{\theta}_{i j k}, \boldsymbol{\theta}_{i j r} \neq \boldsymbol{0}$ and $\boldsymbol{\theta}_{i j k}$ and $\boldsymbol{\theta}_{i j r}$ are not parallel. Since $\left[\boldsymbol{X}_{j k}\right]_\times \boldsymbol{t}_{i j}$ and $\left[\boldsymbol{X}_{j r}\right]_\times \boldsymbol{t}_{i j}$ are parallel to $\boldsymbol{\theta}_{i j k}$ and $\boldsymbol{\theta}_{i j r}$, respectively, it follows that $\left[\boldsymbol{X}_{j k}\right]_\times \boldsymbol{t}_{i j}$ and $\left[\boldsymbol{X}_{j r}\right]_\times \boldsymbol{t}_{i j}$ are not parallel. Since $\left[\boldsymbol{X}_{j k}\right]_{\times} \boldsymbol{t}_{i j}$ and $\left[\boldsymbol{X}_{j r}\right]_{\times} \boldsymbol{t}_{i j}$ are the normal vectors of the planes $\left\langle\boldsymbol{X}_{j k}, \boldsymbol{t}_{i j}\right\rangle$ and $\left\langle\boldsymbol{X}_{j r}, \boldsymbol{t}_{i j}\right\rangle$, respectively, it follows that $\left\langle\boldsymbol{X}_{j k}, \boldsymbol{t}_{i j}\right\rangle \neq \left\langle\boldsymbol{X}_{j r}, \boldsymbol{t}_{i j}\right\rangle$. This means that the vectors $\boldsymbol{X}_{j k}, \boldsymbol{X}_{j r}$, and $\boldsymbol{t}_{i j}$ are not coplanar. Consequently, all the non-infinite 3D points and cameras do not lie on a single plane.

        $\Leftarrow$: 
        Since $\boldsymbol{P}^S_{ij}$ is a $3 \times 3$ matrix and satisfies $\boldsymbol{P}^S_{ij} \boldsymbol{t}_{ij} = \boldsymbol{0}$, it follows that ${rank}\left(\boldsymbol{P}^S_{ij}\right) \leq 2$. Under the given conditions, the scene structure differs from both \textit{PR/B/I} and \textit{Holoplane}, so it can be concluded that ${rank}\left(\boldsymbol{P}^S_{ij}\right) \geq 2$. From the above, it follows that ${rank}\left(\boldsymbol{P}^S_{ij}\right) = 2$.
\end{proof}
The work \cite{QiCai_IJCV} proposed a method for identifying the scene structure of \textit{PR/B/I}. As indicated in \cite{MultipleViewGeometry,Maybank90}, a specific type of \textit{RankRegular} scene structure that all points lie on a line can cause singularities in 3D pose initialization, and we refer to it as \textit{RankRegular-line}. In the sequel, we refer to the scene structures of \textit{Holoplane} and \textit{RankRegular-line} collectively  as \textit{RotationSingular} structure.

Here, we further provide two detection matrices for identifying {\textit{RotationSingular} structures}:
\begin{equation} 
    \begin{aligned}
    \boldsymbol{G}_{i j}^i = \sum_k^{m_{i j}} \left(\boldsymbol{X}_{i k} \boldsymbol{X}_{i k}^T \right), \quad \boldsymbol{G}_{i j}^j = \sum_k^{m_{i j}} \left(\boldsymbol{X}_{j k} \boldsymbol{X}_{j k}^T \right).
    \end{aligned} \label{eq:case2IdentifyingMatrixBasic}
\end{equation}

\begin{corollary}\label{corollary:Two}
    The scene structure is \textit{RotationSingular}
    \begin{equation}
        \begin{aligned}
            \Leftrightarrow max \left({rank}\left(\boldsymbol{G}_{i j}^i\right),{rank}\left(\boldsymbol{G}_{i j}^j\right)\right)  < 3.
        \end{aligned} \label{eq:case2IdentifyingMatrix}
    \end{equation}
\end{corollary}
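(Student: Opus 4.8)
The plan is to translate the geometric notion of a \textit{RotationSingular} structure into a statement about the linear span of the observation vectors in each view, and then to read the ranks of $\boldsymbol{G}_{ij}^i$ and $\boldsymbol{G}_{ij}^j$ directly off that span. First I would establish the algebraic dictionary: writing $\boldsymbol{A}_i = \left[ \boldsymbol{X}_{i1}, \ldots, \boldsymbol{X}_{i m_{ij}} \right]$, one has $\boldsymbol{G}_{ij}^i = \boldsymbol{A}_i \boldsymbol{A}_i^T$, so that $\text{rank}(\boldsymbol{G}_{ij}^i) = \text{rank}(\boldsymbol{A}_i) = \dim \text{span}\{\boldsymbol{X}_{ik}\}$, and analogously for view $j$. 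Since every observation has the form $\boldsymbol{X}_{ik} = (x_{ik}, y_{ik}, 1)^T$, the span drops below dimension $3$ exactly when some nonzero $(a, b, c)$ satisfies $a x_{ik} + b y_{ik} + c = 0$ for all $k$, i.e. exactly when the image points of view $i$ are collinear. This yields the pivotal equivalence I would rely on throughout: $\text{rank}(\boldsymbol{G}_{ij}^i) < 3$ if and only if all observed $3$D points lie on a single plane through the camera center $C_i$, namely the preimage plane of that image line.

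With this dictionary in hand, I would argue the forward implication by cases on the definition of \textit{RotationSingular}. In the \textit{Holoplane} case, all non-infinite $3$D points and both centers lie on one common plane; in particular the points are coplanar with $C_i$ and with $C_j$, so both ranks are at most $2$ and their maximum is below $3$. In the \textit{RankRegular-line} case all points lie on a single line $L$; then they lie in the plane spanned by $L$ and $C_i$, and likewise in the plane spanned by $L$ and $C_j$, again forcing both ranks to at most $2$. Either branch gives $\max\left(\text{rank}(\boldsymbol{G}_{ij}^i), \text{rank}(\boldsymbol{G}_{ij}^j)\right) < 3$.

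For the converse I would start from $\text{rank}(\boldsymbol{G}_{ij}^i) \le 2$ and $\text{rank}(\boldsymbol{G}_{ij}^j) \le 2$, so by the dictionary the points lie in a plane $\Pi_i \ni C_i$ and in a plane $\Pi_j \ni C_j$, hence in $\Pi_i \cap \Pi_j$. If $\Pi_i = \Pi_j$, all points and both centers share a single plane, which---under the standing assumption that we are not in the \textit{PR/B/I} regime of Case~1 in Corollary~\ref{corollary:one}---is precisely \textit{Holoplane}. If $\Pi_i \neq \Pi_j$, the two planes meet in a line and every point lies on it; according to whether the centers happen to be coplanar with that line, this is either a boundary \textit{Holoplane} or a \textit{RankRegular-line}, and in both situations the scene is \textit{RotationSingular}. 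Assembling the branches closes the equivalence.

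The step I expect to be the main obstacle is the converse, and specifically the clean separation of the two sub-cases together with their correct identification as \textit{Holoplane} versus \textit{RankRegular-line}. The delicate point is that ``all points on a line'' does not by itself decide between the two named structures, and that a rank test built purely from observations cannot see the camera motion: the baseline and at-infinity configurations of \textit{PR/B/I} also collapse both ranks. I would therefore make the exclusion of \textit{PR/B/I} explicit (as already detectable by the method of \cite{QiCai_IJCV}), so that a simultaneous rank deficiency of $\boldsymbol{G}_{ij}^i$ and $\boldsymbol{G}_{ij}^j$ is attributable to a genuinely planar or linear scene rather than to a degenerate baseline or pure-rotation case. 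Verifying that the surviving line configurations indeed fall under \textit{RankRegular-line} (or its coplanar boundary with \textit{Holoplane}) is where the argument demands the most care.
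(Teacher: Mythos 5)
Your proof is correct and follows essentially the same route as the paper's: both directions rest on the identification $\mathrm{rank}\left(\boldsymbol{G}_{ij}^i\right)=\dim\mathrm{span}\left\{\boldsymbol{X}_{ik}\right\}$, i.e.\ rank deficiency means the rays are confined to a plane through the camera center (the paper phrases the null vectors $\boldsymbol{v}_i,\boldsymbol{v}_j$ in the camera-$j$ frame rather than as world planes $\Pi_i,\Pi_j$, and derives the forward direction via $\boldsymbol{\theta}_{ijk}$-orthogonality and the line parametrization $\boldsymbol{d}+s_k\boldsymbol{v}$, but the substance is identical), and your converse split into coincident versus distinct planes mirrors the paper's parallel versus non-parallel $\boldsymbol{v}_i,\boldsymbol{v}_j$ cases. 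Your explicit insistence on first excluding \textit{PR/B/I} is a caveat worth keeping: the all-points-on-baseline configuration does collapse both ranks without being \textit{RotationSingular}, so the equivalence as literally stated needs that precondition, which the paper enforces only at the implementation stage by detecting \textit{PR/B/I} beforehand.
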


\begin{proof}
    $\Rightarrow$: From {Corollary \ref{corollary:one}}, when the scene structure is in the form of {\textit{Holoplane}}, there exists \( k \in \{1, 2, \ldots, m_{ij}\} \) such that for all \( r \in \{1, 2, \ldots, m_{ij}\} \), we have: $\boldsymbol{\theta}_{i j k} \perp \boldsymbol{X}_{j r}, \boldsymbol{\theta}_{i j k} \perp \boldsymbol{R}_{i j} \boldsymbol{X}_{i r}.$ Therefore, we have
    \begin{equation}
        \left[\boldsymbol{X}_{j 1}, \boldsymbol{X}_{j 2}, \cdots, \boldsymbol{X}_{j m_{ij}}\right]^T \boldsymbol{\theta}_{i j k} = \boldsymbol{0}.
        \label{eq:case2IdentifyingVector1}
    \end{equation}
    \begin{equation}
        \left[         \boldsymbol{R}_{i j} \boldsymbol{X}_{i 1}, \boldsymbol{R}_{i j} \boldsymbol{X}_{i 2}, \cdots, \boldsymbol{R}_{i j} \boldsymbol{X}_{i m_{ij}}\right]^T \boldsymbol{\theta}_{i j k} = \boldsymbol{0}.
        \label{eq:case2IdentifyingVector2}
    \end{equation}
    According to the fundamental equalities of matrices, (\ref{eq:case2IdentifyingVector1}) and (\ref{eq:case2IdentifyingVector2}) can be equivalently expressed as:
    \begin{equation}
    \begin{aligned}
        \boldsymbol{G}_{i j}^j \boldsymbol{\theta}_{i j k} = \boldsymbol{0}.
    \end{aligned} \label{eq:case2IdentifyingProof1}
    \end{equation}
    \begin{equation}
        \begin{aligned}
            \boldsymbol{R}_{i j}\boldsymbol{G}_{i j}^i 
            \boldsymbol{R}_{i j}^T \boldsymbol{\theta}_{i j k} = \boldsymbol{0}.
        \end{aligned} \label{eq:case2IdentifyingProof2}
    \end{equation}
    Since \(\boldsymbol{G}_{ij}^j\) and \(\boldsymbol{G}_{ij}^i\) are \(3 \times 3\) matrices, it follows from (\ref{eq:case2IdentifyingProof1}) and (\ref{eq:case2IdentifyingProof2}) that \({rank}\left(\boldsymbol{G}_{ij}^i\right) < 3\) and \({rank}\left(\boldsymbol{G}_{ij}^j\right) < 3\). 

    When the scene structure is in the form of {\textit{RankRegular-line}}, let the coordinate of the 3D point \(k\) in the camera coordinate system \(j\) be denoted as \(\boldsymbol{X}_{jk}^C=\boldsymbol{d}+s_k \boldsymbol{v}\), where \(\boldsymbol{d}, \boldsymbol{v} \in \mathbb{R}^3\). 
     When \(\boldsymbol{d} \parallel \boldsymbol{v}\), for \(\forall k, r \in \{1, 2, \ldots, m_{ij}\}\), it holds that \(\boldsymbol{X}_{jk} = \boldsymbol{X}_{jk}^C / \boldsymbol{X}_{jk(3)}^C = \boldsymbol{X}_{jr}^C / \boldsymbol{X}_{jr(3)}^C = \boldsymbol{X}_{jr}\). It is then evident that \({rank}\left(\boldsymbol{G}_{ij}^j\right) = 1\). 
    When \(\boldsymbol{d}\) and \(\boldsymbol{v}\) are not parallel, for \(\forall k \in [1, m_{ij}]\), it holds that \(\boldsymbol{X}_{jk}^C \perp [\boldsymbol{d}]_{\times} \boldsymbol{v}\). Furthermore, the observations satisfy the following relationship: $\left[
        \boldsymbol{X}_{j1}, \boldsymbol{X}_{j2}, \ldots, \boldsymbol{X}_{jm_{ij}}
        \right]^T [\boldsymbol{d}]_{\times} \boldsymbol{v} = \mathbf{0}$, thus, 
 \({rank}\left(\boldsymbol{G}_{ij}^j\right) = 2\). 
Similarly, it can be demonstrated that \({rank}\left(\boldsymbol{G}_{ij}^j\right)\) and \({rank}\left(\boldsymbol{G}_{ij}^i\right)\) are consistent. 

Therefore, when the scene structure is {\textit{RotationSingular} structures}, it can be concluded that \({rank}\left(\boldsymbol{G}_{ij}^i\right) < 3\) and \({rank}\left(\boldsymbol{G}_{ij}^j\right) < 3\). 

    $\Leftarrow$: Given that the observation satisfies (\ref{eq:case2IdentifyingMatrix}), it is evident that there exists a non-zero vector \(\boldsymbol{v}_j\) such that \(\boldsymbol{G}_{ij}^j \boldsymbol{v}_j = \mathbf{0}\). Since \({rank}\left(\boldsymbol{R}_{ij} \boldsymbol{G}_{ij}^i \boldsymbol{R}_{ij}^T\right) = {rank}\left(\boldsymbol{G}_{ij}^i\right) < 3\), there exists a non-zero vector \(\boldsymbol{v}_i\) such that \(\boldsymbol{R}_{ij} \boldsymbol{G}_{ij}^i \boldsymbol{R}_{ij}^T \boldsymbol{v}_i = \mathbf{0}\). By the fundamental equalities of matrices, these two constraints are equivalent to $\left[ \boldsymbol{R}_{ij} \boldsymbol{X}_{i1}, \boldsymbol{R}_{ij} \boldsymbol{X}_{i2}, \ldots, \boldsymbol{R}_{ij} \boldsymbol{X}_{im_{ij}} \right]^T \boldsymbol{v}_i = \mathbf{0}$ and $\left[ \boldsymbol{X}_{j1}, \boldsymbol{X}_{j2}, \ldots, \boldsymbol{X}_{jm_{ij}} \right]^T \boldsymbol{v}_j = \mathbf{0}$. Thus, for \(\forall k \in [1, m_{ij}]\), \(\boldsymbol{v}_i \perp \boldsymbol{R}_{ij} \boldsymbol{X}_{ik}\) and \(\boldsymbol{v}_j \perp \boldsymbol{X}_{jk}\). This indicates that in camera coordinate system \(j\), the projection rays from the 3D points to views \(i\) and \(j\) respectively lie on planes with normal vectors \(\boldsymbol{v}_i\) and \(\boldsymbol{v}_j\). When \(\boldsymbol{v}_i \parallel \boldsymbol{v}_j\), for \(\forall k \in [1, m_{ij}]\), \(\boldsymbol{v}_i \perp \boldsymbol{R}_{ij} \boldsymbol{X}_{ik}\) and \(\boldsymbol{v}_i \perp \boldsymbol{X}_{jk}\). Therefore, \(\boldsymbol{v}_i \perp \left(z_j^C \boldsymbol{X}_{jk} - z_i^C \boldsymbol{R}_{ij} \boldsymbol{X}_{ik}\right)\), which implies \(\boldsymbol{v}_i \perp \boldsymbol{t}_{ij}\), which implies that the scene structure corresponds to \textit{Holoplane}.
    When \(\boldsymbol{v}_i\) and \(\boldsymbol{v}_j\) are not parallel, the two planes intersect along a line, and their intersection corresponds to the 3D points \(\{\boldsymbol{X}_1^w, \boldsymbol{X}_2^w, \ldots, \boldsymbol{X}_{m_{ij}}^w\}\), which implies that the scene structure corresponds to {\textit{RankRegular-line}}.

    Therefore, when \({rank}\left(\boldsymbol{G}_{ij}^i\right) < 3\) and \({rank}\left(\boldsymbol{G}_{ij}^j\right) < 3\), the scene structure is {\textit{RotationSingular} structures}.
\end{proof}

We next further investigate the properties of reprojection error across different translation solution spaces and formulate the objective function on the rotation manifold.

\subsection{Reprojection-error Optimization on Rotation Manifold}

Firstly, we review the pose-only optimization method proposed in \cite{QiCai_TPAMI}. Let
\begin{equation}
    \boldsymbol{Y}_{ijk}^j = \left\| \left[\boldsymbol{X}_{jk}\right]_\times \boldsymbol{t}_{ij}\right\|  \boldsymbol{R}_{ij} \boldsymbol{X}_{ik} +  \left\| \boldsymbol{\theta}_{ijk} \right\| \boldsymbol{t}_{ij}. \label{eq:PA1}
\end{equation}
It should be noted that $\boldsymbol{Y}_{ijk}^j = \boldsymbol{Y}_{jik}^j$ and $\boldsymbol{Y}_{ijk(3)}^j=\|\left[\boldsymbol{R}_{ji} \boldsymbol{X}_{jk}\right]_\times \boldsymbol{t}_{ji}\|$. According to (\ref{eq:PPO1}), the two-view pose-only reprojection coordinates in view $j$ can be expressed as:
\begin{equation}
    \boldsymbol{X}_{ijk}^j = \boldsymbol{Y}_{ijk}^j / \boldsymbol{Y}_{ijk(3)}^j.
    \label{eq:PA2}
\end{equation}
And the bearing vector of pose-only reprojection coordinates can be expressed as:
\begin{equation}
    \vec{\boldsymbol{X}}_{ijk}^j = \boldsymbol{Y}_{ijk}^j / \| \boldsymbol{Y}_{ijk}^j \|.
    \label{eq:PA2bearing}
\end{equation}
Under the influence of noisy observations, the two-view pose-only reprojection residual (using view $j$ as an example) is expressed as:
\begin{equation}
    \boldsymbol{V}_{ijk}^{PA,j}\left(\boldsymbol{R}_{ij}, \boldsymbol{t}_{ij}, \tilde{\boldsymbol{X}}_{ik}, \tilde{\boldsymbol{X}}_{jk}\right) = \tilde{\boldsymbol{X}}_{jk} - \boldsymbol{X}_{ijk}^j.
    \label{eq:PA3}
\end{equation}
Observations affected by noise are highlighted in tilde markers for clarity in the reprojection residual formula. And the two-view pose-only reprojection error in the bearing vector form can be expressed as:
\begin{equation}
    \boldsymbol{V}_{ijk,bearing}^{PA,j}\left(\boldsymbol{R}_{ij}, \boldsymbol{t}_{ij}, \tilde{\boldsymbol{X}}_{ik}, \tilde{\boldsymbol{X}}_{jk}\right) = \vec{\tilde{\boldsymbol{X}}}_{jk} - \vec{\boldsymbol{X}}_{ijk}^j.
    \label{eq:PA3bearing}
\end{equation}
Fig.~\ref{fig:poseresidual} graphically illustrates the two-view pose-only reprojection residual (\ref{eq:PA3}) and its corresponding bearing vector form (\ref{eq:PA3bearing}).

\begin{figure}[t] 
    \centering 
    \includegraphics[width=0.48\textwidth]{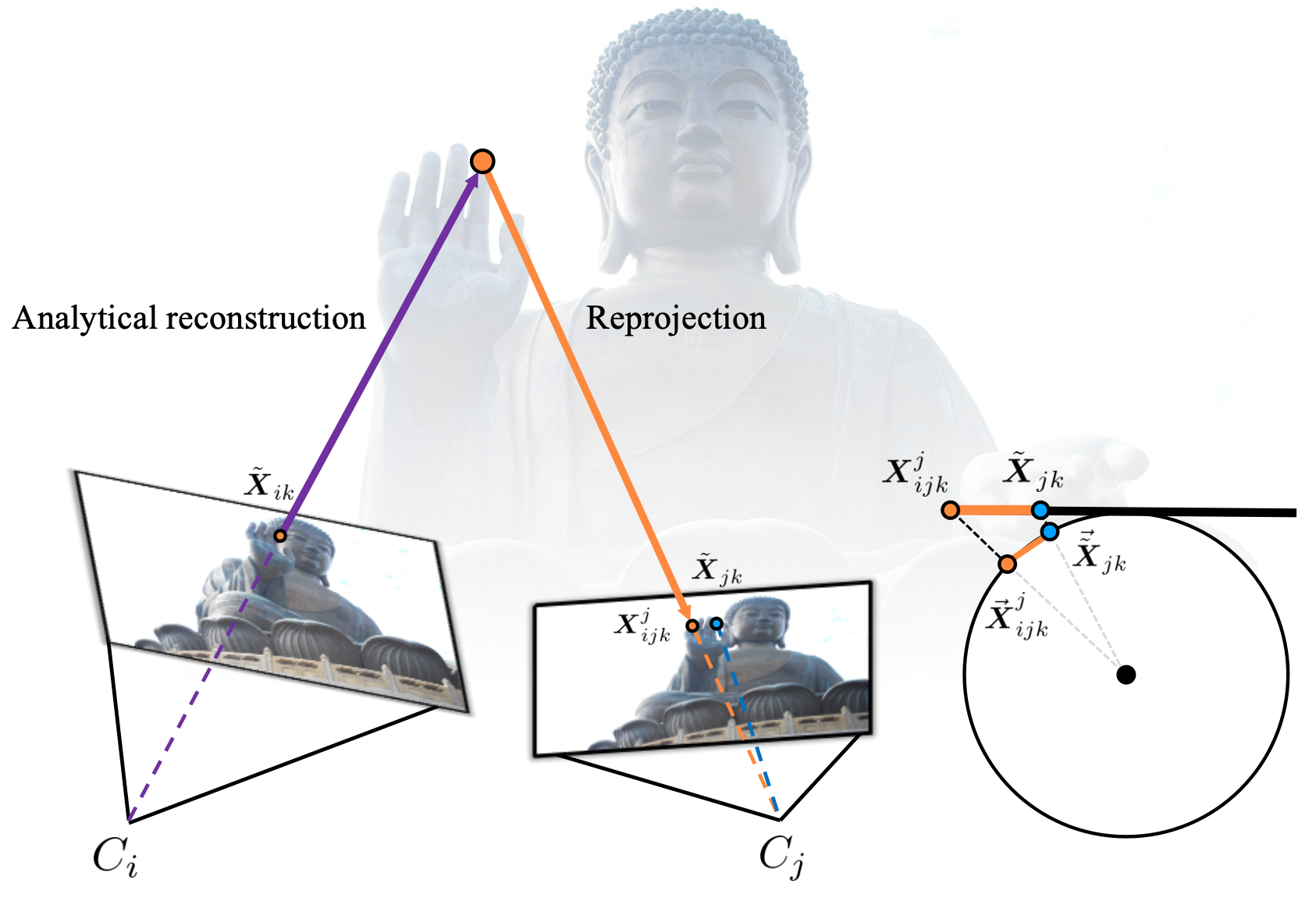} 
    \caption{Generation of projected point \( \boldsymbol{X}_{ijk}^j \), reconstructed along projection ray from \(i\)-th view and projected to \(j\)-th view. On image plane, distance between \( \boldsymbol{X}_{ijk}^j \) and \( \tilde{\boldsymbol{X}}_{jk} \) represents pose-only reprojection residual $\| \boldsymbol{V}_{ijk}^{PA,j} \|$. 
    Bearing vector of pose-only reprojection residual $\| \boldsymbol{V}_{ijk,bearing}^{PA,j} \|$ represents chord distance between \(\vec{\boldsymbol{X}}_{ijk}^j \) and \(\vec{\tilde{\boldsymbol{X}}}_{jk} \) of unit sphere centered at camera origin.
} 
    \label{fig:poseresidual} 
\end{figure}

The total two-view pose-only reprojection residual on edge $e_{ij}$ is expressed as:
\begin{equation}
    \boldsymbol{V}_{ijk}^{PA}\left(\boldsymbol{R}_{ij}, \boldsymbol{t}_{ij}, \tilde{\boldsymbol{X}}_{ik}, \tilde{\boldsymbol{X}}_{jk}\right) = 
\begin{bmatrix}
    \boldsymbol{V}_{ijk}^{PA,i} \\
\boldsymbol{V}_{ijk}^{PA,j}
\end{bmatrix}.
\label{eq:PA4}
\end{equation}

\begin{equation}
    \boldsymbol{V}_{ijk,bearing}^{PA}\left(\boldsymbol{R}_{ij}, \boldsymbol{t}_{ij}, \tilde{\boldsymbol{X}}_{ik}, \tilde{\boldsymbol{X}}_{jk}\right) = 
\begin{bmatrix}
    \boldsymbol{V}_{ijk,bearing}^{PA,i} \\
\boldsymbol{V}_{ijk,bearing}^{PA,j}
\end{bmatrix}.
\label{eq:PA4bearing}
\end{equation}

Thus, the pose-only error function can be constructed as:
\begin{equation}
    \scalebox{0.76}{$f_{PA}\left(\boldsymbol{R}_{ij}, \boldsymbol{t}_{ij}, \left\{\tilde{\boldsymbol{X}}_{ik}, \tilde{\boldsymbol{X}}_{jk}\right\}_{k=1,\ldots,m_{ij}}\right) = 
\sum_{k=1}^{m_{ij}} \left\| \boldsymbol{V}_{ijk}^{PA}\left(\boldsymbol{R}_{ij}, \boldsymbol{t}_{ij}, \tilde{\boldsymbol{X}}_{ik}, \tilde{\boldsymbol{X}}_{jk}\right) \right\|^2.$
    }
    \label{eq:PAtotal1}
\end{equation}
The corresponding minimization problem for two-view pose-only adjustment \cite{QiCai_TPAMI} is formulated as:
\begin{equation}
    \underset{\boldsymbol{R}_{ij}, \boldsymbol{t}_{ij}}{\arg\min} \, f_{PA}\left(\boldsymbol{R}_{ij}, \boldsymbol{t}_{ij}, \left\{\tilde{\boldsymbol{X}}_{ik}, \tilde{\boldsymbol{X}}_{jk}\right\}_{k=1,\ldots,m_{ij}}\right).
    \label{eq:PAtotal2}
\end{equation}

Compared to BA, the PA algorithm reduces the optimization dimensionality to the pose manifold (6 DoF) \cite{QiCai_TPAMI}. 
Advancing this thread, we aim to develop a reprojection-error-based optimization method that only evolves on the 3DoF camera rotation. To achieve this goal, the translation vector needs to be analytically represented using the rotation parameters.

Under the \textit{RankRegular} scene structure, the dimension of the solution space of \( {\boldsymbol{P}}^S_{ij} {\boldsymbol{x}} = {\boldsymbol{0}} \) is 1. Thus, we can only obtain partial information about the relative translation, denoted as \( \boldsymbol{t}_{ij}^\pm \). Next, we will prove that the partial information \( \boldsymbol{t}_{ij}^\pm \) alone is interestingly sufficient to calculate the pose-only reprojection residual (\ref{eq:PA4}) of a two-view geometry.

\begin{corollary}
    \label{corollary:corollary3}
    Under the assumption of chirality constraint and noise-free conditions,
        \begin{equation}
        \boldsymbol{V}_{ijk}^{PA}\left(\boldsymbol{R}_{ij}, \boldsymbol{t}_{ij}, {\boldsymbol{X}}_{ik}, {\boldsymbol{X}}_{jk}\right) = \boldsymbol{V}_{ijk}^{PA}\left(\boldsymbol{R}_{ij}, \boldsymbol{t}_{ij}^\pm, {\boldsymbol{X}}_{ik}, {\boldsymbol{X}}_{jk}\right)
        \label{eq:YPA2linearT}
    \end{equation}
    and
            \begin{equation}
        \scalebox{0.85}{$\boldsymbol{V}_{ijk,bearing}^{PA}\left(\boldsymbol{R}_{ij}, \boldsymbol{t}_{ij}, {\boldsymbol{X}}_{ik}, {\boldsymbol{X}}_{jk}\right) = \boldsymbol{V}_{ijk,bearing}^{PA}\left(\boldsymbol{R}_{ij}, \boldsymbol{t}_{ij}^\pm, {\boldsymbol{X}}_{ik}, {\boldsymbol{X}}_{jk}\right).$}
        \label{eq:YPA2BearinglinearT}
    \end{equation}
\end{corollary}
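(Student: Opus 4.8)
The plan is to show that the pose-only reprojection residual depends on the relative translation $\boldsymbol{t}_{ij}$ only through its unit direction, and that this direction, although known only up to sign from the null space of $\boldsymbol{P}^S_{ij}$, is pinned down uniquely by the chirality constraint. Concretely, I would first establish scale invariance and then resolve the sign, treating the standard residual (\ref{eq:PA4}) and the bearing-vector residual (\ref{eq:PA4bearing}) by the same argument.

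First I would observe that $\boldsymbol{Y}_{ijk}^j$ in (\ref{eq:PA1}) is positively homogeneous of degree one in $\boldsymbol{t}_{ij}$: the coefficient $\|[\boldsymbol{X}_{jk}]_\times \boldsymbol{t}_{ij}\|$ scales linearly with $\|\boldsymbol{t}_{ij}\|$, and the term $\|\boldsymbol{\theta}_{ijk}\|\boldsymbol{t}_{ij}$ (where $\boldsymbol{\theta}_{ijk}$ from (\ref{eq:theta}) is independent of $\boldsymbol{t}_{ij}$) is manifestly linear in $\boldsymbol{t}_{ij}$. Hence $\boldsymbol{Y}_{ijk}^j(\lambda\boldsymbol{t}_{ij}) = \lambda\,\boldsymbol{Y}_{ijk}^j(\boldsymbol{t}_{ij})$ for every $\lambda>0$, so writing $\boldsymbol{t}_{ij} = \|\boldsymbol{t}_{ij}\|\,\vec{\boldsymbol{t}}_{ij}$ gives $\boldsymbol{Y}_{ijk}^j(\boldsymbol{t}_{ij}) = \|\boldsymbol{t}_{ij}\|\,\boldsymbol{Y}_{ijk}^j(\vec{\boldsymbol{t}}_{ij})$. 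Because the reprojection coordinate (\ref{eq:PA2}) divides $\boldsymbol{Y}_{ijk}^j$ by its own third component and the bearing vector (\ref{eq:PA2bearing}) divides it by its own norm, both quotients are invariant under multiplication by the positive scalar $\|\boldsymbol{t}_{ij}\|$. The same holds for the view-$i$ reprojection by symmetry, so each block of (\ref{eq:PA4}) and (\ref{eq:PA4bearing}) depends on $\boldsymbol{t}_{ij}$ only through $\vec{\boldsymbol{t}}_{ij}$, eliminating the unknown magnitude.

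Next I would resolve the sign carried by $\boldsymbol{t}_{ij}^\pm = \pm\vec{\boldsymbol{t}}_{ij}$ using chirality. From (\ref{eq:crosspruduct1}), $z_{ik}^C\,\boldsymbol{\theta}_{ijk} = \|\boldsymbol{t}_{ij}\|\,[\boldsymbol{X}_{jk}]_\times\vec{\boldsymbol{t}}_{ij}$, and since $z_{ik}^C\in\mathbb{R}^+$ the vector $[\boldsymbol{X}_{jk}]_\times\vec{\boldsymbol{t}}_{ij}$ must be codirectional with $\boldsymbol{\theta}_{ijk}$; flipping the sign of $\vec{\boldsymbol{t}}_{ij}$ reverses this vector and would force $z_{ik}^C<0$, violating chirality, while (\ref{eq:crosspruduct2}) rules the wrong sign out for view $j$ as well. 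Thus, for any correspondence with $\boldsymbol{\theta}_{ijk}\neq\boldsymbol{0}$ --- which exists in the \textit{RankRegular} setting assumed here --- exactly one of the two candidates in $\boldsymbol{t}_{ij}^\pm$ satisfies chirality, and it coincides with the true direction $\vec{\boldsymbol{t}}_{ij}$. Substituting this direction into $\boldsymbol{Y}_{ijk}^j(\vec{\boldsymbol{t}}_{ij})$ and invoking the PPO constraint (\ref{eq:PPO3}) gives $\boldsymbol{Y}_{ijk}^j(\vec{\boldsymbol{t}}_{ij}) = \|\boldsymbol{\alpha}_{ijk}\|\,\boldsymbol{X}_{jk}$, whose perspective division and normalization both return $\boldsymbol{X}_{jk}$; combined with the scale invariance above this yields (\ref{eq:YPA2linearT}) and (\ref{eq:YPA2BearinglinearT}) for both residual forms.

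I expect the sign resolution to be the main obstacle. The delicate points are that degenerate correspondences with $\boldsymbol{\theta}_{ijk}=\boldsymbol{0}$ (points on the baseline or at infinity) carry no sign information, so the argument must lean on the existence of at least one informative point guaranteed by the \textit{RankRegular} hypothesis; and that the perspective division in (\ref{eq:PA2}) requires the third component $\boldsymbol{Y}_{ijk(3)}^j = \|[\boldsymbol{R}_{ji}\boldsymbol{X}_{jk}]_\times\boldsymbol{t}_{ji}\|$ to be nonzero, which must be checked to be positive under the chirality-consistent sign so that the quotient is well defined. The scale-invariance step, by contrast, is a routine homogeneity computation.
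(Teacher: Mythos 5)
Your first step (positive homogeneity of $\boldsymbol{Y}_{ijk}^j$ in $\boldsymbol{t}_{ij}$) is correct and matches the easy half of the claim, but your sign handling diverges from what the corollary actually asserts, and this is a genuine gap. In the paper's notation, $\boldsymbol{t}_{ij}^\pm$ denotes the unit translation direction taken with \emph{either} sign, and the content of Corollary~\ref{corollary:corollary3} is that the residual is identical for \emph{both} signs---this is precisely what lets the \textit{TRRM} pipeline substitute the null-space vector from (\ref{eq:anaTset}), whose sign is undetermined, directly into (\ref{eq:ROMres}) and (\ref{eq:ROMresBearing}) with no per-edge disambiguation (Algorithm~\ref{alg:ROMComputation} performs none). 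You instead use chirality to \emph{select} the correct sign and verify the equality only for that candidate; the wrong-signed candidate is excluded rather than shown harmless. That proves only the weaker statement $\boldsymbol{V}_{ijk}^{PA}\left(\boldsymbol{R}_{ij},\boldsymbol{t}_{ij},\boldsymbol{X}_{ik},\boldsymbol{X}_{jk}\right)=\boldsymbol{V}_{ijk}^{PA}\left(\boldsymbol{R}_{ij},+\vec{\boldsymbol{t}}_{ij},\boldsymbol{X}_{ik},\boldsymbol{X}_{jk}\right)$ and reintroduces exactly the sign-resolution step the corollary is designed to render unnecessary.

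Nor can your homogeneity computation be patched to cover the sign, because at the level of (\ref{eq:PA1}) sign-invariance is not even apparent: the coefficient $\left\|\left[\boldsymbol{X}_{jk}\right]_\times\boldsymbol{t}_{ij}\right\|$ is even in $\boldsymbol{t}_{ij}$ while the term $\left\|\boldsymbol{\theta}_{ijk}\right\|\boldsymbol{t}_{ij}$ is odd, so $\boldsymbol{Y}_{ijk}^j\left(-\vec{\boldsymbol{t}}_{ij}\right)=\left\|\boldsymbol{\beta}_{ijk}\right\|\boldsymbol{R}_{ij}\boldsymbol{X}_{ik}-\left\|\boldsymbol{\theta}_{ijk}\right\|\vec{\boldsymbol{t}}_{ij}$, which is not a scalar multiple of $\boldsymbol{Y}_{ijk}^j\left(\vec{\boldsymbol{t}}_{ij}\right)=\left\|\boldsymbol{\alpha}_{ijk}\right\|\boldsymbol{X}_{jk}$ whenever $\boldsymbol{\theta}_{ijk}\neq\boldsymbol{0}$. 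The paper's proof dissolves this by first invoking the chirality-implied identity $\left\|\boldsymbol{\theta}_{ijk}\right\|\left\|\left[\boldsymbol{X}_{jk}\right]_\times\boldsymbol{t}_{ij}\right\|=\boldsymbol{\theta}_{ijk}^T\left[\boldsymbol{X}_{jk}\right]_\times\boldsymbol{t}_{ij}$ (cf.~(\ref{eq:modulus4})) to convert the even coefficient into a linear one, yielding the linearization $\left\|\boldsymbol{\theta}_{ijk}\right\|\boldsymbol{Y}_{ijk}^j=\boldsymbol{S}_{ijk}^j\boldsymbol{t}_{ij}$ of (\ref{eq:YPA1}) with $\boldsymbol{S}_{ijk}^j$ independent of $\boldsymbol{t}_{ij}$; rewritten through $\boldsymbol{S}_{ijk}^j$, the residuals become (\ref{eq:PA3linearT}) and (\ref{eq:PA3linearTbearing}), where in the perspective form any nonzero scalar on $\boldsymbol{t}_{ij}$ cancels between numerator and denominator, and in the bearing form the numerator vanishes under the noise-free hypothesis since $\boldsymbol{S}_{ijk}^j\boldsymbol{t}_{ij}$ is parallel to $\boldsymbol{X}_{jk}$. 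Without this linearization---or an equivalent argument making the residual formula itself insensitive to the sign---your proof does not establish (\ref{eq:YPA2linearT}) and (\ref{eq:YPA2BearinglinearT}) as the paper intends them; your correct observations, namely $\boldsymbol{Y}_{ijk}^j\left(\vec{\boldsymbol{t}}_{ij}\right)=\left\|\boldsymbol{\alpha}_{ijk}\right\|\boldsymbol{X}_{jk}$ via (\ref{eq:PPO3}) and the positivity of $\boldsymbol{Y}_{ijk(3)}^j$, cover only the chirality-consistent branch.
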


\begin{proof}
    Multiply both sides of (\ref{eq:PA1}) by the coefficient \( \|\boldsymbol{\theta}_{ijk}\| \), obtaining:
    \begin{equation}
        \scalebox{0.73}{$\begin{aligned}\left\|\boldsymbol{\theta}_{i j k}\right\| \boldsymbol{Y}_{i j k}^j & =\boldsymbol{R}_{i j} {\boldsymbol{X}}_{i k}\left\|\boldsymbol{\theta}_{i j k}\right\| \left\| \left[\boldsymbol{X}_{jk}\right]_\times \boldsymbol{t}_{ij}\right\| 
            +\left\| \boldsymbol{\theta}_{i j k} \right\|^2 \boldsymbol{t}_{i j} \\ & =-\left(\boldsymbol{R}_{i j} {\boldsymbol{X}}_{i k}\left(\left[{\boldsymbol{X}}_{j k}\right]_\times \boldsymbol{\theta}_{i j k}\right)^T-\left(\left[{\boldsymbol{X}}_{j k}\right]_\times \boldsymbol{\theta}_{i j k}\right)^T \boldsymbol{R}_{i j} {\boldsymbol{X}}_{i k} \boldsymbol{I}\right) \boldsymbol{t}_{i j} \\ & =\boldsymbol{S}_{ijk}^j \boldsymbol{t}_{i j} ,\end{aligned}$
        }
        \label{eq:YPA1}
    \end{equation}
    where $\boldsymbol{S}_{ijk}^j = -\left[\left[{\boldsymbol{X}}_{j k}\right]_\times \boldsymbol{\theta}_{i j k}\right]_\times\left[\boldsymbol{R}_{i j} {\boldsymbol{X}}_{i k}\right]_\times $. Thus, the two-view pose-only reprojection coordinate \( \boldsymbol{X}_{ijk}^j \) can be expressed as:
    \begin{equation}
        \boldsymbol{X}_{i j k}^j=\frac{\left\|\boldsymbol{\theta}_{i j k}\right\| \boldsymbol{Y}_{i j k}^j}{\left\|\boldsymbol{\theta}_{i j k}\right\| \boldsymbol{Y}_{i j k(3)}^j}=\frac{\boldsymbol{S}_{ijk}^j \boldsymbol{t}_{i j}}{\boldsymbol{e}_3^T \boldsymbol{S}_{ijk}^j \boldsymbol{t}_{i j}}.
        \label{eq:YPA2}
    \end{equation}
    \begin{equation}
        \vec{\boldsymbol{X}}_{ijk}^j=\frac{\left\|\boldsymbol{\theta}_{i j k}\right\| \boldsymbol{Y}_{i j k}^j}{\left\|\boldsymbol{\theta}_{i j k}\right\| \left\|\boldsymbol{Y}_{ijk}^j\right\|}=\frac{\boldsymbol{S}_{ijk}^j \boldsymbol{t}_{i j}}{\left\|\boldsymbol{S}_{ijk}^j \boldsymbol{t}_{i j}\right\|}.
        \label{eq:YPA2bearing}
    \end{equation}
    According to the equality (\ref{eq:fundamental1}), substituting (\ref{eq:YPA2})  into  (\ref{eq:PA3}), the two-view pose-only reprojection residual can be expressed as: 
    \begin{equation}
        \scalebox{1}{$\begin{aligned}
            \boldsymbol{V}_{ijk}^{PA,j}\left(\boldsymbol{R}_{ij}, \boldsymbol{t}_{ij}, {\boldsymbol{X}}_{ik}, {\boldsymbol{X}}_{jk}\right) & = \frac{ \left({\boldsymbol{X}}_{jk}\boldsymbol{e}_3^T - \boldsymbol{I} \right) \boldsymbol{S}_{ijk}^j \boldsymbol{t}_{i j}}{\boldsymbol{e}_3^T \boldsymbol{S}_{ijk}^j \boldsymbol{t}_{i j} }\\ & =\frac{ \left[\boldsymbol{e}_3\right]_\times \left[\boldsymbol{X}_{jk}\right]_\times \boldsymbol{S}_{ijk}^j \boldsymbol{t}_{i j}}{\boldsymbol{e}_3^T \boldsymbol{S}_{ijk}^j \boldsymbol{t}_{i j} }. \end{aligned}$}
    \label{eq:PA3linearT}
\end{equation}

    Substituting (\ref{eq:YPA2bearing})  into  (\ref{eq:PA3bearing}), since the direction of $\boldsymbol{X}_{ik}$ and $\boldsymbol{S}_{ijk}^j \boldsymbol{t}_{i j}$ are the same, the bearing vector of the pose-only reprojection residual can be expressed as: 
    \begin{equation}
        \scalebox{0.9}{$\begin{aligned}\boldsymbol{V}_{ijk,bearing}^{PA,j}\left(\boldsymbol{R}_{ij}, \boldsymbol{t}_{ij}, {\boldsymbol{X}}_{ik}, {\boldsymbol{X}}_{jk}\right) & =\frac{ \boldsymbol{X}_{jk} }{ \left\| \boldsymbol{X}_{jk} \right\|  } - \frac{ \boldsymbol{S}_{ijk}^j \boldsymbol{t}_{i j} }{ \left\| \boldsymbol{S}_{ijk}^j \boldsymbol{t}_{i j} \right\|  } \\ & =\frac{ \left( \boldsymbol{X}_{jk} \boldsymbol{X}_{jk}^T - \left\| \boldsymbol{X}_{jk} \right\|^2 \boldsymbol{I} \right)  \boldsymbol{S}_{ijk}^j \boldsymbol{t}_{i j}}{\left\| \boldsymbol{X}_{jk} \right\|^2 \left\| \boldsymbol{S}_{ijk}^j \boldsymbol{t}_{i j} \right\| } \\ & =\frac{ \left[ \boldsymbol{X}_{jk} \right]_\times \left[ \boldsymbol{X}_{jk} \right]_\times \boldsymbol{S}_{ijk}^j \boldsymbol{t}_{i j}}{\left\| \boldsymbol{X}_{jk} \right\|^2 \left\| \boldsymbol{S}_{ijk}^j \boldsymbol{t}_{i j} \right\| } .\end{aligned}$
        }
        \label{eq:PA3linearTbearing}
    \end{equation}

    From (\ref{eq:PA3linearT}) and (\ref{eq:PA3linearTbearing}), it is evident that the pose-only reprojection residual and its corresponding bearing vector form are invariant to the sign or magnitude of the relative translation, i.e., 
    \begin{equation}
        \boldsymbol{V}_{ijk}^{PA,j}\left(\boldsymbol{R}_{ij}, \boldsymbol{t}_{ij}, {\boldsymbol{X}}_{ik}, {\boldsymbol{X}}_{jk}\right) = \boldsymbol{V}_{ijk}^{PA,j}\left(\boldsymbol{R}_{ij}, \boldsymbol{t}_{ij}^\pm, {\boldsymbol{X}}_{ik}, {\boldsymbol{X}}_{jk}\right),
    \end{equation}
    \begin{equation}
        \scalebox{0.87}{$\boldsymbol{V}_{ijk,bearing}^{PA,j}\left(\boldsymbol{R}_{ij}, \boldsymbol{t}_{ij}, {\boldsymbol{X}}_{ik}, {\boldsymbol{X}}_{jk}\right) = \boldsymbol{V}_{ijk,bearing}^{PA,j}\left(\boldsymbol{R}_{ij}, \boldsymbol{t}_{ij}^\pm, {\boldsymbol{X}}_{ik}, {\boldsymbol{X}}_{jk}\right)$}.
    \end{equation}
    
    Similarly, the same property holds for view $i$, leading to the conclusion that (\ref{eq:YPA2linearT}) and (\ref{eq:YPA2BearinglinearT}) are valid.
\end{proof}

Following this, we will discuss the impact of the solutions of \( {\boldsymbol{P}}^S_{ij} {\boldsymbol{x}} = {\boldsymbol{0}} \) on the pose-only reprojection residual (\ref{eq:PA3}) when the scene structure corresponds to \textit{PR/B/I} and \textit{Holoplane}.

\begin{corollary}\label{corollary:three}
    Under the noise-free condition, the pose-only reprojection residual has the following properties:

    {Case 1:} When the scene structure corresponds to \textit{PR/B/I}, the pose-only reprojection error (\ref{eq:PA3}) and its corresponding bearing vector form (\ref{eq:PA3bearing}) collapse onto the rotation manifold and is independent of translation.

    {Case 2:} When the scene structure corresponds to {\textit{Holoplane}}, although the linear equation \( {\boldsymbol{P}}^S_{ij} {\boldsymbol{x}} = {\boldsymbol{0}} \) cannot uniquely constrain the direction of the relative translation, any solution to \( {\boldsymbol{P}}^S_{ij} {\boldsymbol{x}} = {\boldsymbol{0}} \) satisfies the pose-only reprojection constraint and its corresponding bearing vector form. 
\end{corollary}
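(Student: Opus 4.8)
The plan is to treat the two cases separately, reducing the pose-only reprojection coordinate $\boldsymbol{X}_{ijk}^j$ in each case to an expression that manifestly contains no relative translation (Case 1) or that is invariant to the choice of solution within the translation solution space (Case 2). Throughout I would work with the noise-free observations, so that the residual (\ref{eq:PA3}) and its bearing form (\ref{eq:PA3bearing}) are evaluated against the true $\boldsymbol{X}_{jk}$, and exhibiting the $\boldsymbol{t}$-independence of $\boldsymbol{X}_{ijk}^j$ immediately yields the two claims.

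For Case 1, I would invoke Corollary~\ref{corollary:one} (Case~1), which forces $\boldsymbol{\theta}_{ijk}=\boldsymbol{0}$ for every $k$, equivalently $\boldsymbol{R}_{ij}\boldsymbol{X}_{ik}\parallel\boldsymbol{X}_{jk}$. Substituting $\|\boldsymbol{\theta}_{ijk}\|=0$ into (\ref{eq:PA1}) collapses $\boldsymbol{Y}_{ijk}^j$ to the scalar multiple $\|[\boldsymbol{X}_{jk}]_\times\boldsymbol{t}_{ij}\|\,\boldsymbol{R}_{ij}\boldsymbol{X}_{ik}$, so the common positive factor cancels in the ratios (\ref{eq:PA2}) and (\ref{eq:PA2bearing}), leaving $\boldsymbol{X}_{ijk}^j=\boldsymbol{R}_{ij}\boldsymbol{X}_{ik}/(\boldsymbol{e}_3^T\boldsymbol{R}_{ij}\boldsymbol{X}_{ik})$ and $\vec{\boldsymbol{X}}_{ijk}^j=\boldsymbol{R}_{ij}\boldsymbol{X}_{ik}/\|\boldsymbol{R}_{ij}\boldsymbol{X}_{ik}\|$, both purely functions of $\boldsymbol{R}_{ij}$. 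The only point requiring separate care is the degenerate locus $[\boldsymbol{X}_{jk}]_\times\boldsymbol{t}_{ij}=\boldsymbol{0}$ (pure rotation, or a point on the baseline), where this scalar vanishes and (\ref{eq:PA1}) degenerates; there I would argue directly from the rigid transformation (\ref{eq:RigidTransformation}) that $\boldsymbol{X}_{jk}$ and $\boldsymbol{R}_{ij}\boldsymbol{X}_{ik}$ still share the same bearing, so $\boldsymbol{X}_{ijk}^j=\boldsymbol{R}_{ij}\boldsymbol{X}_{ik}/(\boldsymbol{e}_3^T\boldsymbol{R}_{ij}\boldsymbol{X}_{ik})=\boldsymbol{X}_{jk}$ remains rotation-only.

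For Case 2, I would begin from the identity $\|\boldsymbol{\theta}_{ijk}\|\,\boldsymbol{Y}_{ijk}^j=\boldsymbol{S}_{ijk}^j\boldsymbol{t}_{ij}$ established inside the proof of Corollary~\ref{corollary:corollary3}, with $\boldsymbol{S}_{ijk}^j=-\bigl[[\boldsymbol{X}_{jk}]_\times\boldsymbol{\theta}_{ijk}\bigr]_\times[\boldsymbol{R}_{ij}\boldsymbol{X}_{ik}]_\times$, and replace $\boldsymbol{t}_{ij}$ by an arbitrary solution $\boldsymbol{x}$ of $\boldsymbol{P}_{ij}^S\boldsymbol{x}=\boldsymbol{0}$. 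By Corollary~\ref{corollary:one} (Case~2) this solution space is the plane $\langle\boldsymbol{R}_{ij}\boldsymbol{X}_{ik},\boldsymbol{X}_{jk}\rangle$, so I write $\boldsymbol{x}=a\,\boldsymbol{R}_{ij}\boldsymbol{X}_{ik}+b\,\boldsymbol{X}_{jk}$. The decisive computation is $[\boldsymbol{R}_{ij}\boldsymbol{X}_{ik}]_\times\boldsymbol{x}=b\,\boldsymbol{\theta}_{ijk}$ (the first term is annihilated), followed by the reduction $\bigl[[\boldsymbol{X}_{jk}]_\times\boldsymbol{\theta}_{ijk}\bigr]_\times\boldsymbol{\theta}_{ijk}=-\|\boldsymbol{\theta}_{ijk}\|^2\boldsymbol{X}_{jk}$, which follows from (\ref{eq:fundamental1}) together with the orthogonality $\boldsymbol{\theta}_{ijk}^T\boldsymbol{X}_{jk}=0$. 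These combine to give $\boldsymbol{S}_{ijk}^j\boldsymbol{x}=b\,\|\boldsymbol{\theta}_{ijk}\|^2\boldsymbol{X}_{jk}$, so the ratios (\ref{eq:YPA2}) and (\ref{eq:YPA2bearing}) return exactly $\boldsymbol{X}_{ijk}^j=\boldsymbol{X}_{jk}$ and $\vec{\boldsymbol{X}}_{ijk}^j=\pm\boldsymbol{X}_{jk}/\|\boldsymbol{X}_{jk}\|$ for every $\boldsymbol{x}$ with $b\neq0$, independent of $a$; the residual therefore matches the true one, and the sign ambiguity is precisely the $\boldsymbol{t}^\pm$ invariance already granted by Corollary~\ref{corollary:corollary3}.

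I expect the main obstacle to be the algebraic reduction of $\boldsymbol{S}_{ijk}^j\boldsymbol{x}$ in Case~2, namely showing that the entire two-dimensional solution plane is funneled onto the single direction $\boldsymbol{X}_{jk}$; this is where the nested skew-symmetric product must be simplified carefully using (\ref{eq:fundamental1}) and $\boldsymbol{\theta}_{ijk}\perp\boldsymbol{X}_{jk}$. A secondary subtlety, present in both cases, is the bookkeeping of the degenerate loci where the denominators vanish---the baseline and pure-rotation directions in Case~1, and the line $b=0$ inside the \textit{Holoplane} solution plane in Case~2---which must be excluded or disposed of via the rigid transformation so that the reprojection coordinate stays well defined.
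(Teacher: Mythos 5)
Your proposal is correct, and its skeleton coincides with the paper's own proof: Case~1 is handled identically (collapse of $\boldsymbol{Y}_{ijk}^j$ in (\ref{eq:PA1}) to $\|[\boldsymbol{X}_{jk}]_\times\boldsymbol{t}_{ij}\|\,\boldsymbol{R}_{ij}\boldsymbol{X}_{ik}$ when $\boldsymbol{\theta}_{ijk}=\boldsymbol{0}$, cancellation of the scalar in (\ref{eq:PA2})--(\ref{eq:PA2bearing}), and the baseline subcase settled via the rigid transformation (\ref{eq:RigidTransformation}) exactly as the paper does). The one substantive difference is the mechanism for the key step in Case~2. The paper rewrites the constraint $\boldsymbol{P}^S_{ij}\boldsymbol{x}=\boldsymbol{0}$ in the $\boldsymbol{S}$-form (\ref{eq:PPO1Sform}), left-multiplies by $[\boldsymbol{X}_{jk}]_\times$ to obtain (\ref{eq:PPO1Sform2}), and substitutes into (\ref{eq:PA3linearT}) and (\ref{eq:PA3linearTbearing}) to conclude both residuals vanish; you instead invoke Corollary~\ref{corollary:one} to parametrize the solution plane as $\boldsymbol{x}=a\,\boldsymbol{R}_{ij}\boldsymbol{X}_{ik}+b\,\boldsymbol{X}_{jk}$ and compute $\boldsymbol{S}^j_{ijk}\boldsymbol{x}=b\,\|\boldsymbol{\theta}_{ijk}\|^2\boldsymbol{X}_{jk}$ outright. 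Your algebra checks: $[\boldsymbol{R}_{ij}\boldsymbol{X}_{ik}]_\times\boldsymbol{x}=b\,\boldsymbol{\theta}_{ijk}$, and the double cross product with $\boldsymbol{\theta}_{ijk}\perp\boldsymbol{X}_{jk}$ gives $\bigl[[\boldsymbol{X}_{jk}]_\times\boldsymbol{\theta}_{ijk}\bigr]_\times\boldsymbol{\theta}_{ijk}=-\|\boldsymbol{\theta}_{ijk}\|^2\boldsymbol{X}_{jk}$ via (\ref{eq:fundamental1}). What your variant buys is an explicit proportionality constant, which makes the funneling of the whole plane onto the direction $\boldsymbol{X}_{jk}$ transparent, and it surfaces the degenerate ray $b=0$ (i.e., $\boldsymbol{x}\parallel\boldsymbol{R}_{ij}\boldsymbol{X}_{ik}$), where $\boldsymbol{S}^j_{ijk}\boldsymbol{x}=\boldsymbol{0}$ and the paper's expressions (\ref{eq:PA3linearTx}) and (\ref{eq:PA3linearTBearingx}) are formally $0/0$ --- a locus the published proof silently passes over. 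What the paper's route buys is that it needs no explicit null-space basis and so applies uniformly to every point $k$ without per-point parametrization (note your basis $\{\boldsymbol{R}_{ij}\boldsymbol{X}_{ik},\boldsymbol{X}_{jk}\}$ depends on $k$ and requires $\boldsymbol{\theta}_{ijk}\neq\boldsymbol{0}$; points in a \textit{Holoplane} scene lying on the baseline or at infinity must be routed through your Case~1 argument, a mixed situation neither you nor the paper spells out). Finally, your appeal to Corollary~\ref{corollary:corollary3} for the sign ambiguity in the bearing form is at the same level of rigor as the paper, which likewise absorbs the sign into the standing chirality assumption.
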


\begin{proof}
\mbox{}
\hspace*{\parindent}

    {Case 1:} When the scene structure corresponds to \textit{PR/B/I}, \(\boldsymbol{\theta}_{ijk} = \boldsymbol{0}\). 
    If the 3D point $k$ does not lie on the translation baseline, then $\left\|\boldsymbol{\beta}_{ijk}\right\| \neq 0$.  The pose-only reprojection coordinate (\ref{eq:PA2}) degenerates to
        \begin{equation}
            \begin{aligned}
            \boldsymbol{X}_{ijk}^j & = \left\|\boldsymbol{\beta}_{ijk}\right\| \boldsymbol{R}_{ij} {\boldsymbol{X}}_{ik} /\left(\left\|\boldsymbol{\beta}_{ijk}\right\| \boldsymbol{R}_{ij} {\boldsymbol{X}}_{ik}\right)_{(3)} \\
            & = \boldsymbol{R}_{ij} {\boldsymbol{X}}_{ik} /\left(\boldsymbol{R}_{ij} {\boldsymbol{X}}_{ik}\right)_{(3)} .
            \end{aligned}
        \label{eq:PureRotationReproj}
        \end{equation}
        If the 3D point $k$ lies on the translation baseline, then $\boldsymbol{X}^C_{jk}=a \boldsymbol{t}_{ij}, a\in\mathbb{R}$. Based on (\ref{eq:RigidTransformation}) and (\ref{eq:two-view_imaging_equation}), ${{\boldsymbol{R}}_{ij}}{{\boldsymbol{X}}_{ik}}=(a-1)\boldsymbol{t}_{ij}/z_{ik}^C$. Therefore, the reprojection coordinate $\boldsymbol{X}_{ijk}^j  =\boldsymbol{X}^C_{jk} / \boldsymbol{X}^C_{jk(3)}=\boldsymbol{R}_{ij} {\boldsymbol{X}}_{ik} /\left(\boldsymbol{R}_{ij} {\boldsymbol{X}}_{ik}\right)_{(3)}$. The equation (\ref{eq:PureRotationReproj}) holds regardless of whether the 3D point $k$ lies on the translation baseline.
        Substituting (\ref{eq:PureRotationReproj}) into (\ref{eq:PA3}) and (\ref{eq:PA3bearing}), the pose-only reprojection residual and its corresponding bearing vector form degenerate to:
        \begin{equation}
            \scalebox{0.9}{$\boldsymbol{V}_{ijk}^{PA, j}\left(\boldsymbol{R}_{ij}, \boldsymbol{t}_{ij}, {\boldsymbol{X}}_{ik}, {\boldsymbol{X}}_{jk}\right) = {\boldsymbol{X}}_{jk} - \boldsymbol{R}_{ij} {\boldsymbol{X}}_{ik} /\left(\boldsymbol{R}_{ij} {\boldsymbol{X}}_{ik}\right)_{(3)}$.
        }\label{eq:PureRotationReprojError}
        \end{equation}
        \begin{equation}
            \scalebox{0.9}{$\boldsymbol{V}_{ijk,bearing}^{PA, j}\left(\boldsymbol{R}_{ij}, \boldsymbol{t}_{ij}, {\boldsymbol{X}}_{ik}, {\boldsymbol{X}}_{jk}\right) = \vec{{\boldsymbol{X}}}_{jk} - \boldsymbol{R}_{ij} {\boldsymbol{X}}_{ik} /\left\|\boldsymbol{R}_{ij} {\boldsymbol{X}}_{ik}\right\|$.
        }\label{eq:PureRotationReprojBearingError}
        \end{equation}
        At this point, \(\boldsymbol{V}_{ijk}^{PA, j}\) and $\boldsymbol{V}_{ijk,bearing}^{PA, j}$ become independent of \(\boldsymbol{t}_{ij}\), and can also be expressed as \(\boldsymbol{V}_{ijk}^{PA, j}\left(\boldsymbol{R}_{ij}, \boldsymbol{0}, {\boldsymbol{X}}_{ik}, {\boldsymbol{X}}_{jk}\right)\) and \(\boldsymbol{V}_{ijk,bearing}^{PA, j}\left(\boldsymbol{R}_{ij}, \boldsymbol{0}, {\boldsymbol{X}}_{ik}, {\boldsymbol{X}}_{jk}\right)\). This indicates that under scene structure \textit{PR/B/I}, the pose-only reprojection error collapses directly onto the rotation manifold.

        {Case 2:} When the scene structure corresponds to {\textit{Holoplane}}, \(\boldsymbol{\theta}_{ijk} \neq \boldsymbol{0}\). Based on (\ref{eq:TwoViewPPO}) and (\ref{eq:YPA1}), the linear equation \( {\boldsymbol{P}}^S_{ij} {\boldsymbol{x}} = {\boldsymbol{0}} \) can be expressed as:
        \begin{equation}
            \left\| {{{\boldsymbol{\theta }}_{ijk}}} \right\|\left\| {{{\boldsymbol{\alpha }}_{ijk}}} \right\|{{\boldsymbol{X}}_{jk}}=\boldsymbol{S}_{ijk}^j \boldsymbol{x}.
            \label{eq:PPO1Sform}
        \end{equation}
        Left-multiplying both sides of (\ref{eq:PPO1Sform}) by \( \left[ {\boldsymbol{X}_{jk}} \right]_{\times} \) yields:
        \begin{equation}
            \left[ {\boldsymbol{X}_{jk}} \right]_{\times} \boldsymbol{S}_{ijk}^j \boldsymbol{x}=\boldsymbol{0}.
            \label{eq:PPO1Sform2}
        \end{equation}
        Substituting (\ref{eq:PPO1Sform2}) into the pose-only reprojection residual (\ref{eq:PA3linearT}) and its corresponding bearing vector form (\ref{eq:PA3linearTbearing}), we get:
    \begin{equation}
            \scalebox{1}{$\begin{aligned}
                \boldsymbol{V}_{ijk}^{PA,j}\left(\boldsymbol{R}_{ij}, \boldsymbol{x}, {\boldsymbol{X}}_{ik}, {\boldsymbol{X}}_{jk}\right) & =\frac{ \left[\boldsymbol{e}_3\right]_\times \left[\boldsymbol{X}_{jk}\right]_\times \boldsymbol{S}_{ijk}^j \boldsymbol{x} }{\boldsymbol{e}_3^T \boldsymbol{S}_{ijk}^j \boldsymbol{x} } = \boldsymbol{0}. \end{aligned}$}
        \label{eq:PA3linearTx}
    \end{equation}
    \begin{equation}
        \scalebox{0.95}{$\begin{aligned}
            \boldsymbol{V}_{ijk,bearing}^{PA,j}\left(\boldsymbol{R}_{ij}, \boldsymbol{x}, {\boldsymbol{X}}_{ik}, {\boldsymbol{X}}_{jk}\right) & =\frac{ \left[ \boldsymbol{X}_{jk} \right]_\times \left[\boldsymbol{X}_{jk}\right]_\times \boldsymbol{S}_{ijk}^j \boldsymbol{x} }{\left\| \boldsymbol{X}_{jk} \right\|^2 \boldsymbol{S}_{ijk}^j \boldsymbol{x} } = \boldsymbol{0}. \end{aligned}$}
    \label{eq:PA3linearTBearingx}
\end{equation}

        This implies that when the scene structure corresponds to {\textit{Holoplane}}, although the linear equation \( {\boldsymbol{P}}^S_{ij} {\boldsymbol{x}} = {\boldsymbol{0}} \) cannot uniquely constrain the direction of the relative translation, any solution to \( {\boldsymbol{P}}^S_{ij} {\boldsymbol{x}} = {\boldsymbol{0}} \) satisfies the pose-only reprojection constraint and its corresponding bearing vector form.
    \end{proof}

Subsequently, we employ $\boldsymbol{P}^S_{ij}$ to derive the direction vector of the relative translation. This constitutes the final piece in formalizing the two-view reprojection residuals framework on the rotation manifold.

We can obtain the characteristic polynomial of $\boldsymbol{P}^S_{ij}$:
\begin{equation}
    p \left(\lambda\right) = \lambda ^3 - Tr\left(\boldsymbol{P}^S_{ij}\right) \lambda^2 + Tr\left(adj\left(\boldsymbol{P}^S_{ij}\right)\right) \lambda - det(\boldsymbol{P}^S_{ij}),
    \label{eq:characteristicPolynomial}
\end{equation}
where $adj\left(\boldsymbol{P}^S_{ij}\right)$ is the adjugate matrix of $\boldsymbol{P}^S_{ij}$. By applying Cardano's formula, we set $\lambda=y+{Tr\left(\boldsymbol{P}^S_{ij}\right)}/{3}$, thereby obtaining the simplified expression:
\begin{equation}
    y^3 + py + q = 0,
    \label{eq:CardanoSimple}
\end{equation}
where:
\begin{equation}
    p = Tr\left(adj\left(\boldsymbol{P}^S_{ij}\right)\right) - \frac{Tr\left(\boldsymbol{P}^S_{ij}\right)^2}{3}.
    \label{eq:p_Cardano}
 \end{equation}
\begin{equation}
    q = - det(\boldsymbol{P}^S_{ij}) + \frac{ Tr\left(\boldsymbol{P}^S_{ij}\right) Tr\left(adj\left(\boldsymbol{P}^S_{ij}\right)\right)}{3} - \frac{2Tr\left(\boldsymbol{P}^S_{ij}\right)^3}{27}.
    \label{eq:q_Cardano}
\end{equation}
The discriminant is defined as $ \Delta = \left( q/2 \right)^2 + \left( p/3 \right)^3 $. Since $\boldsymbol{P}^S_{ij}$  is a real symmetric matrix, the equation (\ref{eq:characteristicPolynomial}) has three real roots, and $\Delta \leq 0$. In this case, by introducing a trigonometric substitution $y=-2\sqrt{-{p}/{3}} \cos\theta$ into equation (\ref{eq:CardanoSimple}) and applying the triple-angle formula, we obtain:
\begin{equation}
    \cos 3\theta = \frac{q}{2} \sqrt{-\frac{27}{p^3}}.
\end{equation}
Thus, we can further derive:
\begin{equation}
    \theta =  \frac{\arccos \left(\frac{q}{2} \sqrt{-\frac{27}{p^3}}\right) +2 k \pi}{3}, \quad k = 0, 1, 2.
    \label{eq:car_theta}
\end{equation}
Since the range of  $\phi = \arccos x$  is $[0, \pi]$, it follows that $\cos ((\phi + 2 k \pi)/3) < \cos (\phi/3)$, $k = 1, 2$, which means: 
\begin{equation}
     \lambda_{\min}=-2\sqrt{-\frac{p}{3}} \cos \left(\frac{\arccos \left(\frac{q}{2} \sqrt{-\frac{27}{p^3}}\right)}{3}\right) +\frac{Tr\left(\boldsymbol{P}^S_{ij}\right)}{3}.
    \label{eq:minlambda}
\end{equation}
Since the linear constraint \(\boldsymbol{P}^S_{ij} \boldsymbol{x} = \boldsymbol{0}\) holds, it follows that in the absence of noise, $\lambda_{\min}=0$. 

We decompose matrix $\boldsymbol{P}^S_{ij}$ as \(\left[\boldsymbol{p}_1, \boldsymbol{p}_2, \boldsymbol{p}_3\right]^T\), where \(\boldsymbol{p}_1^T = [p_{11}, p_{12}, p_{13}] $, \(\boldsymbol{p}_2^T = [p_{12}, p_{22}, p_{23}]\), and \(\boldsymbol{p}_3^T = [p_{13}, p_{23}, p_{33}]\) are the row vectors of $\boldsymbol{P}^S_{ij}\).  Define the set \(S = \left\{\boldsymbol{\xi}_1, \boldsymbol{\xi}_2, \boldsymbol{\xi}_3\right\} \subset \mathbb{R}^3\), where
\begin{equation}
    \begin{aligned}
        \boldsymbol{\xi}_1 = \left(\boldsymbol{p}_1^T-\lambda_{\min} \boldsymbol{e}_1\right) \times \left(\boldsymbol{p}_2^T-\lambda_{\min} \boldsymbol{e}_2\right), \\
        \boldsymbol{\xi}_2 = \left(\boldsymbol{p}_1^T-\lambda_{\min} \boldsymbol{e}_1\right) \times \left(\boldsymbol{p}_3^T-\lambda_{\min} \boldsymbol{e}_3\right), \\
        \boldsymbol{\xi}_3 = \left(\boldsymbol{p}_2^T-\lambda_{\min} \boldsymbol{e}_2\right) \times \left(\boldsymbol{p}_3^T-\lambda_{\min} \boldsymbol{e}_3\right).
    \label{eq:anaTset}
    \end{aligned}
\end{equation}
Any non-zero vector in set $S$ is a analytical solution to \(\boldsymbol{P}^S_{ij} \boldsymbol{t}_{ij} = \boldsymbol{0}\). To illustrate the analytical properties of our relative translation solution, we express it as \(\boldsymbol{t}_{ij}\left(\boldsymbol{R}_{ij}, \left\{\tilde{\boldsymbol{X}}_{ik}, \tilde{\boldsymbol{X}}_{jk}\right\}_{k=1, \ldots, m_{ij}}\right)\). Substituting this into (\ref{eq:PA3}) or (\ref{eq:PA3bearing}), we can construct the two-view reprojection residuals on the rotation manifold (\textit{TRRM}) and its corresponding bearing vector form:

\begin{equation}
    \scalebox{0.87}{$\begin{aligned}
        & \boldsymbol{V}_{i j k}^{TRRM}\left(\boldsymbol{R}_{i j},\left\{\tilde{\boldsymbol{X}}_{i k}, \tilde{\boldsymbol{X}}_{j k}\right\}_{k=1, \ldots, m_{ij}}\right) \\
        & =\boldsymbol{V}_{i j k}^{PA}\left(\boldsymbol{R}_{i j}, \boldsymbol{t}_{i j}\left(\boldsymbol{R}_{i j},\left\{\tilde{\boldsymbol{X}}_{i k}, \tilde{\boldsymbol{X}}_{j k}\right\}_{k=1, \ldots m_{i j}}\right), \tilde{\boldsymbol{X}}_{i k}, \tilde{\boldsymbol{X}}_{j k}\right).
    \end{aligned}$
    }\footnote{Note: $\boldsymbol{V}_{i j k}^{TRRM}=\begin{bmatrix}
        \boldsymbol{V}_{ijk}^{TRRM,i} \\
    \boldsymbol{V}_{ijk}^{TRRM,j}
    \end{bmatrix}$, and similarly for $\boldsymbol{V}_{i j k,bearing}^{TRRM}$.}
    \label{eq:ROMres}
\end{equation}   
\begin{equation}
    \scalebox{0.87}{$\begin{aligned}
        & \boldsymbol{V}_{i j k,bearing}^{TRRM}\left(\boldsymbol{R}_{i j},\left\{\tilde{\boldsymbol{X}}_{i k}, \tilde{\boldsymbol{X}}_{j k}\right\}_{k=1, \ldots, m_{ij}}\right) \\
        & =\boldsymbol{V}_{i j k,bearing}^{PA}\left(\boldsymbol{R}_{i j}, \boldsymbol{t}_{i j}\left(\boldsymbol{R}_{i j},\left\{\tilde{\boldsymbol{X}}_{i k}, \tilde{\boldsymbol{X}}_{j k}\right\}_{k=1, \ldots m_{i j}}\right), \tilde{\boldsymbol{X}}_{i k}, \tilde{\boldsymbol{X}}_{j k}\right).
    \end{aligned}$
    }
    \label{eq:ROMresBearing}
\end{equation}  

When the scene structure corresponds to \textit{PR/B/I}, according to Corollary \ref{corollary:three}, the pose-only reprojection error degenerates to a form on the rotation manifold, as shown in (\ref{eq:PureRotationReprojError}) and (\ref{eq:PureRotationReprojBearingError}). This indicates that at this point, \( \boldsymbol{V}_{i j k}^{TRRM} \left( \boldsymbol{R}_{i j}, \left\{\tilde{\boldsymbol{X}}_{i k}, \tilde{\boldsymbol{X}}_{j k}\right\}_{k=1, \ldots, m_{i j}} \right) \) is consistent with pose-only reprojection error in form, so is $\boldsymbol{V}_{i j k,bearing}^{TRRM}\left( \boldsymbol{R}_{i j}, \left\{\tilde{\boldsymbol{X}}_{i k}, \tilde{\boldsymbol{X}}_{j k}\right\}_{k=1, \ldots, m_{ij}} \right)$. 
We recommend using the latter to construct the error function for the relative rotation optimization on the edge \( e_{i j} \):
\begin{equation}
    \scalebox{0.87}{$\begin{aligned}
        & f_{TRRM}\left(\boldsymbol{R}_{i j},\left\{\tilde{\boldsymbol{X}}_{i k}, \tilde{\boldsymbol{X}}_{j k}\right\}_{k=1, \ldots, m_{ij}}\right) \\
        & =\sum_{k=1}^{m_{ij}}\left\|\boldsymbol{V}_{i j k,bearing}^{ {TRRM}}\left(\boldsymbol{R}_{i j},\left\{\tilde{\boldsymbol{X}}_{i k}, \tilde{\boldsymbol{X}}_{j k}\right\}_{k=1, \ldots, m_{ij}}\right)\right\|^2.
    \end{aligned}$
    }
    \label{eq:ROMerr}
\end{equation}
The corresponding minimization problem is formulated as:
\begin{equation}
    \scalebox{1}{$\underset{\boldsymbol{R}_{i j}}{\operatorname{argmin}} \, f_{TRRM} \left( \boldsymbol{R}_{i j}, \left\{\tilde{\boldsymbol{X}}_{i k}, \tilde{\boldsymbol{X}}_{j k}\right\}_{k=1, \ldots, m_{ij}} \right).$
    }
    \label{eq:TRRMopt}
\end{equation}

In particular, the matrix \( {\boldsymbol{P}}^S_{ij} \) in {Theorem \ref{theorem:theorem1}} exhibits a formal consistency with the matrix \( \boldsymbol{M} \) in \cite{Kneip13ICCV}. The difference is that \( \boldsymbol{P}^S_{ij} \) in this paper is derived by using the chirality visual constraint, while \( \boldsymbol{M} \) is derived from the epipolar constraint that omits partial visual geometric information \cite{agarwal2022chiral,QiCai_IJCV}. 
In addition, this paper uses \( \boldsymbol{P}^S_{ij} \) only to provide the direction of the relative translation and construct the reprojection error function (\ref{eq:ROMerr}), whereas \cite{Kneip13ICCV} directly uses the minimum eigenvalue of \( \boldsymbol{M} \) as the optimization objective. Considering that \(\boldsymbol{M}\) is a degenerate form of the imaging equation (\ref{eq:two-view_imaging_equation}), we believe that the proposed method would yield better estimation results.

\section{Multi-view Rotation Estimation}

Multi-view SfM requires utilizing information from more than two matched views. This chapter constructs a global reprojection residual function on the rotation manifold to optimize global rotations.

In the world coordinate system, the projection equation of a 3D scene point is expressed as:

\begin{equation}
    \boldsymbol{X}_{ik} = \frac{\boldsymbol{R}_i\left(\boldsymbol{X}_k^w - \boldsymbol{t}_i\right)}{z_{ik}^C}.
    \label{eq:projectionEquation}
\end{equation}

In the classical SfM pipeline, after globally reconstructing the 3D point \( k \), the global reprojection coordinates \( \boldsymbol{X}_{ik}^r \) can be solved using the right-hand side of equation (\ref{eq:projectionEquation}). Here, \( z_{ik}^C = \boldsymbol{e}_3^T \boldsymbol{R}_i\left(\boldsymbol{X}_k^w - \boldsymbol{t}_i\right) \) represents the depth of the 3D point \( k \) observed by camera \( i \).

In  \cite{QiCai_TPAMI}, the global analytical depth was calculated as \( d_{ik} = \sum_{e_{ij} \in \mathcal{E}_{ik}} \omega_{ijk} d_{ijk}^i \) by performing a linear weighted summation of the analytical depths derived from matched views. Here, \( \mathcal{E}_{ik} \) denotes the set of matched views about camera \( i \) that observe the 3D point \( k \). When \( \|\boldsymbol{\theta}_{ijk}\| \neq 0 \), \( d_{ijk}^i \) and \( d_{ijk}^j \) denote the analytical depths of 3D point \( k \) on matched views \( i \) and \( j \), respectively,
\begin{equation}
    \begin{aligned}
        & d_{ijk}^i = z_{ik}^C = \frac{\|\left[\boldsymbol{X}_{jk}\right]_\times \boldsymbol{t}_{ij}\|}{\|\boldsymbol{\theta}_{ijk}\|}, \\
        & d_{ijk}^j = z_{jk}^C = \frac{\|\left[\boldsymbol{R}_{ij} \boldsymbol{X}_{ik}\right]_\times \boldsymbol{t}_{ij}\|}{\|\boldsymbol{\theta}_{ijk}\|}.
        \label{eq:analyticalDepth}
    \end{aligned}
\end{equation}
The weighting coefficient \(\omega_{ijk}\) is defined as:
\begin{equation}
    \omega_{ijk} = \frac{\|\boldsymbol{\theta}_{ijk}\|}{\omega_{ik}},
    \label{eq:weight}
\end{equation}
where \(\omega_{ik} = \sum_{e_{ij} \in \mathcal{E}_{ik}} \|\boldsymbol{\theta}_{ijk}\|\). When analyzing the relationship between global reprojection coordinates and two-view reprojection coordinates, we adopt the above approach to handling depth information from \cite{QiCai_TPAMI}.

\begin{corollary}\label{corollary:4}
    The reprojection coordinates of the 3D point \(k\) observed by camera \( i \) can be expressed as a linear weighted summation of the reprojection coordinates of \(\mathcal{E}_{ik}\) at camera \(i\):
    \begin{equation}
        \boldsymbol{X}_{ik}^r = \sum_{e_{ij} \in \mathcal{E}_{ik}} \omega_{ijk} \boldsymbol{X}_{ijk}^i.
        \label{eq:linearSumCoord}
    \end{equation}
\end{corollary}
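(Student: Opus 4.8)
The plan is to reduce the claimed vector identity to a single statement about camera-frame coordinates and then ride on the analytical-depth decomposition $d_{ik}=\sum_{e_{ij}\in\mathcal{E}_{ik}}\omega_{ijk}d_{ijk}^i$ borrowed from \cite{QiCai_TPAMI}. First I would record the trivial-but-essential bookkeeping fact that $\sum_{e_{ij}\in\mathcal{E}_{ik}}\omega_{ijk}=1$, which is immediate from $\omega_{ijk}=\|\boldsymbol{\theta}_{ijk}\|/\omega_{ik}$ together with $\omega_{ik}=\sum_{e_{ij}\in\mathcal{E}_{ik}}\|\boldsymbol{\theta}_{ijk}\|$. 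Since every reprojection coordinate appearing on both sides of (\ref{eq:linearSumCoord}) is normalized to have third entry equal to $1$, this weight normalization already forces the third components of the two sides to coincide, so the entire content of the corollary lies in the first two (image-plane) components.

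Next I would express both sides through their unnormalized camera-frame coordinates. On the left, (\ref{eq:projectionEquation}) gives $\boldsymbol{X}_{ik}^r=\boldsymbol{R}_i(\boldsymbol{X}_k^w-\boldsymbol{t}_i)/z_{ik}^C$ with $z_{ik}^C=d_{ik}$, i.e. $\boldsymbol{X}_{ik}^r=\boldsymbol{X}_{ik}^C/d_{ik}$, where $\boldsymbol{X}_{ik}^C$ is the camera-$i$ coordinate of the globally reconstructed point. On the right, each two-view reprojection is $\boldsymbol{X}_{ijk}^i=\boldsymbol{C}_{ij}/d_{ijk}^i$, where $\boldsymbol{C}_{ij}$ is the camera-$i$ coordinate of the point triangulated across edge $e_{ij}$ and $d_{ijk}^i$ is exactly its depth by (\ref{eq:analyticalDepth}), so that $(\boldsymbol{C}_{ij})_{(3)}=d_{ijk}^i$. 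The goal then becomes the single camera-frame identity $\boldsymbol{X}_{ik}^C=d_{ik}\sum_{e_{ij}\in\mathcal{E}_{ik}}\omega_{ijk}\,\boldsymbol{C}_{ij}/d_{ijk}^i$; dividing this by $d_{ik}$ reproduces (\ref{eq:linearSumCoord}). I would first check third-component consistency directly, since $(\boldsymbol{C}_{ij})_{(3)}=d_{ijk}^i$ collapses the right-hand third entry to $d_{ik}\sum_{e_{ij}}\omega_{ijk}=d_{ik}=z_{ik}^C$, matching $(\boldsymbol{X}_{ik}^C)_{(3)}$.

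The hard part will be the normalization step: one must justify that the globally reconstructed camera coordinate really is this particular $\omega_{ijk}$-weighted blend of the per-edge coordinates $\boldsymbol{C}_{ij}$, rather than some other combination, so that the division by the single scalar $d_{ik}$ can be distributed through the sum. The cleanest way I see to close this is to track how the analytical depth $d_{ik}=\sum_{e_{ij}}\omega_{ijk}d_{ijk}^i$ is assembled in \cite{QiCai_TPAMI} and to exploit the clean product $\omega_{ijk}d_{ijk}^i=\|[\boldsymbol{X}_{jk}]_\times\boldsymbol{t}_{ij}\|/\omega_{ik}$, which ties the $\|\boldsymbol{\theta}_{ijk}\|$ factor hidden in $\omega_{ijk}$ to the depth denominator and makes the weighted coordinates telescope. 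As a sanity check, in the noise-free setting each $\boldsymbol{C}_{ij}$ is parallel to the true camera-$i$ coordinate, so every $\boldsymbol{X}_{ijk}^i$ equals the observation and the identity degenerates to $\sum_{e_{ij}}\omega_{ijk}=1$, confirming the construction before the noisy bookkeeping is carried out in full.
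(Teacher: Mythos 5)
Your outline follows the same route as the paper's own proof (back-projection, the analytic-depth decomposition $d_{ik}=\sum_{e_{ij}\in\mathcal{E}_{ik}}\omega_{ijk}d_{ijk}^i$, the normalization $\sum_{e_{ij}\in\mathcal{E}_{ik}}\omega_{ijk}=1$, and per-edge camera-frame coordinates), but the step you explicitly label ``the hard part'' \emph{is} the proof, and you never execute it. The paper closes it with two concrete moves that you only gesture at. First, each per-edge back-projection is swapped from view $i$'s ray onto view $j$'s ray via the identity $d_{ijk}^i\boldsymbol{R}_i^T\boldsymbol{X}_{ik}+\boldsymbol{t}_i=d_{ijk}^j\boldsymbol{R}_j^T\boldsymbol{X}_{jk}+\boldsymbol{t}_j$, giving (\ref{eq:backProjection2}); this is what makes the decomposition non-vacuous, and it also pins down what your $\boldsymbol{C}_{ij}$ must be, namely the reconstruction along view $j$'s ray, $\boldsymbol{C}_{ij}=d_{ijk}^j\boldsymbol{R}_{ji}\boldsymbol{X}_{jk}+\boldsymbol{t}_{ji}$. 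Your phrase ``the point triangulated across edge $e_{ij}$'' leaves this ambiguous, and the ambiguity is fatal: if $\boldsymbol{C}_{ij}$ is built from camera $i$'s own ray, the weighted sum collapses identically to $\boldsymbol{X}_{ik}$ and nothing is proved. Second, each term $\left(d_{ijk}^j\boldsymbol{R}_{ji}\boldsymbol{X}_{jk}+\boldsymbol{t}_{ji}\right)/d_{ijk}^i$ must be shown to equal the two-view pose-only coordinate $\boldsymbol{X}_{ijk}^i$ of (\ref{eq:PA2}); the paper does this by multiplying numerator and denominator by $\|\boldsymbol{\theta}_{ijk}\|$ to recover the $\boldsymbol{Y}$-form of (\ref{eq:PA1}), as in (\ref{eq:backProjection3})--(\ref{eq:backProjection5}). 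Your remark about the ``clean product'' $\omega_{ijk}d_{ijk}^i=\|[\boldsymbol{X}_{jk}]_\times\boldsymbol{t}_{ij}\|/\omega_{ik}$ points in that direction, but the telescoping is asserted rather than derived; deferring it to ``tracking how the analytical depth is assembled in \cite{QiCai_TPAMI}'' leaves the central computation undone.

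You also omit the degenerate edges with $\|\boldsymbol{\theta}_{ijk}\|=0$ (pure rotation, points on the baseline, or at infinity). There the analytic depth $d_{ijk}^i$ in (\ref{eq:analyticalDepth}) is undefined, your per-edge representation $\boldsymbol{X}_{ijk}^i=\boldsymbol{C}_{ij}/d_{ijk}^i$ breaks down, and the two-view coordinate instead degenerates to the pure-rotation form (\ref{eq:PureRotationReproj}); the paper's proof handles this explicitly by noting that $\omega_{ijk}=\|\boldsymbol{\theta}_{ijk}\|/\omega_{ik}=0$ annihilates exactly those terms, so that (\ref{eq:linearSumCoord}) is complete over all matched views. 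Without that case analysis the claimed identity is not even well-formed on such edges. Your correct observations --- the weight normalization, the third-component consistency, and the noise-free sanity check showing the identity collapses to $\sum_{e_{ij}\in\mathcal{E}_{ik}}\omega_{ijk}=1$ --- are genuine but peripheral; they do not substitute for the ray-swap identity and the $\|\boldsymbol{\theta}_{ijk}\|$-rescaling that constitute the actual argument.
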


\begin{proof}
    Using the projection equation for 3D points (\ref{eq:projectionEquation}), the reconstructed coordinates can be expressed as the back-projection of the image points:
    \begin{equation}
        \boldsymbol{X}_k^w = z_{ik}^C \boldsymbol{R}_i^T \boldsymbol{X}_{ik} + \boldsymbol{t}_i.
        \label{eq:backProjection}
    \end{equation}

    For matched views \(i, j\) where \( \|\boldsymbol{\theta}_{ijk}\| \neq 0\), replacing \(z_{ik}^C\) with the global analytical depth \(d_{ik}\), (\ref{eq:backProjection}) can be rewritten as:
    \begin{equation}
        \begin{aligned}
        \boldsymbol{X}_k^w & = d_{ik} \boldsymbol{R}_i^T \boldsymbol{X}_{ik} + \boldsymbol{t}_i \\
        & = \sum_{e_{ij} \in \mathcal{E}_{ik}} \omega_{ijk} \left(d_{ijk}^i \boldsymbol{R}_i^T \boldsymbol{X}_{ik} + \boldsymbol{t}_i\right). \\
        &= \sum_{e_{ij} \in \mathcal{E}_{ik}} \omega_{ijk} \left(d_{ijk}^j \boldsymbol{R}_j^T \boldsymbol{X}_{jk} + \boldsymbol{t}_j\right).
        \end{aligned}
        \label{eq:backProjection2}
    \end{equation}
    Substituting (\ref{eq:backProjection2}) into (\ref{eq:projectionEquation}), $\boldsymbol{X}_{ik}^r$ can be expressed as
    \begin{equation}
        \begin{aligned}
            \boldsymbol{X}_{ik}^r & = \frac{\boldsymbol{R}_i \left(\sum_{e_{ij} \in \mathcal{E}_{ik}} \omega_{ijk} \left(d_{ijk}^j \boldsymbol{R}_j^T \boldsymbol{X}_{jk} + \boldsymbol{t}_j\right) - \boldsymbol{t}_i\right)}{z_{ik}^C} \\
            & = \sum_{e_{ij} \in \mathcal{E}_{ik}} \omega_{ijk} \frac{d_{ijk}^j \boldsymbol{R}_{ji} \boldsymbol{X}_{jk} + \boldsymbol{t}_{ji}}{z_{ik}^C}.
        \end{aligned}
        \label{eq:backProjection3}
    \end{equation}
    Substituting (\ref{eq:analyticalDepth}) into (\ref{eq:backProjection3}) and multiplying both the numerator and denominator of the fraction by \( \|\boldsymbol{\theta}_{ijk}\| \), the global reprojection coordinates can be expressed in the following form:
    \begin{equation}
        \begin{aligned}
            \boldsymbol{X}_{ik}^r = \sum_{e_{ij} \in \mathcal{E}_{ik}} \omega_{ijk} \frac{\|\left[\boldsymbol{X}_{ik}\right]_\times \boldsymbol{t}_{ji}\| \boldsymbol{R}_{ji} \boldsymbol{X}_{jk} + \|\boldsymbol{\theta}_{ijk}\| \boldsymbol{t}_{ji}}{\|\left[\boldsymbol{R}_{ji} \boldsymbol{X}_{jk}\right]_\times \boldsymbol{t}_{ji}\|}.
        \end{aligned}
        \label{eq:backProjection4}
    \end{equation}
    Based on the derivations in (\ref{eq:PA1}) and (\ref{eq:PA2}), (\ref{eq:backProjection4}) can be expressed as
    \begin{equation}
        \boldsymbol{X}_{ik}^r = \sum_{e_{ij} \in \mathcal{E}_{ik}} \omega_{ijk} \boldsymbol{X}_{ijk}^i.
        \label{eq:backProjection5}
    \end{equation}

    For matched views \( i, j \) with \( \|\boldsymbol{\theta}_{ijk}\| = 0 \), the two-view reprojection coordinates \( \boldsymbol{X}_{ijk}^i \) degenerate to (\ref{eq:PureRotationReproj}). Since \( \omega_{ijk} = \|\boldsymbol{\theta}_{ijk}\| / \omega_{ik} = 0 \), those matched views with \( \|\boldsymbol{\theta}_{ijk}\| = 0 \) do not affect the value of (\ref{eq:linearSumCoord}). Therefore, (\ref{eq:linearSumCoord}) is complete for all matched views.
\end{proof}

Under the influence of noise observations, the reprojection residual of BA in classical SfMs is given as
\begin{equation}
    \begin{aligned}
        \boldsymbol{V}_{ik}^{BA}\left(\boldsymbol{R}_i, \boldsymbol{t}_i, \boldsymbol{X}_k^w, \tilde{\boldsymbol{X}}_{ik}\right) &= \boldsymbol{X}_{ik}^r - \tilde{\boldsymbol{X}}_{ik} \\
        &= \frac{\boldsymbol{R}_i\left(\boldsymbol{X}_k^w - \boldsymbol{t}_i\right)}{\boldsymbol{e}_3^T \boldsymbol{R}_i\left(\boldsymbol{X}_k^w - \boldsymbol{t}_i\right)} - \tilde{\boldsymbol{X}}_{ik}.
    \end{aligned}
    \label{eq:BAresidual}
\end{equation}
The global BA error function can be expressed as
\begin{equation}
    \begin{aligned}
    &f_{BA}\left(\left\{\boldsymbol{X}_k^w\right\}_{k=1,\ldots,m}, \left\{\left[\boldsymbol{R}_i \mid \boldsymbol{t}_i\right]\right\}_{i=1,\ldots,n}, \left\{\tilde{\boldsymbol{X}}_{ik}\right\}_{\substack{i=1,\ldots,n \\ k=1,\ldots,m}}\right) \\
    &= \sum_{k=1}^m \sum_{i=1}^n \left\|\boldsymbol{V}_{ik}^{BA}\left(\boldsymbol{R}_i, \boldsymbol{t}_i, \boldsymbol{X}_k^w, \tilde{\boldsymbol{X}}_{ik}\right)\right\|^2.
    \end{aligned}
    \label{eq:BAErrorFun}
\end{equation}

Based on {Corollary \ref{corollary:4}}, substituting the reprojection coordinates in (\ref{eq:BAresidual}) with the weighted two-view reprojection coordinates (\ref{eq:backProjection5}), the global reprojection residual on the rotation manifold (\textit{GRRM}) is given as:
\begin{equation}
    \begin{aligned}
    \boldsymbol{V}_{ik}^{{GRRM}} &= \left(\sum_{e_{ij} \in \mathcal{E}_{ik}} \omega_{ijk} \boldsymbol{X}_{ijk}^i\right) - \tilde{\boldsymbol{X}}_{ik} \\
    &= \sum_{e_{ij} \in \mathcal{E}_{ik}} \omega_{ijk} \left(\boldsymbol{X}_{ijk}^i - \tilde{\boldsymbol{X}}_{ik}\right).
\end{aligned}
\label{eq:GRRMRes1}
\end{equation}
The global reprojection residual (\ref{eq:GRRMRes1}) is demonstrated in Fig.~\ref{fig:globalresidual}.

\begin{figure}[t] 
    \centering 
    \includegraphics[width=0.48\textwidth]{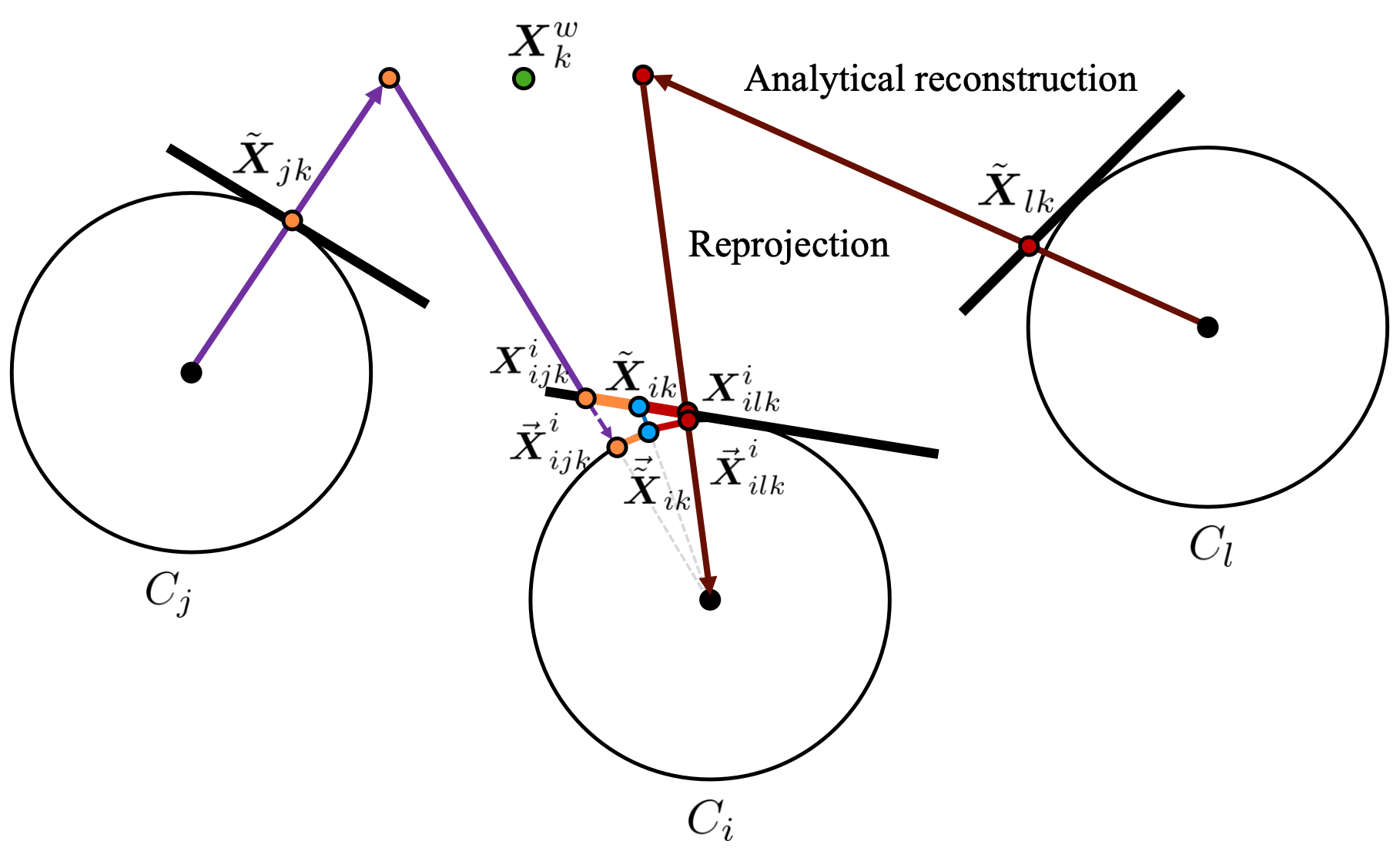} 
    \caption{Generation mechanism of $\boldsymbol{V}_{ik}^{{GRRM}}$ (using three views as an example). $\boldsymbol{X}_{ijk}^j$ and $\boldsymbol{X}_{ilk}^i$ represent pose-only reprojection coordinates on camera $i$ of observations $\tilde{\boldsymbol{X}}_{jk}$ and $\tilde{\boldsymbol{X}}_{lk}$ from cameras $j$ and $l$, respectively. Pose-only reprojection residual on imaging plane of camera $i$ and its corresponding bearing vector on unit sphere are illustrated by orange and red lines, respectively. $\boldsymbol{V}_{ik}^{{GRRM}}$ is formed through a linear weighted summation of a series of such pose-only reprojection residuals.} 
    \label{fig:globalresidual} 
\end{figure}

Using (\ref{eq:PA3}) and (\ref{eq:ROMres}), we can express global reprojection residual (\ref{eq:GRRMRes1}) in the form of the rotation manifold:
\begin{equation}
    \scalebox{0.86}{$\boldsymbol{V}_{ik}^{{GRRM}} = \sum_{e_{ij} \in \mathcal{E}_{ik}}\omega_{ijk} \boldsymbol{V}_{ijk}^{{TRRM}, i}\left(\boldsymbol{R}_i \boldsymbol{R}_j^T, \left\{\tilde{\boldsymbol{X}}_{ik}, \tilde{\boldsymbol{X}}_{jk}\right\}_{k=1,\ldots,m_{ij}}\right).$}
    \label{eq:GRRMRes2}
\end{equation}  
The corresponding error function can be expressed as
\begin{equation}
    \scalebox{0.92}{$\begin{aligned}
        f_{{GRRM}}\left(\left\{\boldsymbol{R}_i\right\}_{i=1,\ldots,n}, \left\{\tilde{\boldsymbol{X}}_{ik}\right\}_{\substack{i=1,\ldots,n \\ k=1,\ldots,m}}\right) = \sum_{k=1}^m \sum_{i=1}^n \left\|\boldsymbol{V}_{ik}^{{GRRM}}\right\|^2.
    \end{aligned}$}
\label{eq:GRRMerror}
\end{equation}
Finally, the minimization problem of global rotations can be obtained as
\begin{equation}
    \underset{\{\boldsymbol{R}_i\}_{i=1,\ldots,n}}{\arg\min} \, f_{{GRRM}}\left(\{\boldsymbol{R}_i\}_{i=1,\ldots,n}, \left\{\tilde{\boldsymbol{X}}_{ik}\right\}_{\substack{i=1,\ldots,n \\ k=1,\ldots,m}}\right).
\end{equation}

\section{Algorithm Implementation}

\subsection{Scene Identification}

Based on {Corollary 1} and {Corollary 2}, Scene structures affect the solution space of the observation matrix, leading to potential degradation of the optimization function. It is therefore necessary to identify specific scene structures in advance. Specifically, we need to identify \textit{PR/B/I} first (e.g. by the method in \cite{QiCai_IJCV}). With detection matrices $\boldsymbol{G}_{ij}^i$ and $\boldsymbol{G}_{ij}^j$, we can calculate the corresponding minimum eigenvalues $\lambda_{ij}^i$ and $\lambda_{ij}^j$. Let the detection value for \textit{RotationSingular} structures be

$$
v_{\text{rs}} = \max \left(\lambda_{ij}^i / m_{ij},\lambda_{ij}^j / m_{ij}\right). 
$$

According to {Corollary 2}, if $v_{\text{rs}}$ is less than a prescribed $threshold$, the two-view scene structure can be classified as \textit{RotationSingular} and should be excluded, and further details will be provided in the experiments below.

\subsection{Camera Rotation Estimation}

Our rotation estimation framework consists of two parts: two-view optimization and multi-view optimization, both implemented within the framework of LM optimization. Specifically, Algorithm 1 details the computation process of \textit{TRRM}, while Algorithm 2 explains the implementation of two-view rotation optimization algorithm (\textit{TRRM} optimization algorithm).

\begin{algorithm}[H]
    \caption{Two-View Reprojection Residual Computation}
    \label{alg:ROMComputation}
    \begin{algorithmic}[1]
    \Require Relative rotation $\boldsymbol{R}_{ij}$, observation set $P_{ij} = \left\{\boldsymbol{\tilde{X}}_{ik}, \boldsymbol{\tilde{X}}_{jk}\mid k = 1, \ldots, m_{ij}\right\}$.
    \Ensure  $\boldsymbol{V}_{i j k,bearing}^{TRRM}$.

    \If{pure rotation motion} 
        \State Compute $\boldsymbol{V}_{i j k,bearing}^{TRRM}$ using (\ref{eq:PureRotationReprojBearingError}).
    \Else
        \State Compute $\boldsymbol{P}_{ij}^S$ using (\ref{eq:observeDerive2}), or the fast method in \cite{Kneip13ICCV}.
        \State Compute $\boldsymbol{t}_{ij}^{\pm}$ using (\ref{eq:minlambda}) and (\ref{eq:anaTset}).
        \State Compute $\boldsymbol{V}_{i j k,bearing}^{TRRM}$ using (\ref{eq:ROMresBearing}).
    \EndIf
    
    \State \textbf{Return} $\boldsymbol{V}_{i j k,bearing}^{TRRM}$.
    \end{algorithmic}
    \end{algorithm}
\begin{algorithm}[H]
    \caption{Two-view Rotation Optimization}
    \label{alg:relativeOpt}
    \begin{algorithmic}[1]
    \Require Initial relative rotation $\boldsymbol{R}_{ij}$, observation set $P_{ij} = \left\{\boldsymbol{\tilde{X}}_{ik}, \boldsymbol{\tilde{X}}_{jk}\mid k = 1, \ldots, m_{ij}\right\}$.
    \Ensure Optimized relative rotation $\boldsymbol{R}_{ij}$.
    
    \State Initialize parameters for LM algorithm: damping factor $\lambda$, convergence threshold $\epsilon$, max iterations $k_{\max}$, and $k \gets 0$.
    \If{scene structure classified as \textit{Holoplane}}
        \State \textbf{Return}.
    \Else
        \Repeat
            \State Compute (\ref{eq:ROMerr}), with each $\boldsymbol{V}_{i j k,bearing}^{TRRM}$ from Alg.\ref{alg:ROMComputation}.
            \State Compute the corresponding Jacobian.
            \State Solve LM linear system to update parameters.
            \State $k \gets k + 1$.
        \Until{$Convergence \text{ or } k \geq k_{\max}$}

    \EndIf
    \State \textbf{Return} Optimized $\boldsymbol{R}_{ij}$.
    
    \end{algorithmic}
    \end{algorithm}

    After completing two-view rotation optimization, it is necessary to initialize the global rotation of cameras. For which, various existing algorithms can be selected. For instance, the algorithm \cite{ROBA} employs rotation averaging method by Chatterjee \cite{ChatterjeePAMI2018} as the initialization method, while other options, such as Hartley and Govindu \cite{HartleyL1,Govindu004}, are also available.

    In Algorithm 3, we explain the computation process of \textit{GRRM}, and Algorithm 4 further describes the implementation of multi-view rotation optimization (\textit{GRRM} optimization algorithm).

    \begin{algorithm}[H]
        \caption{Global Reprojection Residual Computation}
        \label{alg:GRRMComputation}
        \begin{algorithmic}[1]
        \Require Camera rotation $\left\{\boldsymbol{R}_{i}, \boldsymbol{R}_{j} \mid e_{ij} \in \mathcal{E}_{ik}\right\}$. Observation set $P_{ik} = \left\{\boldsymbol{\tilde{X}}_{ik}, \boldsymbol{\tilde{X}}_{jk}\mid e_{ij} \in \mathcal{E}_{ik}\right\}$.
        \Ensure $\boldsymbol{V}_{ik}^{{GRRM}}$.
        
        \State Compute $\left\{ \boldsymbol{R}_{ij} \gets \boldsymbol{R}_j \boldsymbol{R}_i^T \mid e_{ij} \in \mathcal{E}_{ik} \right\}$.
        \State Compute $\left\{ \boldsymbol{\theta }_{ijk} \mid e_{ij} \in \mathcal{E}_{ik} \right\}$ using (\ref{eq:theta}).
        \State Compute $\left\{ \omega_{ijk} \mid e_{ij} \in \mathcal{E}_{ik} \right\}$ using (\ref{eq:weight}).
        \State Compute $\left\{ \boldsymbol{V}_{i j k}^{TRRM,i} \mid e_{ij} \in \mathcal{E}_{ik} \right\}$ using Alg.\ref{alg:ROMComputation}.
        \State Compute $\boldsymbol{V}_{ik}^{{GRRM}}$ using (\ref{eq:GRRMRes2}).
        \State \textbf{Return} $\boldsymbol{V}_{ik}^{{GRRM}}$.
        \end{algorithmic}
\end{algorithm}

\begin{algorithm}[H]
    \caption{Multi-view Rotation Optimization}
    \label{alg:globalOpt}
    \begin{algorithmic}[1]
    \Require Initial global rotation $\left\{ {{\boldsymbol{R}}_i\left| {i = 1,...,n} \right.} \right\}$, observation set ${P_o} = \left\{ {{{\boldsymbol{\tilde{X}}}_{ik}} \left| {i = 1,...,n,k = 1,...,m} \right.} \right\}$.
    \Ensure Optimized global rotation $\left\{ {{\boldsymbol{R}}_i\left| {i = 1,...,n} \right.} \right\}$.
    
    \State Initialize parameters for LM algorithm: damping factor $\lambda$, convergence threshold $\epsilon$, max iterations $k_{\max}$, and $k \gets 0$.
    \Repeat
    \State Compute (\ref{eq:GRRMerror}), with each $\boldsymbol{V}_{ik}^{{GRRM}}$ from Alg.\ref{alg:GRRMComputation}.
    \State Compute the corresponding Jacobian.
    \State Solve LM linear system to update parameters.
    \State $k \gets k + 1$.
    \Until{$Convergence \text{ or } k \geq k_{\max}$}
    \State \textbf{Return} Optimized $\left\{ {{\boldsymbol{R}}_i\left| {i = 1,...,n} \right.} \right\}$.
    
    \end{algorithmic}
\end{algorithm}

\section{Experiments}

The experiments are performed on a 64-bit Ubuntu 22.04.3 LTS system platform, equipped with 64GB memory and Gen Intel i9-13900K processor. The experiments are carried out on both simulation and real-world datasets, which include three parts: scene recognition, two-view rotation optimization, and multi-view rotation estimation.

\begin{table*}[htbp]
    \caption{Relative Rotation Initialization Errors under \textit{RotationSingular} Structures}
    \label{table:initialRotationCase2}
    \renewcommand\arraystretch{1.5}
    \resizebox{\textwidth}{!}{
        \begin{tabular}{lccccccc}
            \toprule
            \textbf{Scene structure} & \textbf{5pt (Stewenius)} & \textbf{5pt (Nister)} & \textbf{5pt (Kneip)} & \textbf{7pt} & \textbf{8pt} & \textbf{EigenSolver (10pts)} & \textbf{Nonlin. opt. (10pts)} \\ \midrule
            {\textit{Holoplane}} & 0.6045 & 0.6493 & \textbf{NAN} & 1.6444 & 1.6260 & 0.0610 & 0.3327 \\
            {\textit{RankRegular-line}} & 0.0765 & 0.1054 & \textbf{NAN} & 1.0057 & 1.6409 & 0.0096 & 0.1055 \\
            {\textit{Standard}} & 1.9086$\times 10^{-13}$ & 2.3884$\times 10^{-15}$ & 4.9698$\times 10^{-6}$ & 8.8992$\times 10^{-15}$ & 2.0569$\times 10^{-15}$ & 1.8397$\times 10^{-14}$ & 2.8583$\times 10^{-6}$ \\
            \bottomrule
            \end{tabular}
    }
    \vspace{2mm}
    \par\noindent  
    
    {\footnotesize
    \textit{Note:} 
    Kneip 5-point algorithm outputs an empty result when it fails to compute a solution, indicated by NAN.
    }
\end{table*}

\subsection{Default Settings}
\hspace*{\parindent}
\textbf{Simulation Configurations.}
In the two-view experiments, we use the simulation scene structure \cite{openGVdefaultScene} provided by OpenGV library as the default scene structure and name it \textit{Standard}, where one camera is fixed at the coordinate origin with its rotation set as the identity matrix, the other camera's position is randomly distributed within a spherical space of radius 2 meters centered at the origin, and the relative rotation is generated by sequentially constructing random rotation matrices around the $x-$axis, $y-$axis, and $z-$axis, with rotation angles uniformly sampled within $\left[-0.5, 0.5\right]$ radians. 3D points are distributed inside a sphere centered at the origin with a radius of 20 meters, as shown in Fig. \ref{fig:case3}. The observation noise follows a uniform distribution \( U(0, 5) \) in pixels. Additionally, each experiment ensures that 1000 3D points obeying the chirality constraint are observed by the matching cameras, with images of 960×960 pixels. To ensure the reliability of the experimental results, we perform 5000 Monte Carlo runs for each set of experiments and calculate the mean value of the results.
In the multi-view experiments, we employ our self-developed platform. To simulate realistic large-scale scenarios, the simulation setup includes 100 cameras and 10,000 3D points obeying the chirality constraint, with a fixed distance of 10 meters between adjacent cameras. The image pixel size is consistent with that of the two-view experiments. And we set the number of Monte Carlo experiments to 100.

\textbf{Real-world Dataset.}
We select the Strecha dataset \cite{StrechaDataset} with high-precision camera poses ground truth for evaluating the accuracy and robustness of camera pose estimation algorithms.

\textbf{Pre-computation Configurations.}
In simulation experiments, we need to initialize the relative pose. To ensure the solution quality, we use the initialization algorithms (mainstream Stewénius 5-point method \cite{STEWENIUS2006284} as the default algorithm in scene identification and rotation optimization experiments) combined with the RANSAC framework provided by the OpenGV library \cite{KneipOpenGV}. Our experiments are performed based on the filtered inliers by RANSAC. 
Moreover, following the configuration in \cite{sunghoon2023roba}, our \textit{GRRM} algorithm employs Chatterjee's rotation averaging algorithm \cite{ChatterjeePAMI2018} as the initial input.
In real-world dataset experiments, we employ the algorithms provided by OpenMVG platform \cite{OpenMVGPierre}, such as SIFT \cite{SIFT} algorithm for feature extraction, Cascade hashing \cite{cascadeMatch} for view matching, and 5-point algorithm \cite{STEWENIUS2006284,NisterRelative} combined with the AC-RANSAC \cite{ACRANSAC} framework for relative pose initialization. 
Additionally, while performing global optimization or generating BAL files,
the LiGT algorithm \cite{QiCai_TPAMI} and triangulation reconstruction algorithm \cite{hartleyTriangulation1995} are used to compute global camera translations and 3D point coordinates, respectively.

\textbf{Accuracy Evaluation.}
In simulation experiments, the camera rotation error (rads) is quantified as follows:
\begin{equation}
    \epsilon^{{error}} = \arccos\left(\frac{{trace}\left(\boldsymbol{R}_{gt}^T \boldsymbol{R}_e\right) - 1}{2}\right),
    \label{eq:rotationerrorcalculate}
\end{equation}
where the rotation estimate is \( \boldsymbol{R}_e \) and the ground truth rotation is \( \boldsymbol{R}_{gt} \).
In real-world experiments, the rotation error (degs) in the two-view experiments is calculated with (\ref{eq:rotationerrorcalculate}), while the multi-view experiments use evaluation module \cite{openMVG_evalQuality} provided by OpenMVG platform for computation.

\begin{figure}[t]
    \centering
    \begin{subfigure}[t]{0.48\linewidth} 
        \captionsetup{justification=centering, singlelinecheck=false, position=below} 
        \includegraphics[width=\linewidth]{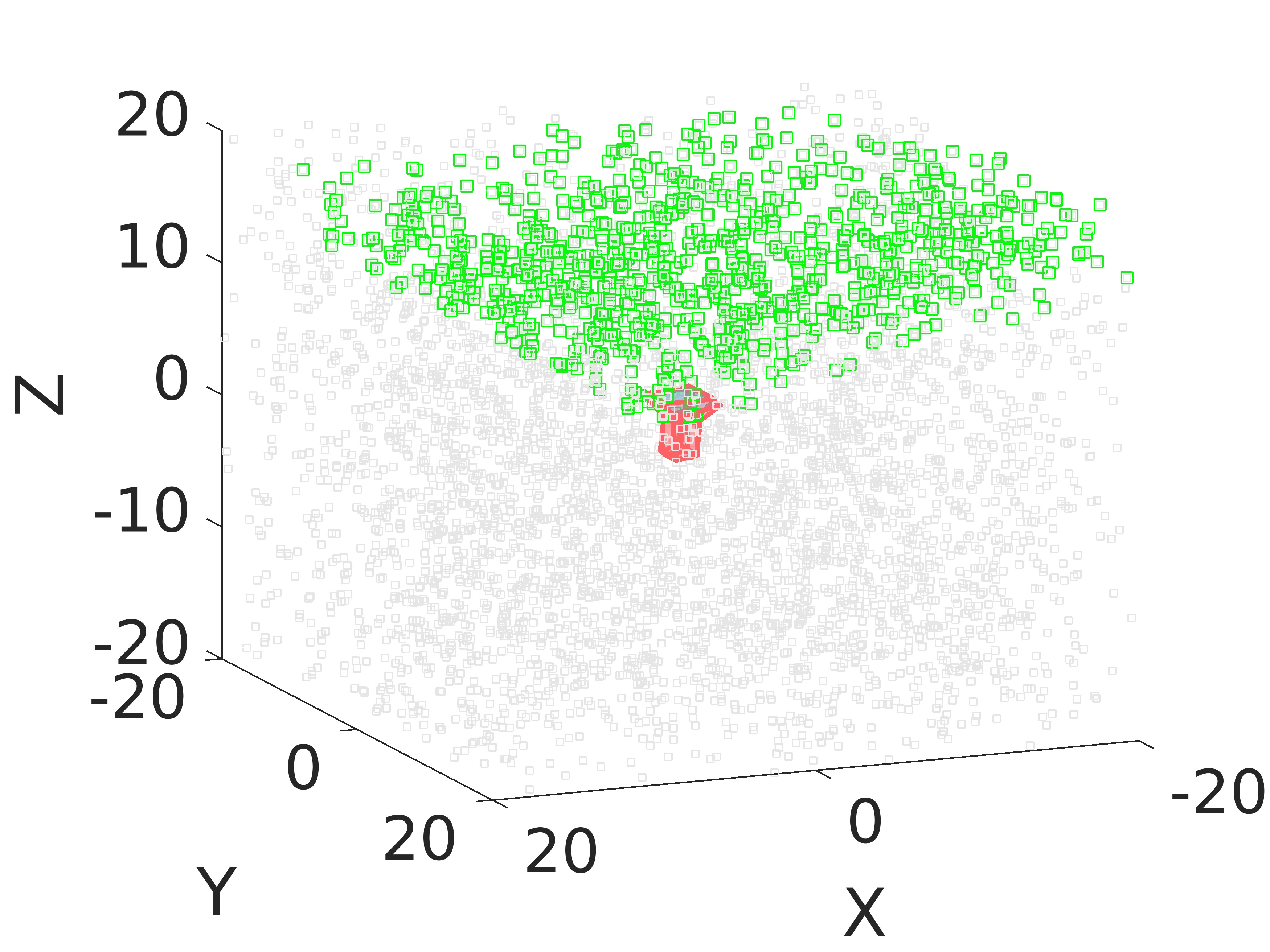}
        \caption{PureRotation}

        \label{fig:case1}
    \end{subfigure}
    \hfill
    \begin{subfigure}[t]{0.48\linewidth}
        \captionsetup{justification=centering, singlelinecheck=false, position=below} 
        \includegraphics[width=\linewidth]{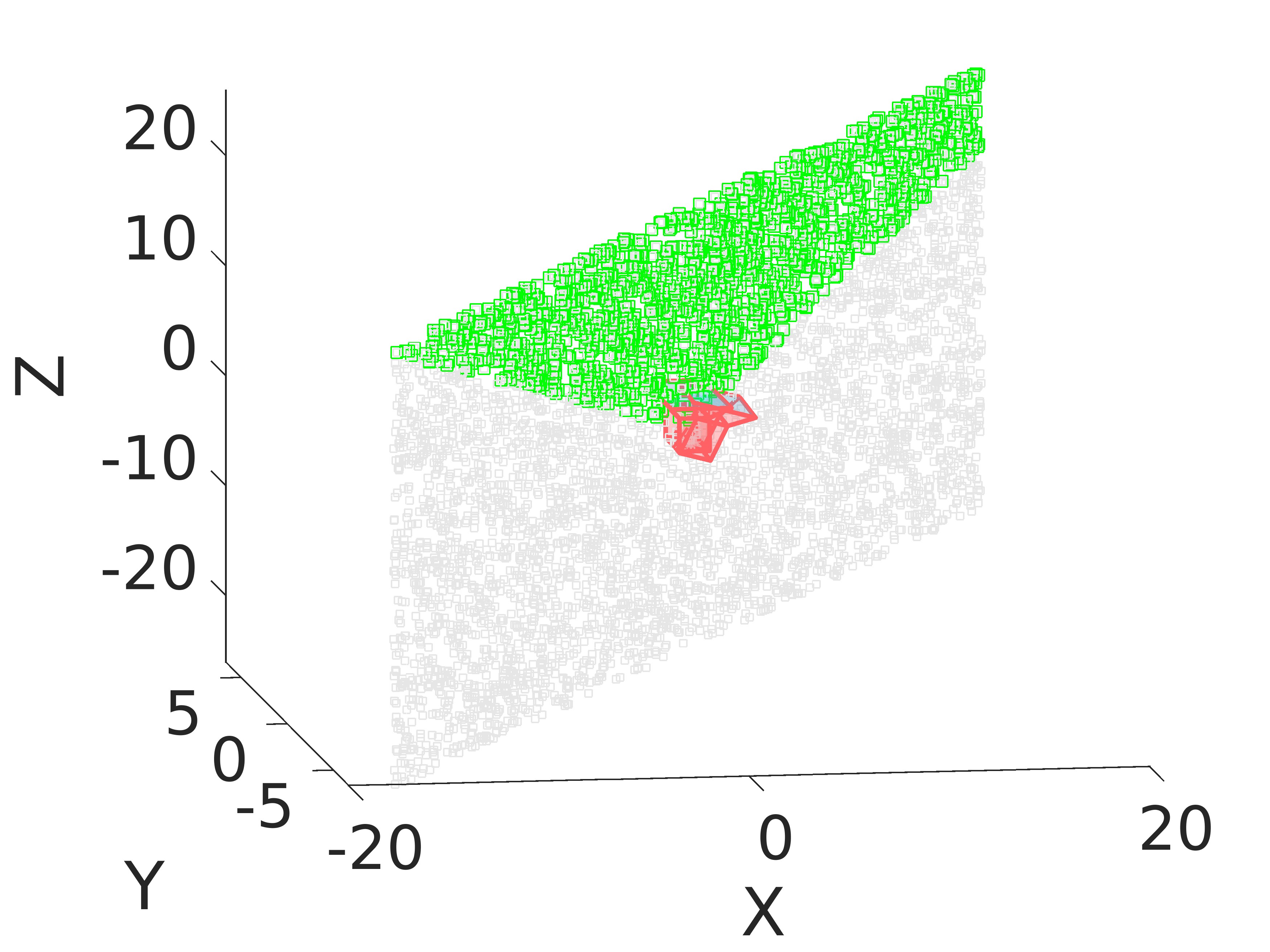}
        \caption{Holoplane} 
        \label{fig:case2}
    \end{subfigure}

    \begin{subfigure}[t]{0.48\linewidth}
        \captionsetup{justification=centering, singlelinecheck=false, position=below} 
        \includegraphics[width=\linewidth]{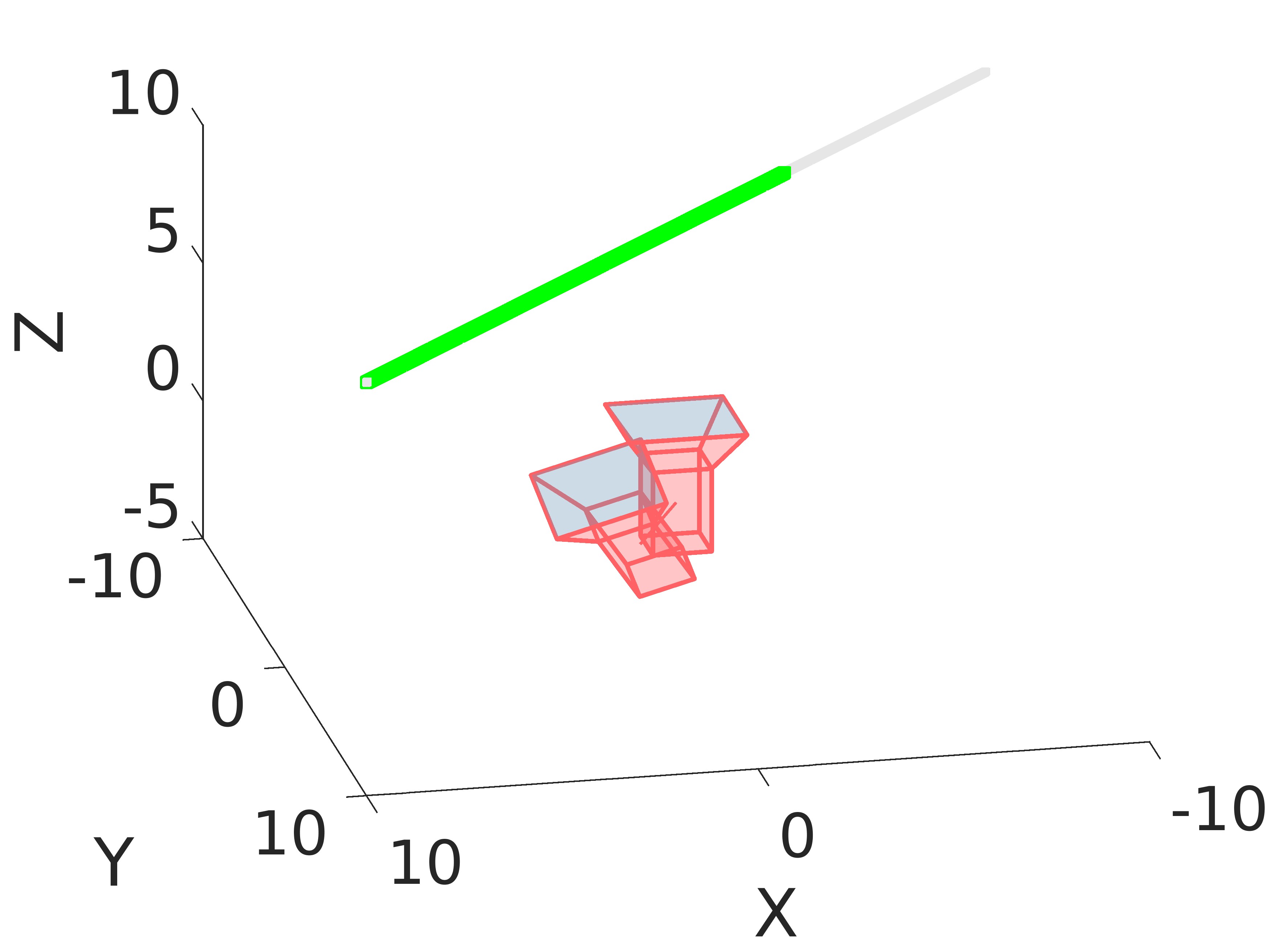}
        \caption{RankRegular-line}
        \label{fig:line}
    \end{subfigure}
    \hfill
    \begin{subfigure}[t]{0.48\linewidth}
        \captionsetup{justification=centering, singlelinecheck=false, position=below} 
        \includegraphics[width=\linewidth]{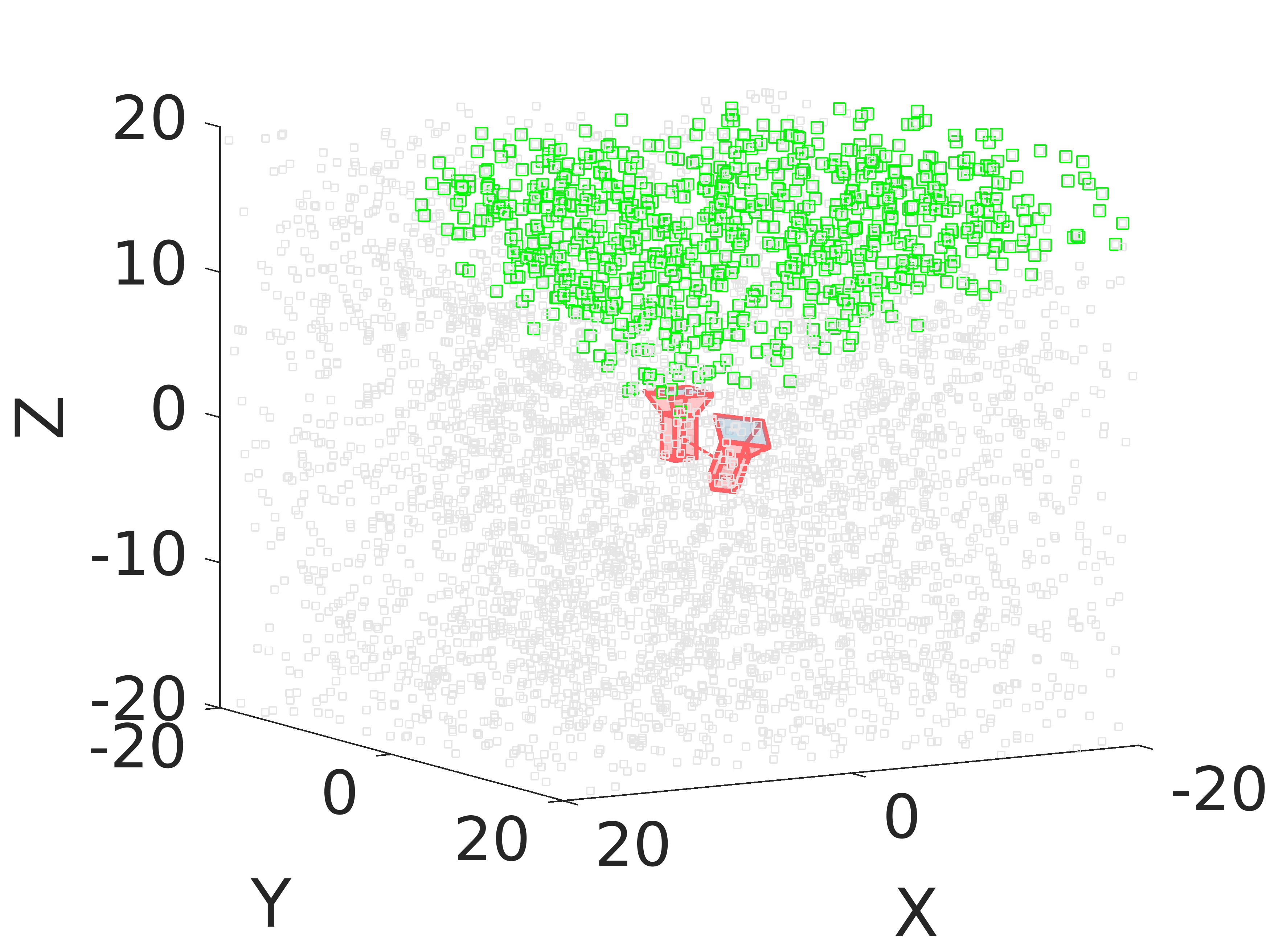}
        \caption{Standard}
        \label{fig:case3}
    \end{subfigure}

    \begin{subfigure}[t]{0.48\linewidth}
        \captionsetup{justification=centering, singlelinecheck=false, position=below} 
        \includegraphics[width=\linewidth]{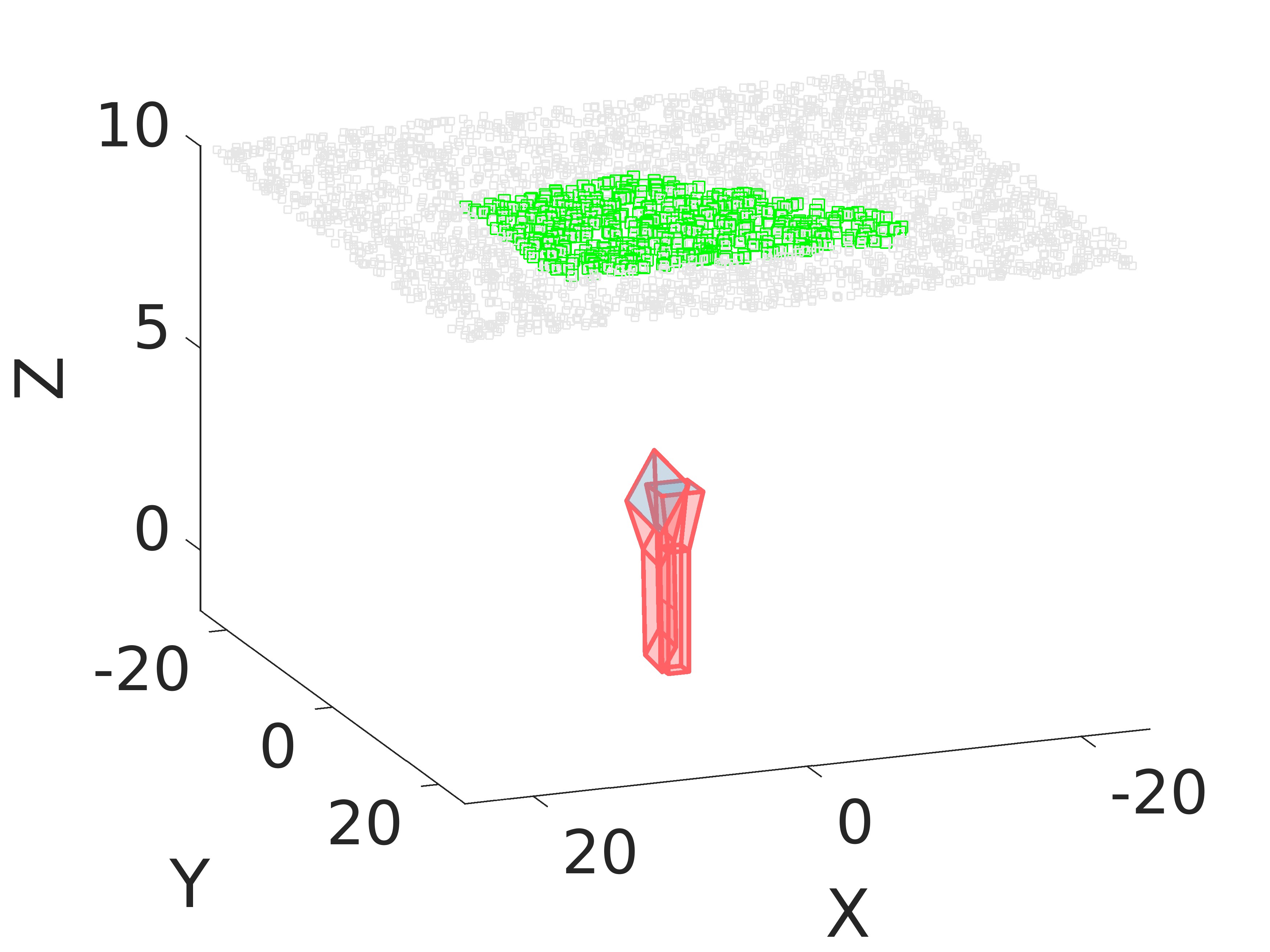}
        \caption{PlanarScene}
        \label{fig:planar1}
    \end{subfigure}
    \hfill
    \begin{subfigure}[t]{0.45\linewidth}
        \captionsetup{justification=centering, singlelinecheck=false, position=below} 
        \includegraphics[width=\linewidth]{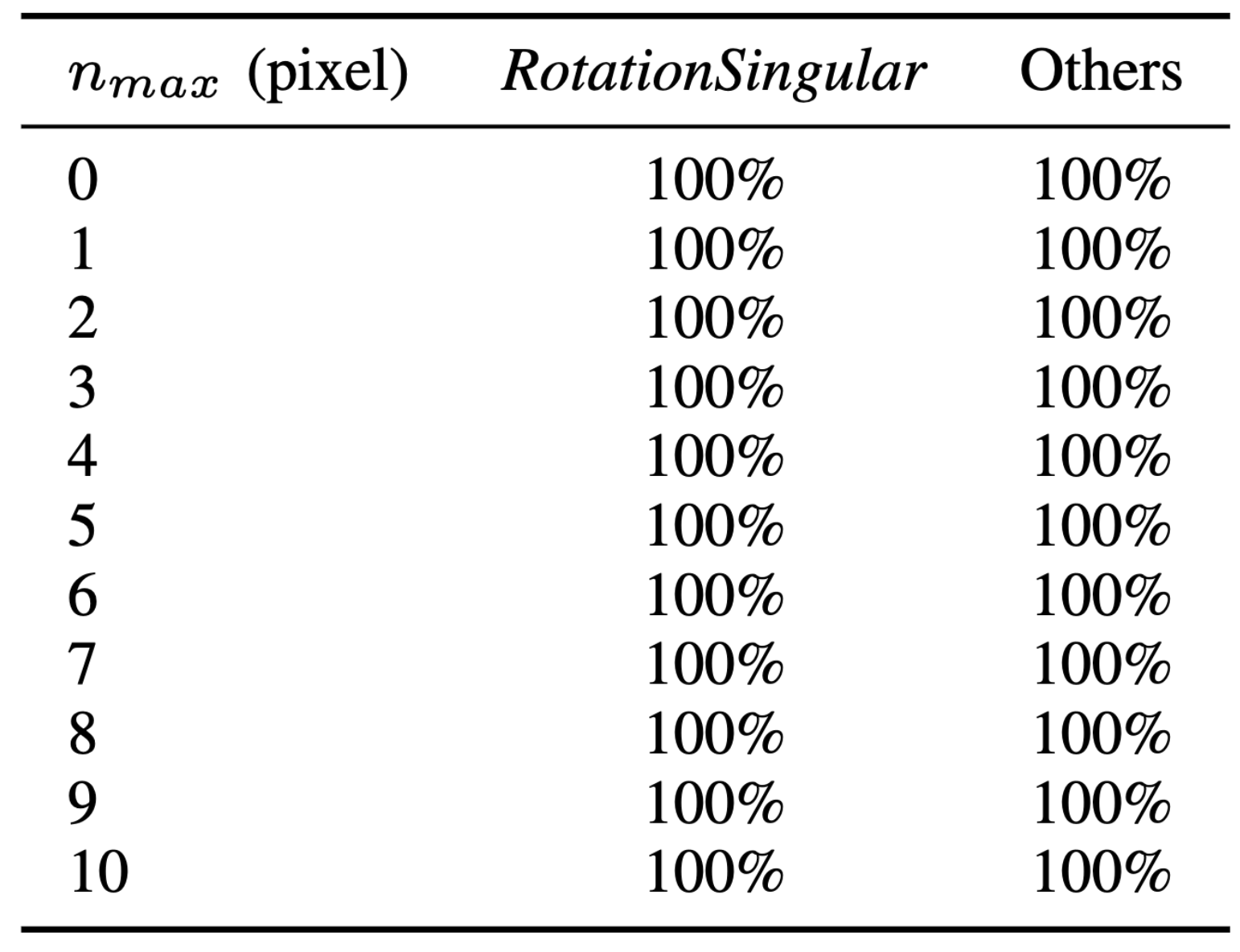}
        \caption{Recognition rate}
        \label{fig:table_recog}
    \end{subfigure}

    \caption{ (a)-(e) Scene structures in simulation, where cameras are depicted as red models, 3D points distribute throughout space are represented as gray dots, and 3D points observed by cameras are highlighted in green. (f) Recognition rate of our proposed algorithm}
    \label{fig:SceneStructure1}
\end{figure}

\begin{figure*}[htbp]
    \centering
    \begin{subfigure}[b]{0.49\textwidth}
        \includegraphics[width=\textwidth]{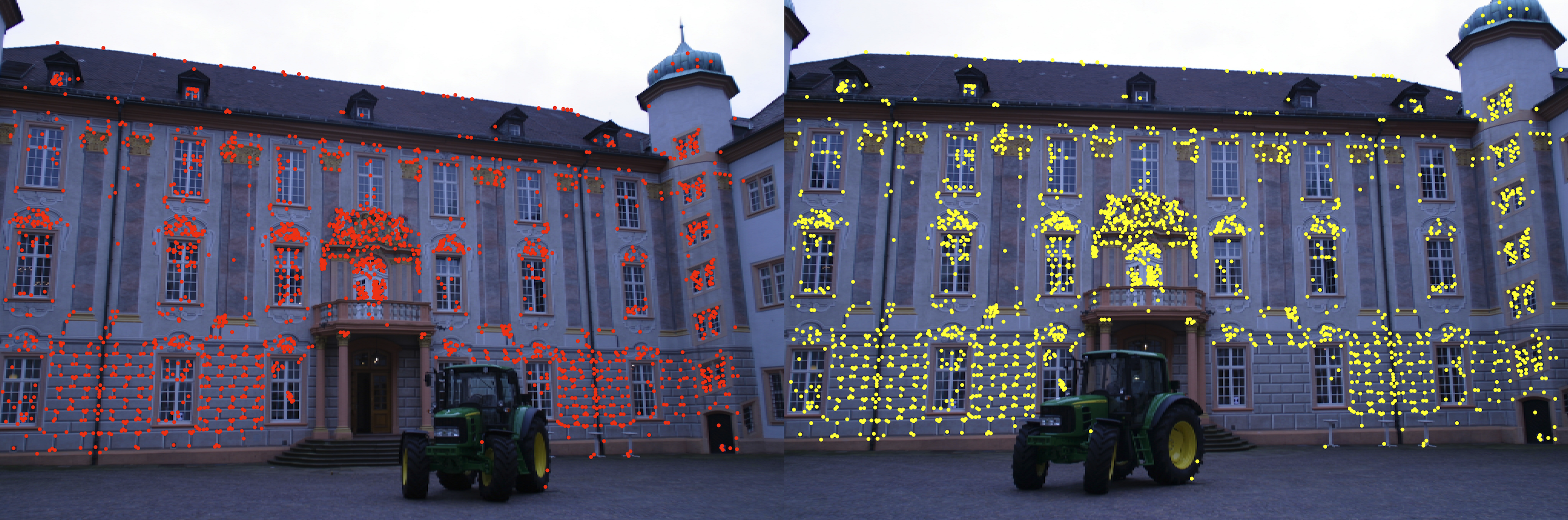}
        \captionsetup{justification=centering, singlelinecheck=false, position=below} 
        \caption{$v_{\text{rs}}=1.592 \times 10^{-2}$}
        \label{fig:CasltleLargeV1}
    \end{subfigure}
    \hfill
    \begin{subfigure}[b]{0.49\textwidth}
        \includegraphics[width=\textwidth]{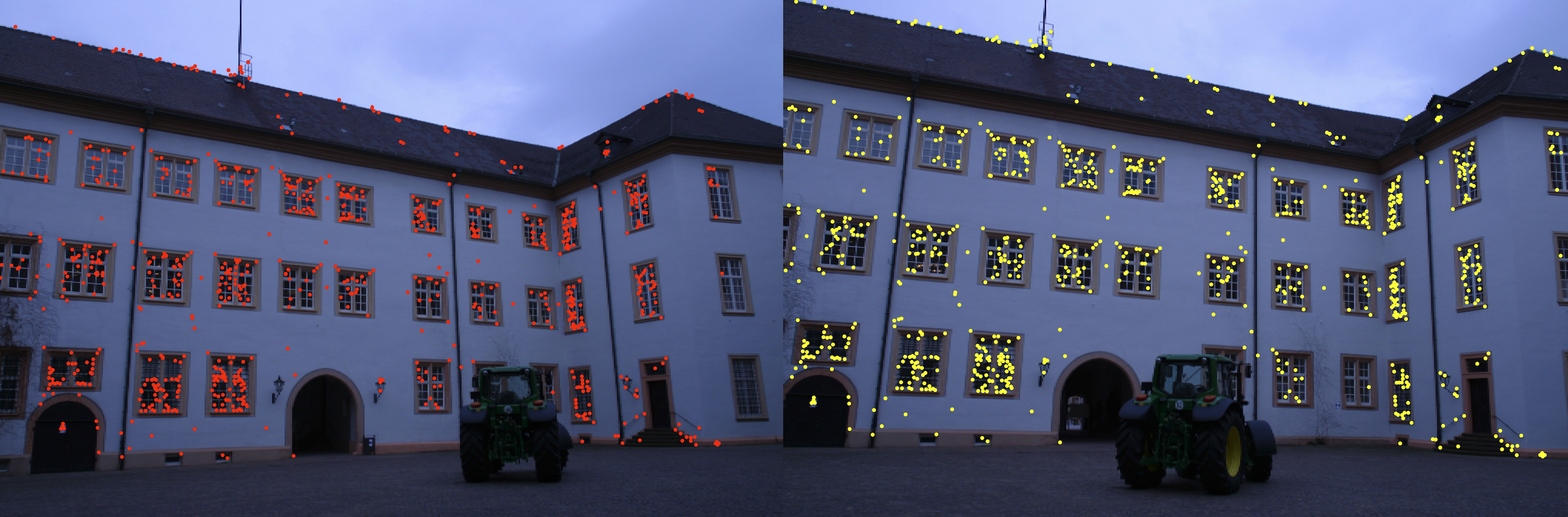}
        \captionsetup{justification=centering, singlelinecheck=false, position=below} 
        \caption{$v_{\text{rs}}=1.735 \times 10^{-2}$}
        \label{fig:CasltleLargeV2}
    \end{subfigure}
        \begin{subfigure}[b]{0.49\textwidth}
            \includegraphics[width=\textwidth]{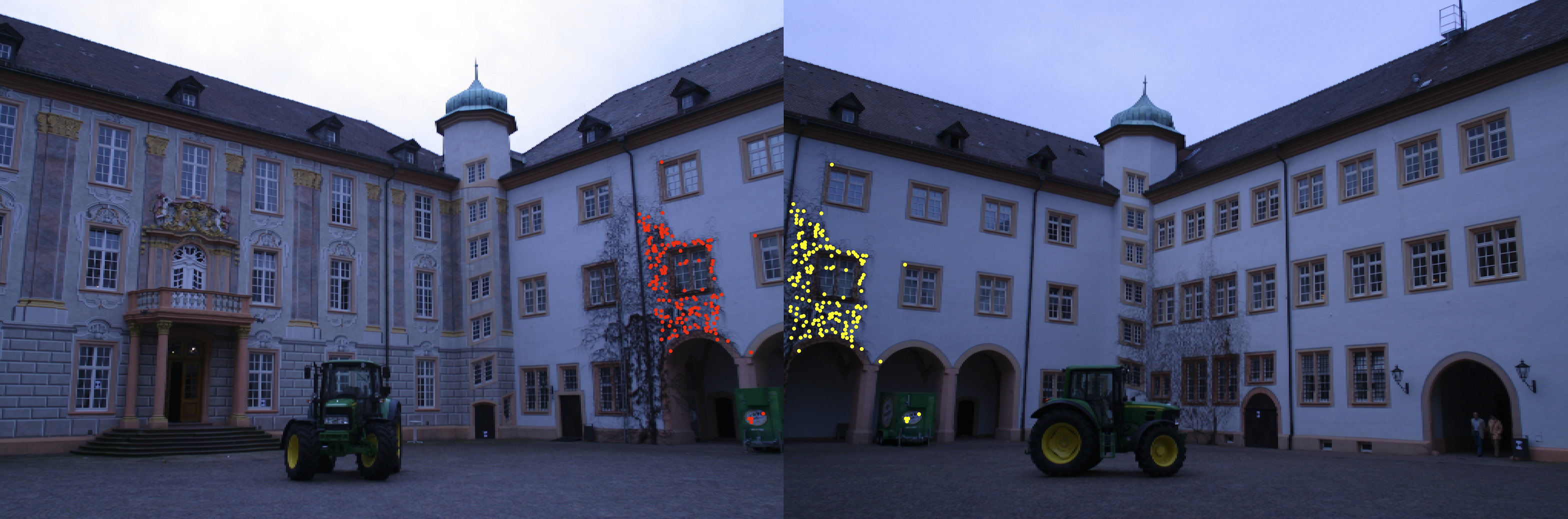}
            \captionsetup{justification=centering, singlelinecheck=false, position=below} 
            \caption{$v_{\text{rs}}=7.012 \times 10^{-4}$}
            \label{fig:CasltleSmallV1}
        \end{subfigure}
        \hfill
        \begin{subfigure}[b]{0.49\textwidth}
            \includegraphics[width=\textwidth]{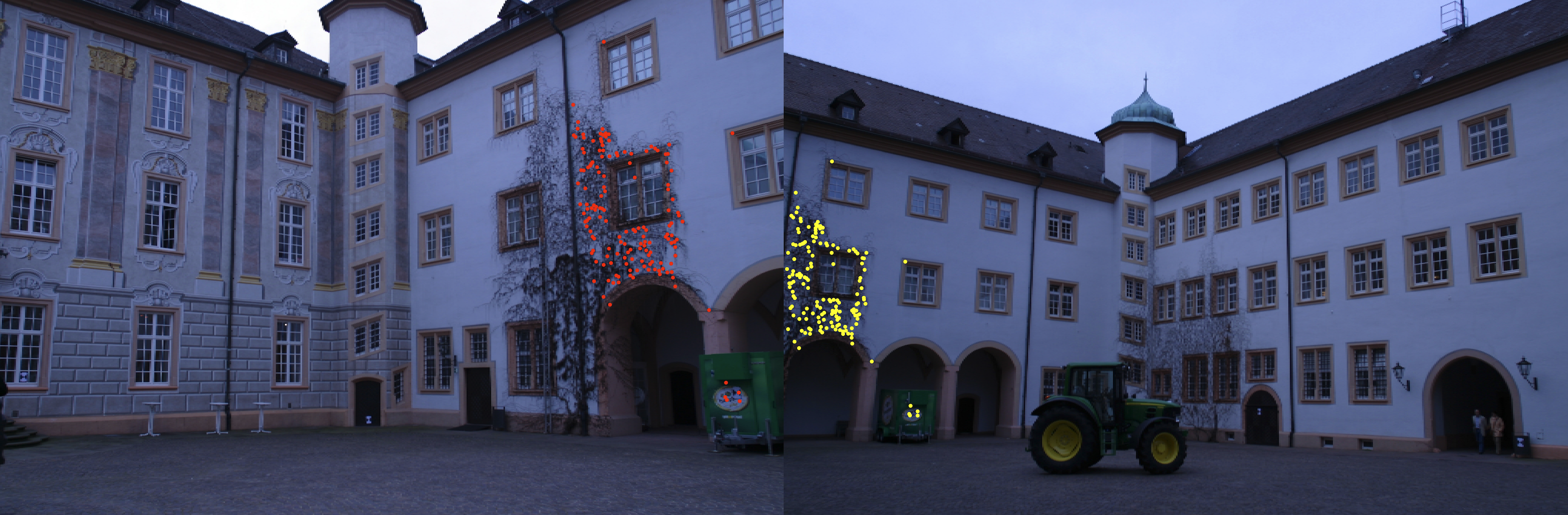}
            \captionsetup{justification=centering, singlelinecheck=false, position=below} 
            \caption{$v_{\text{rs}}=1.424 \times 10^{-3}$}
            \label{fig:CasltleSmallV2}
        \end{subfigure}
    \caption{Detection value of Castle sub-dataset of Strecha dataset. (a)(b) and (c)(d) differ by an order of magnitude in $v_{\text{rs}}$. Red and yellow points represent matched feature points in left and right views, respectively.}
    \label{fig:CastleCase2}
\end{figure*}

\begin{figure}[htbp]
    \centering
    \begin{subfigure}[t]{0.5\linewidth}

        \includegraphics[width=\linewidth]{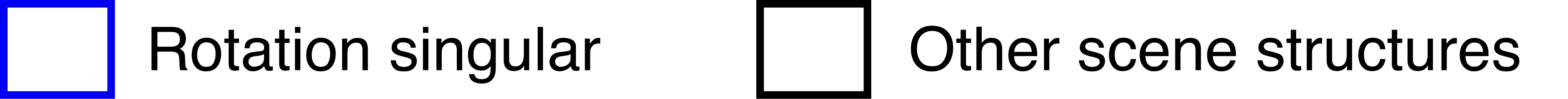}
    \end{subfigure}

    \begin{subfigure}[t]{0.9\linewidth}

        \includegraphics[width=\linewidth]{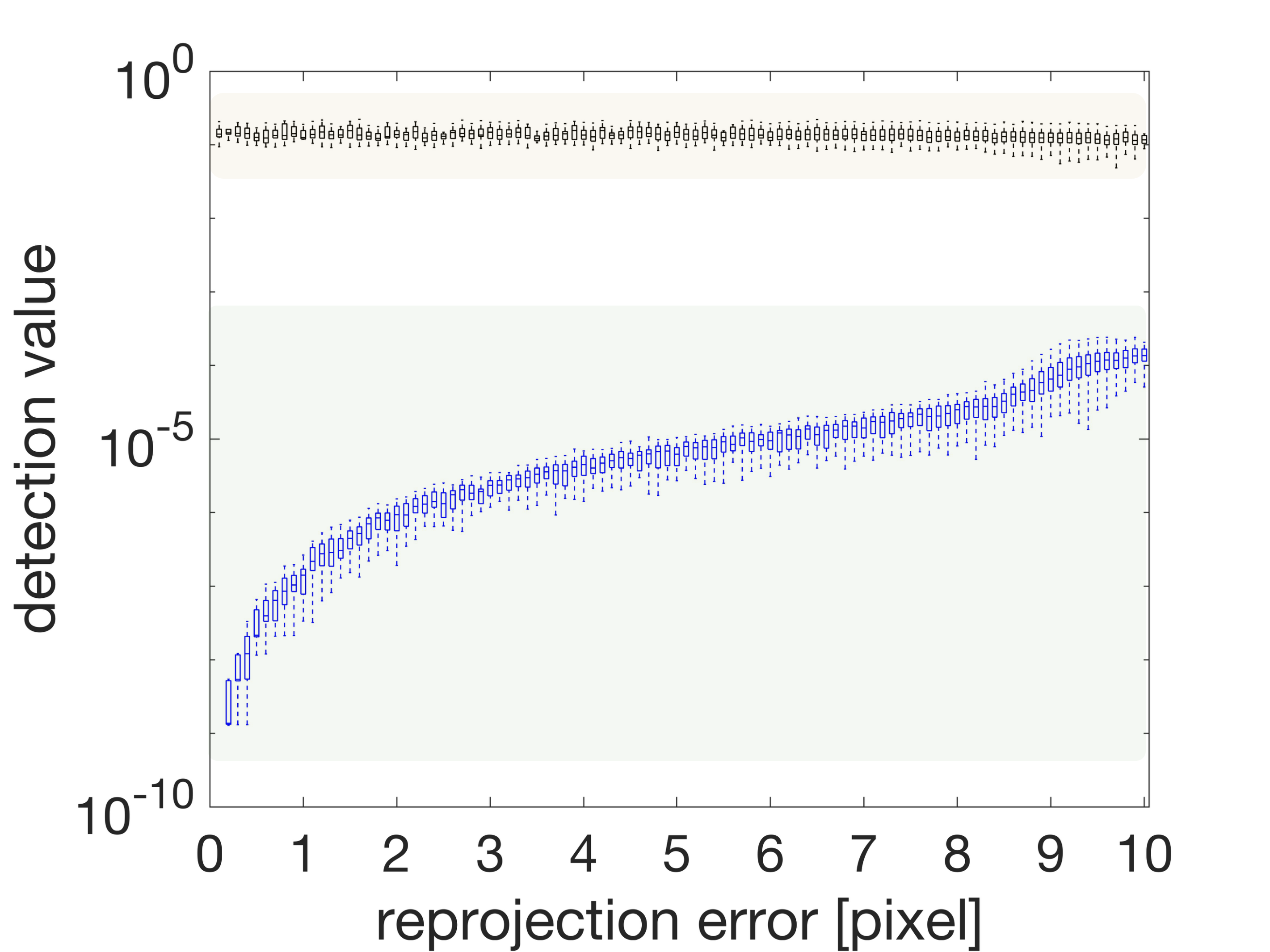}
        \captionsetup{justification=centering, singlelinecheck=false, position=below} 
    \end{subfigure}
    \hfill    

    \caption{Detection values of \( v_{\text{rs}} \) in different scene structures. 
    With growth of repetition error, detection value of rotation singular scenes increases as well, but it is at least an order of magnitude lower than those of other scene structures.
    }
    \label{fig:recog_singular}
\end{figure}

\subsection{Scene Identification}
\subsubsection{Simulation Experiments}
As discussed in section \ref{section:TWOVIEW}, \textit{RotationSingular} scene structure is a special type of degenerate scene structure in which most initial value solution methods fail. 
In the experiments, we select such algorithms as Stewénius 5-point method \cite{STEWENIUS2006284}, Nister 5-point method \cite{NisterRelative}, Kneip 5-point method \cite{KneipECCV2012}, 7-point method \cite{MultipleViewGeometry}, 8-point method \cite{LONGUETHIGGINS198761}, EigenSolver 10-point method \cite{Kneip13ICCV}, and the nonlinear 10-point optimization algorithm \cite{KneipOpenGV} for validation.
Apart from control scene structure \textit{Standard}, we also construct \textit{Holoplane} and \textit{RankRegular-line}, both of which share the default camera pose configuration.
3D points of \textit{Holoplane} are placed on a square plane with a side length of 20 meters, passing through the matched cameras and the coordinate \([0, 0, 1]\), whereas those of \textit{RankRegular-line} are distributed along a random straight line on a plane $Z=10$, as illustrated in Fig. \ref{fig:case2} and Fig. \ref{fig:line}, respectively. 
The average rotation estimation errors are as shown in Table \ref{table:initialRotationCase2}. The results demonstrate that even under noise-free condition, all initialization algorithms exhibit considerable errors or fail to compute a solution entirely. Therefore, to ensure the accuracy of camera pose estimation, it is necessary to identify and exclude \textit{RotationSingular} scene structure.

Subsequently, we evaluate the proposed \textit{RotationSingular} structures identification method. 
We construct \textit{PureRotation} scene structure, where all cameras are placed at the coordinate origin, as shown in Fig. \ref{fig:case1}. \textit{PureRotation} and \textit{Standard} serve as the control group, representing typical scene structures of \textit{PR/B/I} and \textit{RankRegular}, respectively.
Other simulation configurations follow the default setting.
We calculate the reprojection errors and statistically analyze the distribution of \( v_{\text{rs}} \) within the $0.01$ pixel interval across different scenes, as shown in Fig. \ref{fig:recog_singular}. It is evident that the difference between the \textit{RotationSingular} scene structures and others are significant up to an order of magnitude.

Empirically, we set the threshold to \( 10^{-4} \), and for different scenes, we assume that the noise follows a uniform distribution \( U(0, n_{max}) \) in pixels. Fig. \ref{fig:table_recog} summarizes the recognition rate under different scenes and $n_{max}$ conditions. It is clear that as the maximum noise increases from 0 to 10 pixels, the recognition rate of our algorithm remains $100 \%$ in all cases.

\begin{figure*}[htbp] 
    \centering
    \begin{subfigure}[t]{0.62\linewidth}
        \includegraphics[width=\linewidth]{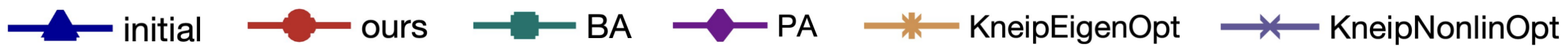}
        \label{fig:relativeOptlegend}

    \end{subfigure}

    \begin{subfigure}[t]{0.06\linewidth}
        \includegraphics[width=\linewidth]{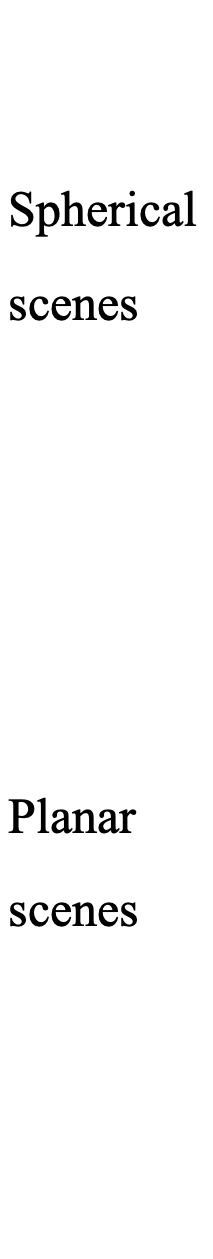}
        \captionsetup{justification=centering, singlelinecheck=false, position=below} 
        \setlength{\abovecaptionskip}{0pt}  
        \setlength{\belowcaptionskip}{5pt}  

    \end{subfigure}
    \begin{subfigure}[t]{0.23\linewidth}
        \includegraphics[width=\linewidth]{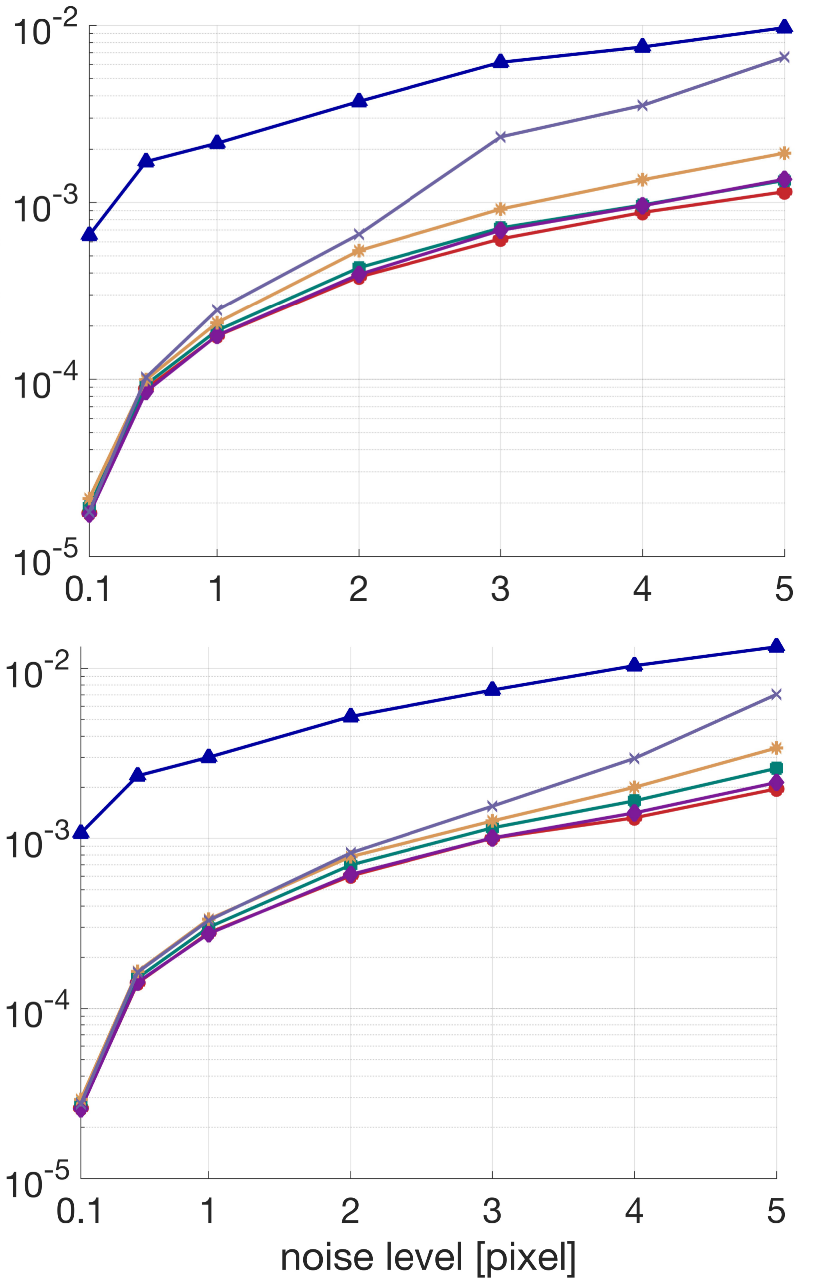}
        \captionsetup{justification=centering, singlelinecheck=false, position=below} 
        \setlength{\abovecaptionskip}{0pt}  
        \setlength{\belowcaptionskip}{5pt}  

        \caption{Noise}
        \label{fig:relativeOpt1}

    \end{subfigure}
    \begin{subfigure}[t]{0.23\linewidth}
        \includegraphics[width=\linewidth]{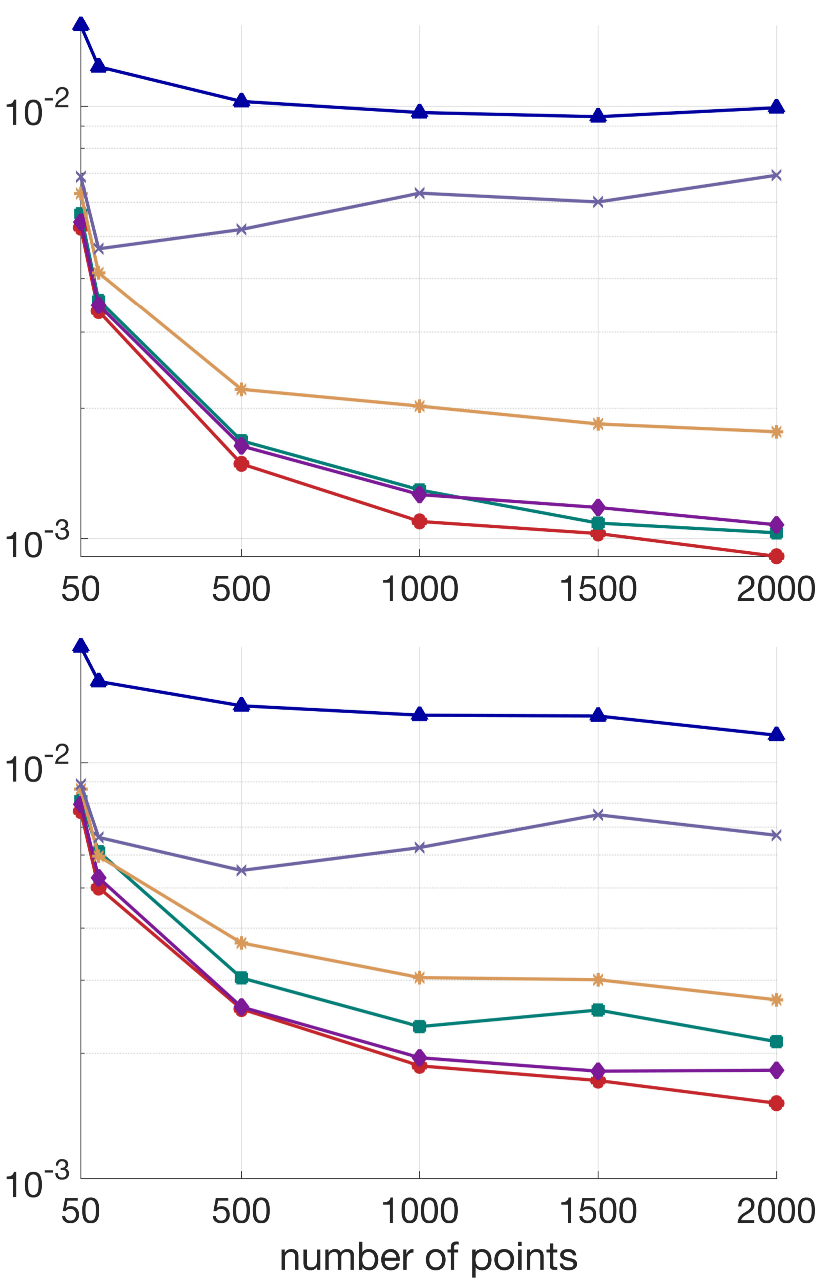}
        \captionsetup{justification=centering, singlelinecheck=false, position=below} 
        \setlength{\abovecaptionskip}{0pt}  
        \setlength{\belowcaptionskip}{5pt}  
        \caption{3D points number}
        \label{fig:relativeOpt2}
    \end{subfigure}
    \begin{subfigure}[t]{0.23\linewidth}
        \includegraphics[width=\linewidth]{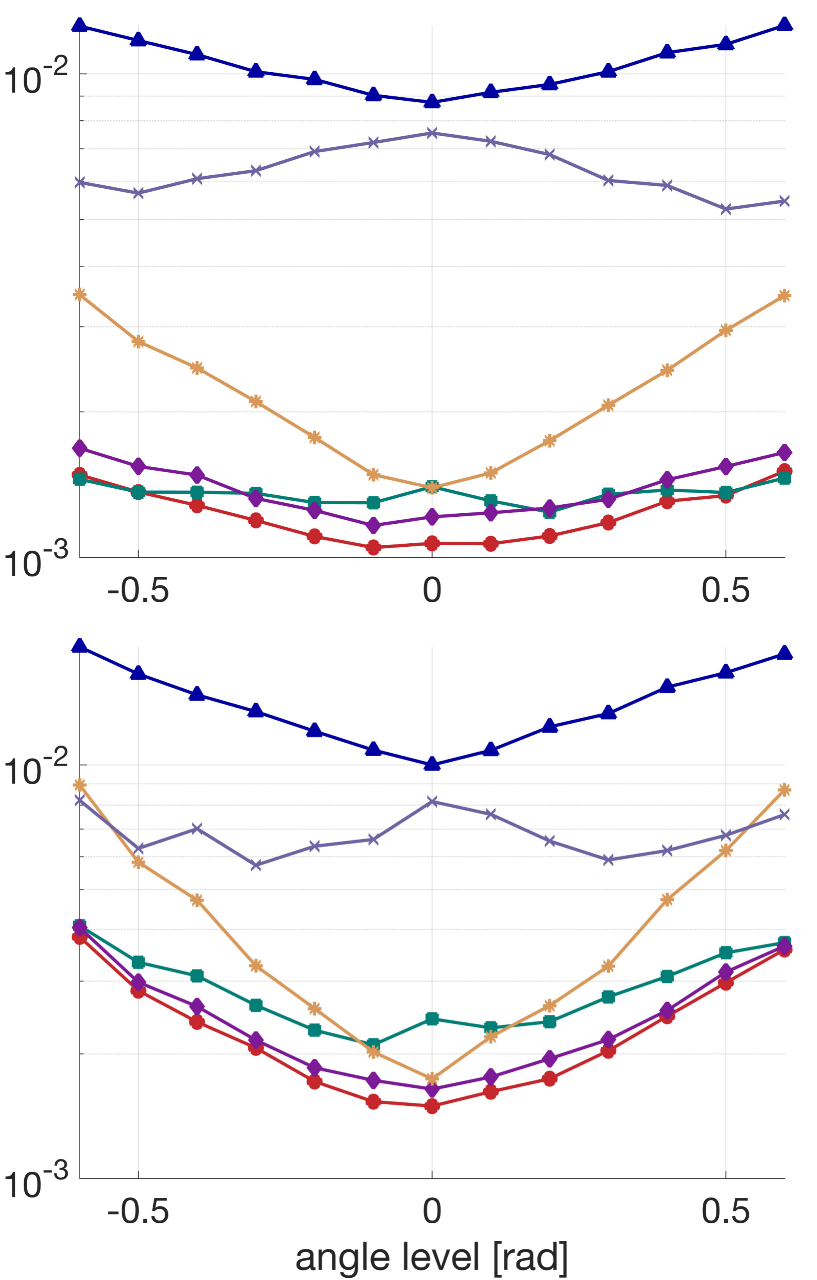}
        \captionsetup{justification=centering, singlelinecheck=false, position=below} 
        \setlength{\abovecaptionskip}{0pt}  
        \setlength{\belowcaptionskip}{5pt}  
        \caption{Rotation angle}
        \label{fig:relativeOptRot}
    \end{subfigure}
    \begin{subfigure}[t]{0.23\linewidth}
        \includegraphics[width=\linewidth]{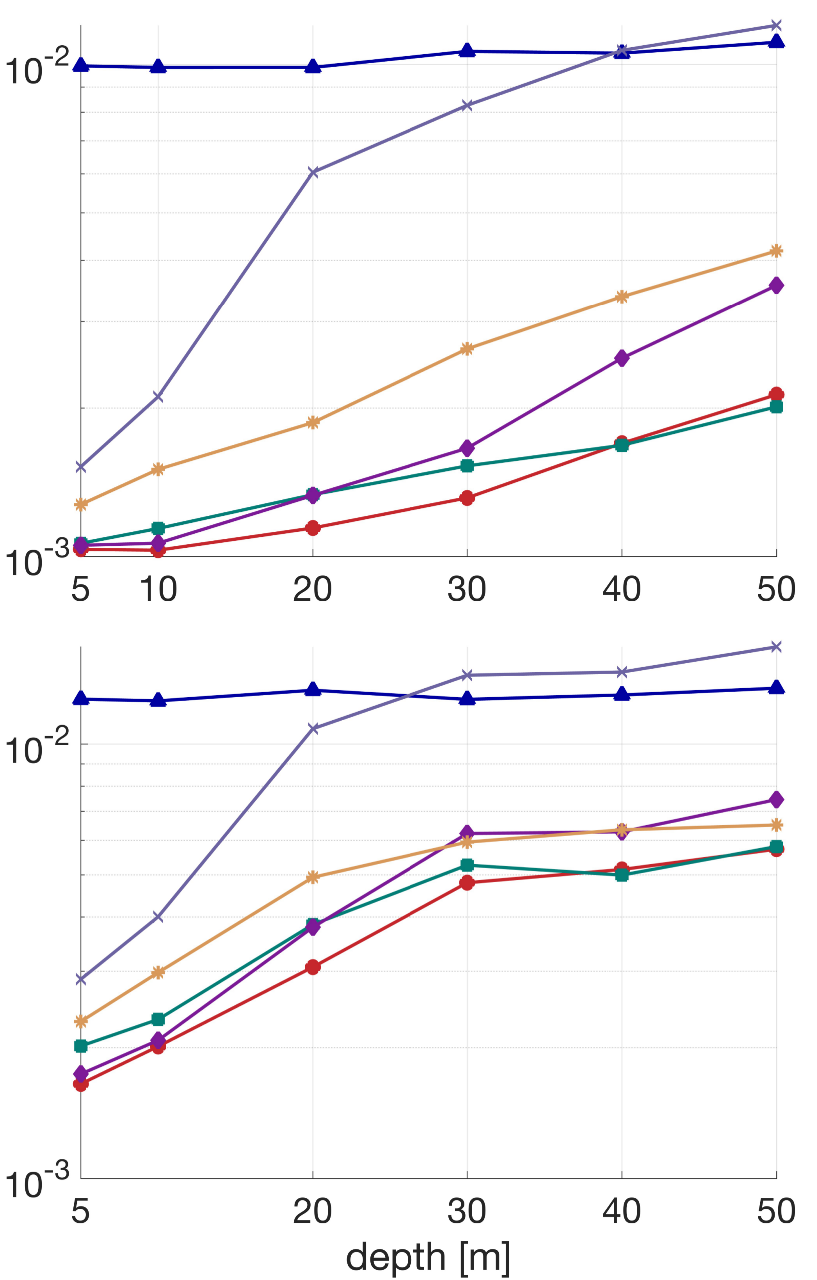}
        \captionsetup{justification=centering, singlelinecheck=false, position=below} 
        \setlength{\abovecaptionskip}{0pt}  
        \setlength{\belowcaptionskip}{5pt}  
        \caption{Scene depth}
        \label{fig:relativeOpt3}
    \end{subfigure}
    \caption{Accuracy of two-view optimization algorithms under different control variables.
    }
    \label{fig:relativeOpt}
\end{figure*}

\subsubsection{Real-world Dataset Experiments}
In the real-world applications of SfM, it is extremely rare to encounter scene structures that strictly conform to the \textit{RotationSingular} scene structure configuration. For most scene structures that are not classified as \textit{RotationSingular} scene structures, we observe an interesting phenomenon: the detection value $v_{\text{rs}}$ is proportional to the breadth of the matching feature points. For example, when the points are dispersed, the detection value is large. To verify this pattern, we perform experiments using the Castle sub-dataset from the Strecha dataset. As shown in Fig. \ref{fig:CastleCase2}, four sets of matched views are presented to exhibit significant differences in \(v_{\text{rs}}\), up to more than one order of magnitude, which further highlights and validates our observed pattern.

\subsection{Two-view Rotation Optimization}
We employ such algorithms as PA, BA, Kneip's Eigen Optimization (KneipEigenOpt) \cite{Kneip13ICCV}, and Kneip's nonlinear optimization (KneipNonlinOpt) \cite{KneipOpenGV} as benchmark methods. A comprehensive performance comparison is systematically conducted across both simulation environments and real-world datasets, aiming to thoroughly validate the effectiveness and practical value of our proposed \textit{TRRM} optimization in various application scenarios.

\subsubsection{Simulation Experiments}
In each simulation, scene structures \textit{Standard} and \textit{PlanarScene} (with all 3D points on the plane \(Z=10\), as shown in Fig. \ref{fig:planar1}) are selected as the experimental subjects.
In the following analysis, we examine four control variables: 
observation noise, 3D point numbers, relative rotation and scene depth. 
For each experiment, all non-controlled parameters are maintained at default simulation configurations.

\textit{Effect of observation noise:} 
The noise follows a uniform distribution $U(0, n_N)$ in pixels, for \(n_N = 0.1,1,2,...,5\) pixels. 
The accuracy results are shown in Fig. \ref{fig:relativeOpt1}, from which it can be observed that the algorithms based on reprojection error minimization, such as BA, PA, and our proposed \textit{TRRM} optimization, demonstrate better accuracy than alternative approaches, particularly those characterized by  lower-dimensional optimization parameters, such as \textit{TRRM} achieving the highest performance and PA following closely. The performance gap is particularly evident in \textit{PlanarScene} and under large observation noise.

\textit{Effect of 3D point numbers:}
We configure the number of 3D points as 50, 100, 500, 1000, 1500 and 2000, and
the experimental accuracy results are shown in Fig. \ref{fig:relativeOpt2}. 
While our algorithm maintains the best accuracy, we note an intriguing trend: 
as the number of 3D points increases, the accuracy of most algorithms improves, and in particular, BA slightly surpasses PA in \textit{PlanarScene}. 
In contrast, Kneip's nonlinear optimization exhibits minimal improvement and even a slight accuracy degradation, which may be attributed to the numerical instability caused by the incomplete decoupling between two error terms of reprojection in the chordal distance dimension.

\textit{Effect of relative rotation:}
Starting from zeros, we incrementally increase the amplitude of the rotation angle around the \(x\)-axis, \(y\)-axis and \(z\)-axis (within the range of $[-\pi/2 , \pi/2]$) until sufficient matching feature points are lost,
and the matched cameras are situated at the coordinate origin and $[1, 0, 0]$, respectively.
The experimental accuracy results are shown in Fig. \ref{fig:relativeOptRot}. It can be observed that as the rotation angle amplitude increases, the accuracy of the initial estimation and most optimization solutions decreases slightly. 
This may be caused by the gradual compression of the camera's common 3D viewing space (tending towards \textit{RotationSingular} scene structure). 
Our algorithm generally achieves the highest accuracy, 
though susceptible to such large angle amplitude conditions.

\textit{Effect of scene depth:}
In scene structures \textit{Standard} and \textit{PlanarScene}, we set the radius of the sphere and the height of the plane where the 3D points are distributed to be 5, 10, 20, 30, 40 and 50, respectively. 
The experimental accuracy results are shown in Fig. \ref{fig:relativeOpt3}. 
It can be observed that as the depth of the scenes increases, the accuracy of various algorithms decreases, because the camera measurement errors exhibit a distance-propagated magnification effect.
Our algorithm delivers the best accuracy for the majority of depth configurations, but when the 3D depth is excessively large (exceeding 20 times the baseline length), its accuracy becomes comparable to that of BA. Kneip's nonlinear optimization performs notably worse, even inferior to initial accuracy, which may be connected to the previously discussed stability problems.

\subsubsection{Real-world Dataset Experiments}
Table \ref{table:strecha_results_relativeopt} summarizes the accuracy of various algorithms for each sub-dataset. The results clearly demonstrate that our proposed \textit{TRRM} optimization algorithm consistently achieves the best accuracy on the Strecha dataset. Notably, 
the accuracy has an averaged $17.01\%$ improvement over the second best algorithm.

\begin{table*}[htbp]
    \caption{Accuracy of Two-view Optimization Algorithms on Strecha Dataset} 
\centering 
\begin{tabular}{lcccccc}
    \toprule
    \textbf{Methods}    & \textbf{Herz-Jesus-P25} & \textbf{Herz-Jesus-P8} & \textbf{Castle-P19} & \textbf{Castle-P30} & \textbf{Entry-P10} & \textbf{Fountain-P11} \\ \midrule
    initial           & 0.1663                  & 0.1641                & 0.5407              & 0.5540              & 0.1511            & 0.1099               
    \\
    BA                & 0.0806                  & \textbf{0.0502}        & \textbf{0.3404}              & \textbf{0.2883}              & 0.0800            & 0.0567               \\
    PA                & 0.0827                  & 0.0525                & 0.3456              & 0.3055              & 0.1077            & \textbf{0.0566}      \\
    Kneip Eigen Opt   & \textbf{0.0801}         & 0.0514                & 0.4084              & 0.3118              & \textbf{0.0794}            & 0.0571               \\
 Kneip Nonlin Opt& 0.0953& 0.0680& 0.4016& 0.3062& 0.0915&0.0667\\
    \textbf{ours}     & \textcolor{red}{\textbf{0.0657}} & \textcolor{red}{\textbf{0.0497}} & \textcolor{red}{\textbf{0.2109}} & \textcolor{red}{\textbf{0.2321}} & \textcolor{red}{\textbf{0.0598}} & \textcolor{red}{\textbf{0.0561}} \\     
    {improvement ratio}     & $17.98\%$ & $0.996\%$ & $38.04\%$ & $19.49\%$ & $24.69\%$ & $0.883\%$ \\ \bottomrule
\end{tabular}
\label{table:strecha_results_relativeopt}

\raggedright 
\vspace{2mm}

\textit{Note:} The best results are highlighted in bold red, and the second-best results are highlighted in bold black. The last row is improvement ratio between the best and the second best.
\end{table*}

\begin{figure}[!t]
    \centering
    \begin{subfigure}{0.48\linewidth} 

        \includegraphics[width=\linewidth]{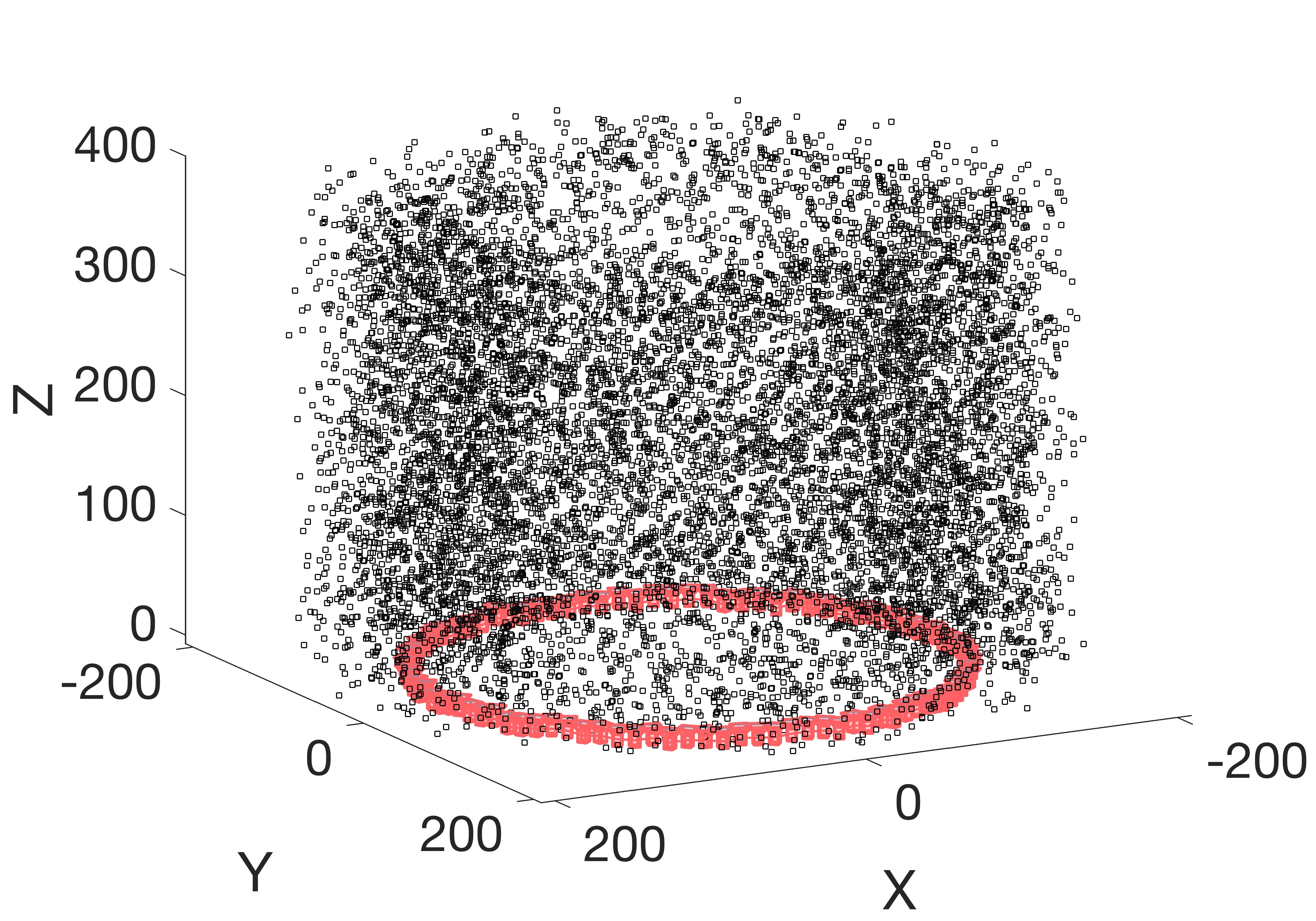}
        \captionsetup{justification=centering, singlelinecheck=false, position=below} 
        \caption{Circular motion}
        \label{fig:GRRM_scene1}
    \end{subfigure}
    \hfill
    \begin{subfigure}{0.48\linewidth}

        \includegraphics[width=\linewidth]{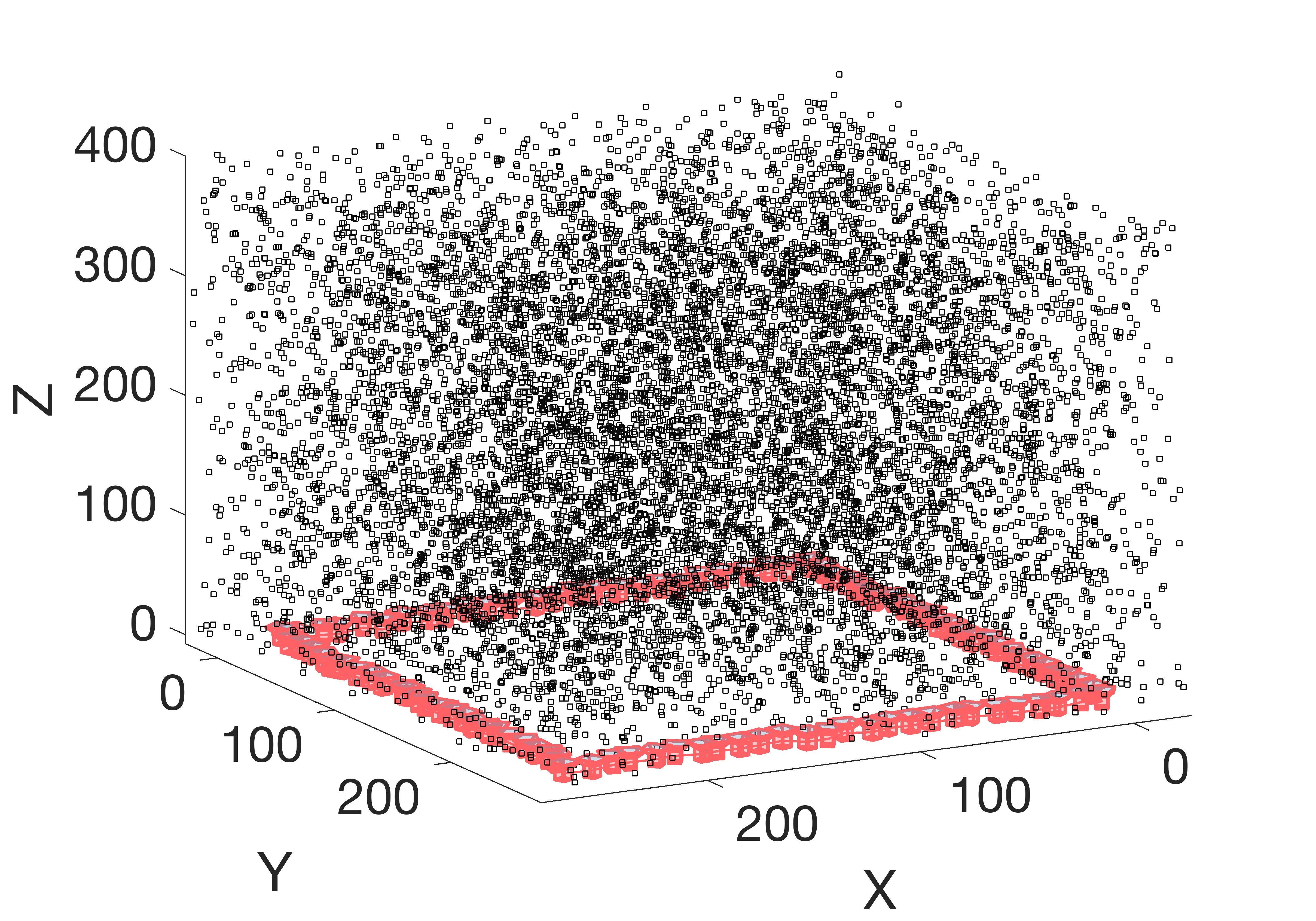}
        \captionsetup{justification=centering, singlelinecheck=false, position=below} 
        \caption{Square motion}
        \label{fig:GRRM_scene2}
    \end{subfigure}


    \begin{subfigure}{0.48\linewidth}

        \includegraphics[width=\linewidth]{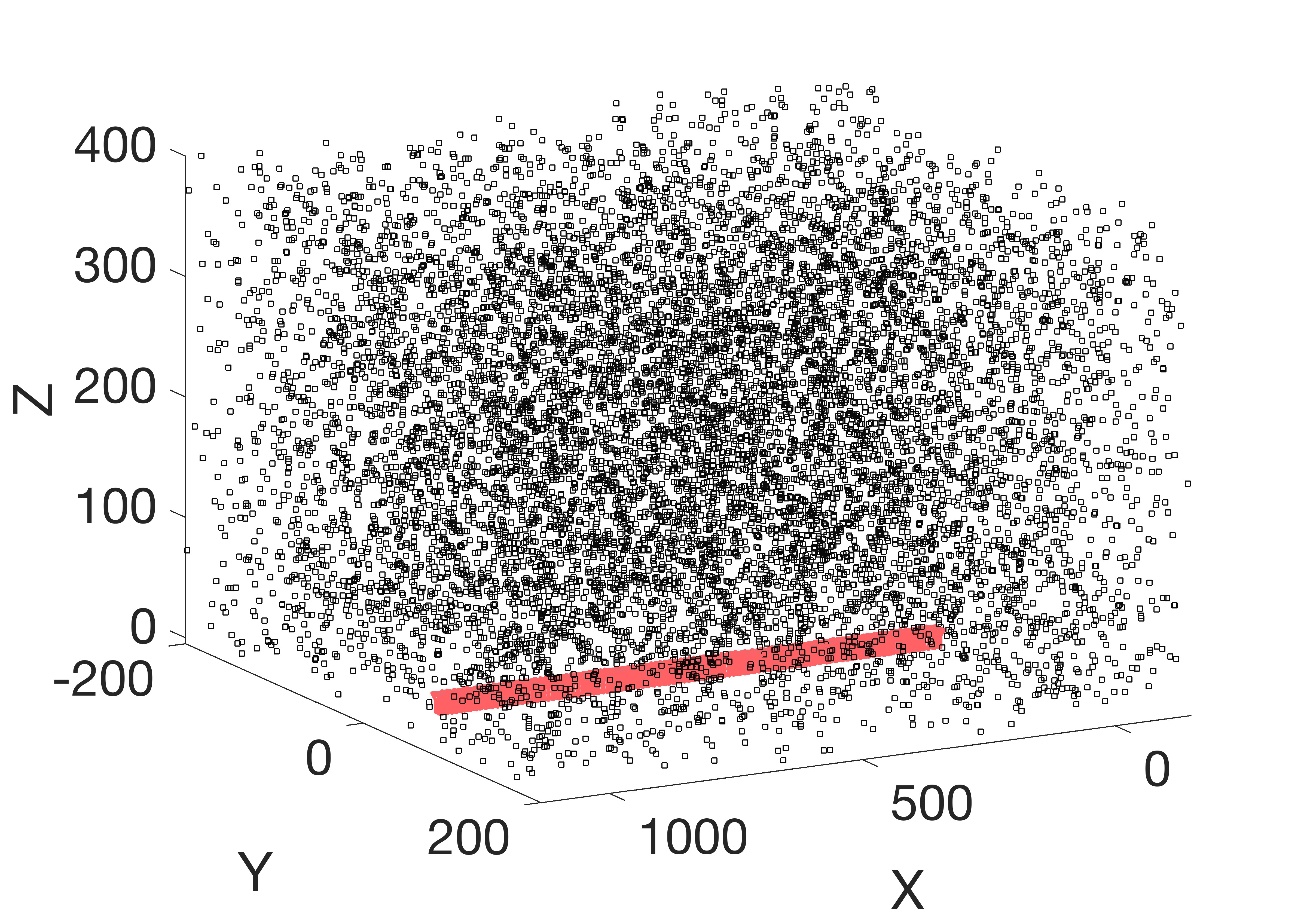}
        \captionsetup{justification=centering, singlelinecheck=false, position=below} 
        \caption{Linear motion}
        \label{fig:GRRM_scene3}
    \end{subfigure}
    \hfill
    \begin{subfigure}{0.48\linewidth}

        \includegraphics[width=\linewidth]{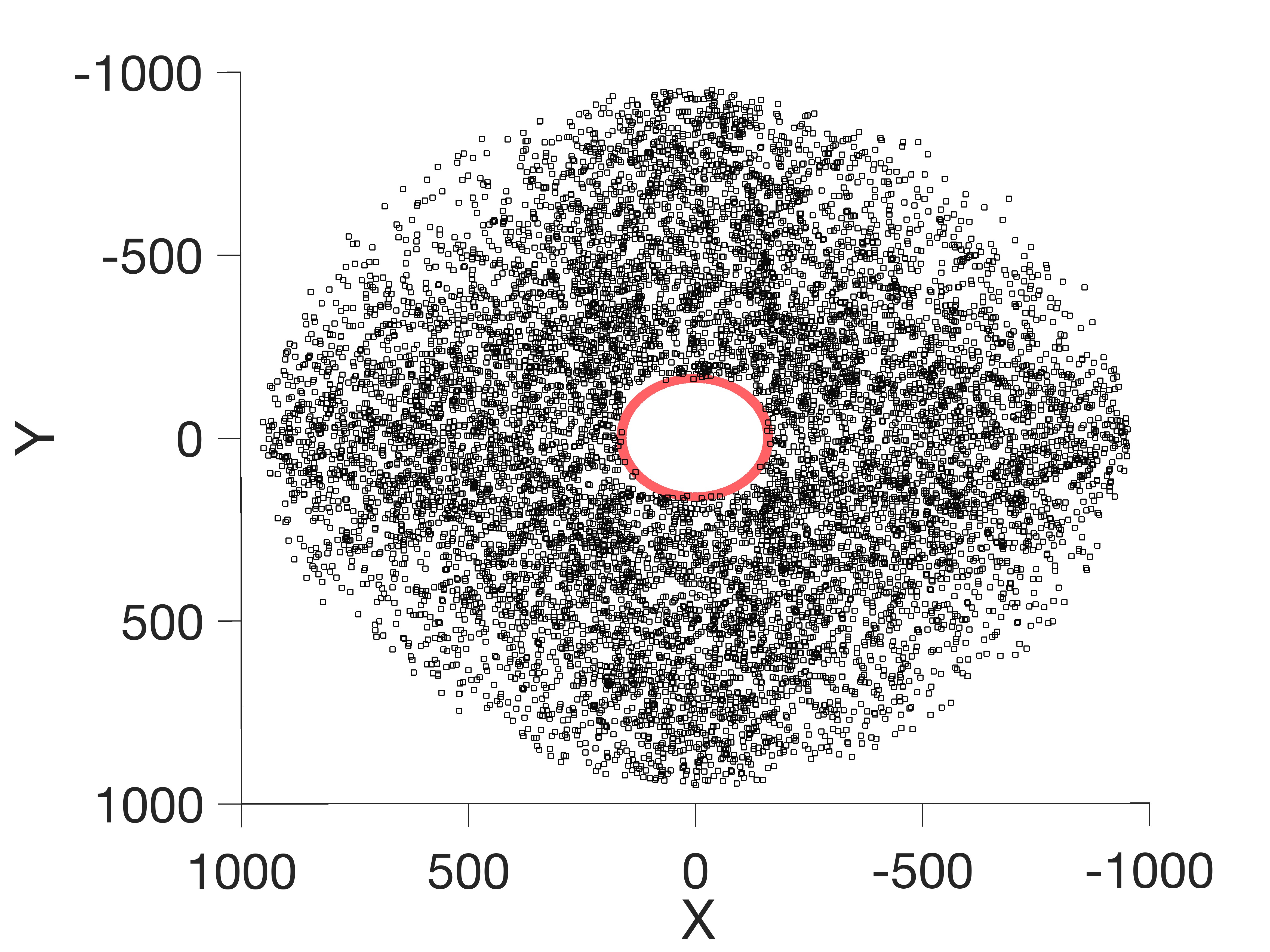}
        \captionsetup{justification=centering, singlelinecheck=false, position=below} 
        \caption{Outward-looking}
        \label{fig:GRRM_scene4}
    \end{subfigure}

    \caption{Four simulation scene structures. Cameras are represented by red models, and 3D scene points are represented by black dots. In (a) and (d), cameras are arranged in a circular layout. In (b), cameras are arranged in a square layout, and in (c), cameras are arranged in a linear layout. In (a), (b), and (c), 3D scene points are located directly above cameras, while in (d), 3D scene points surround cameras.}
    \label{fig:GRRM_scene}
\end{figure}

\subsection{Multi-view Rotation Estimation}
To comprehensively evaluate the performance of our proposed \textit{GRRM} algorithm, we systematically compare it with the rotation estimation algorithms, including the widely used rotation averaging method by Chatterjee and Govindu \cite{ChatterjeePAMI2018} and the recent representative ROBA algorithm \cite{ROBA} in both simulation environments and real-world datasets.

To further highlight the accuracy of our algorithm, we
specifically compare it against
global optimization algorithms. In the simulation experiments, the compared algorithms including classic BA \cite{BundleAdjustmentAModernSynthesis} and PA \cite{QiCai_TPAMI}, whereas in the real-world dataset experiments, 
based on OpenMVG platform, 
we benchmark our algorithm against BA computed through multiple rounds. 
Notably, our algorithm completes the task with a single optimization process.

\begin{figure}[!htbp]
    \centering
        \begin{subfigure}{0.8\linewidth} 
            \includegraphics[width=\linewidth]{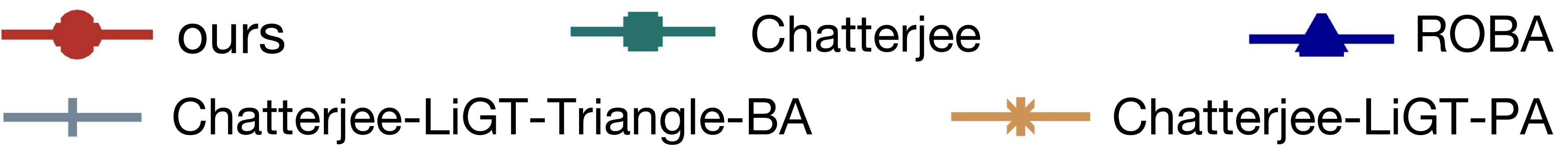}
            \label{fig:legendGROM}
        \end{subfigure}

    \begin{subfigure}{0.49\linewidth} 
        \includegraphics[width=\linewidth]{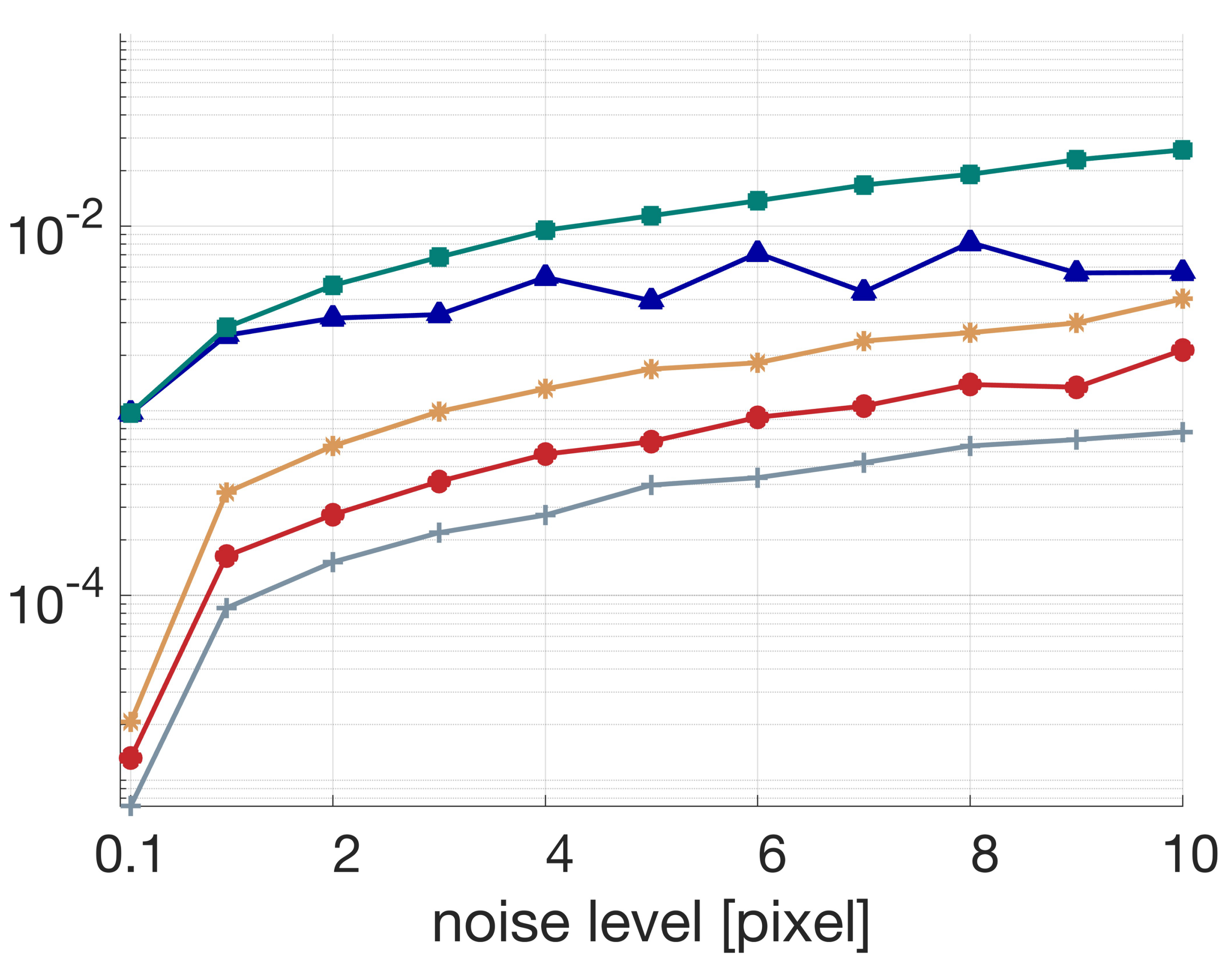}
        \captionsetup{justification=centering, singlelinecheck=false, position=below} 
        \caption{Circular motion}
        \label{fig:GRRM_eval_1}
    \end{subfigure}
    \begin{subfigure}{0.49\linewidth}
        \includegraphics[width=\linewidth]{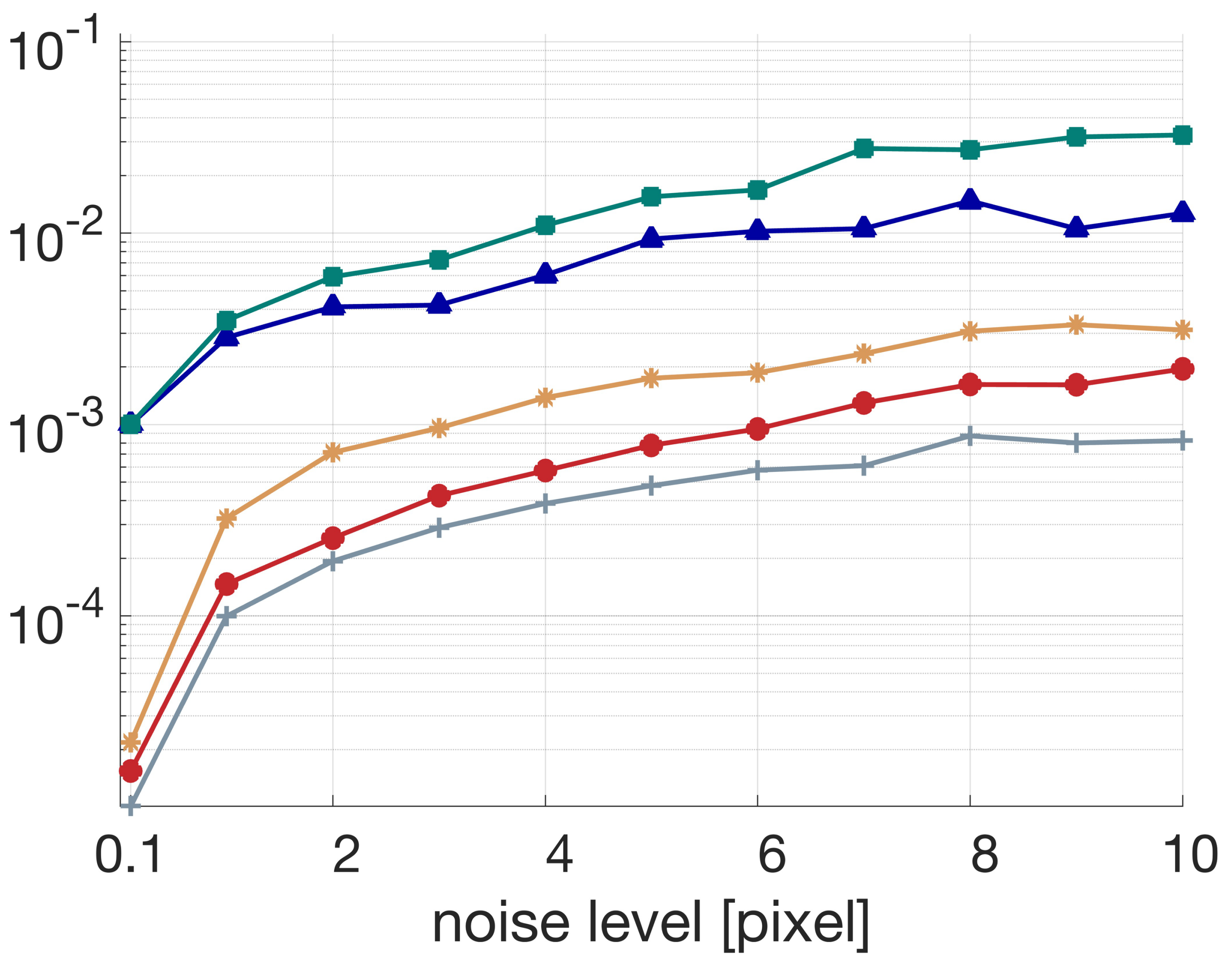}
        \captionsetup{justification=centering, singlelinecheck=false, position=below} 
        \caption{Square motion}
        \label{fig:GRRM_eval_2}
    \end{subfigure}


    \begin{subfigure}{0.49\linewidth}

        \includegraphics[width=\linewidth]{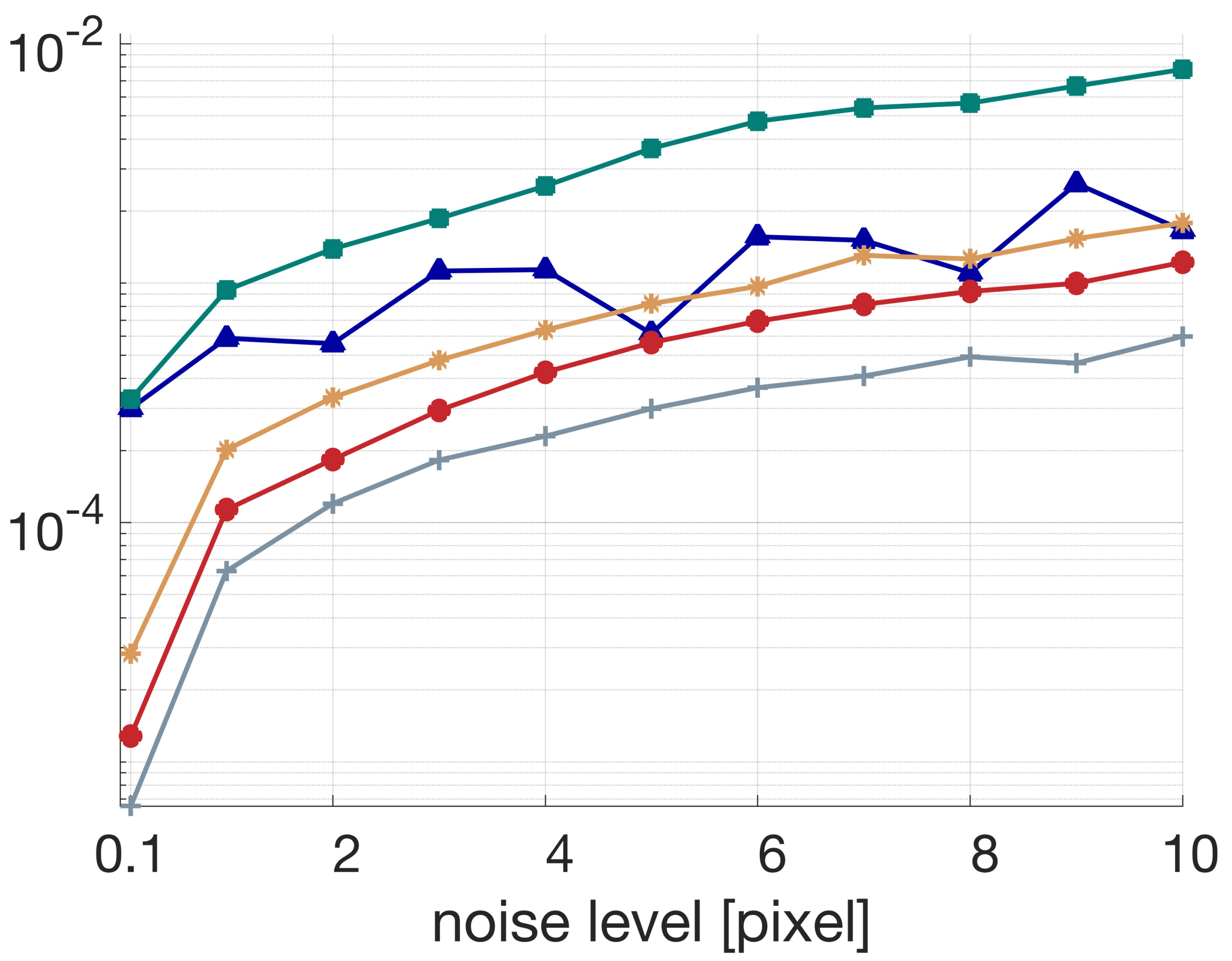}
        \captionsetup{justification=centering, singlelinecheck=false, position=below} 
        \caption{Linear motion}
        \label{fig:GRRM_eval_3}
    \end{subfigure}
    \begin{subfigure}{0.49\linewidth}

        \includegraphics[width=\linewidth]{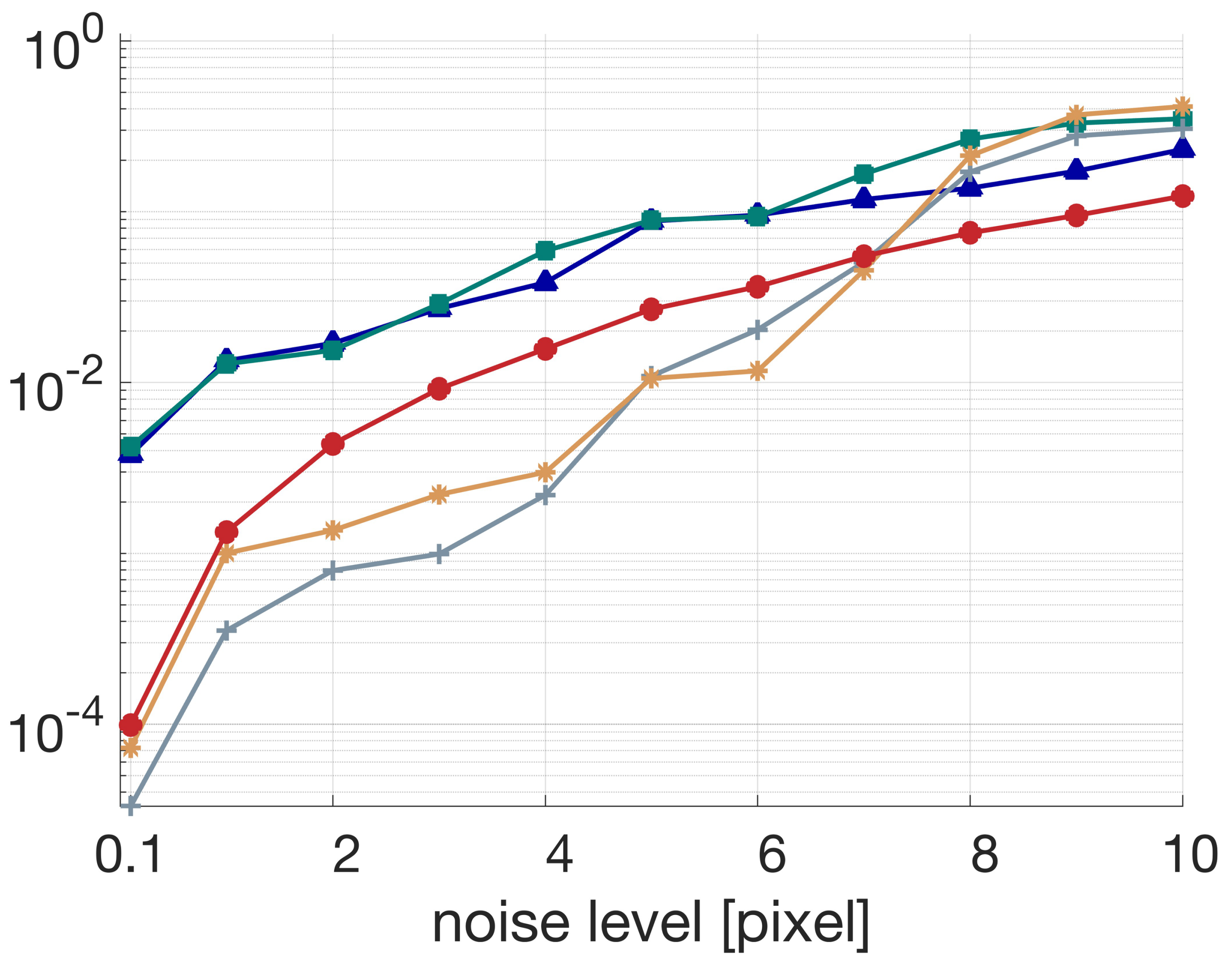}
        \captionsetup{justification=centering, singlelinecheck=false, position=below} 
        \caption{Outward-looking}
        \label{fig:GRRM_eval_4}
    \end{subfigure}

    \caption{Accuracy comparison of our proposed \textit{GRRM} algorithm with global rotation estimation algorithms (Chatterjee's algorithm and ROBA algorithm) and global optimization algorithms (BA and PA) across four simulated scenes.}
    \label{fig:GRRM_eval}
\end{figure}

\subsubsection{Simulation Experiment}
The scene configurations are as shown in Fig. \ref{fig:GRRM_scene}. In scenes (a), (b), (c) and (d), the cameras are arranged in circular, square, linear, and circular, respectively, with the center of the camera array located at the origin of the coordinate system. In scenes (a), (b) and (c), the 3D points are distributed directly above the cameras within a range of 0 to 400 meters. In scene (d), the 3D points are distributed within a cylindrical ring space with the inner radius 200 meters, outer radius 3000 meters, and height (\(z\)-axis range) of -200 to 200 meters.
We refer to these four scene structures as \textit{circular motion}, \textit{square motion}, \textit{linear motion} and \textit{outward-looking}, respectively.
The observation noise is set to follow a uniform distribution \( U(0, n_N) \) in pixels, for \(n_N = 0.1,1,2,...,10\) pixels.

The experimental accuracy results are shown in Fig. \ref{fig:GRRM_eval}. It can be observed that 
our algorithm demonstrates a notable enhancement in accuracy over the rotation estimation algorithms by an order of magnitude,
and achieves accuracy comparable to global optimization algorithms. 
It should be noted that in \textit{outward-looking} scene structure, due to the decrease in the number of matched views, 
the accuracy of global optimization algorithms BA and PA significantly drops under high noise conditions, even falling below the initial value.
However, our algorithm remains unaffected, which demonstrates its robustness for complex scenes.

\begin{table*}[ht]

    \caption{Accuracy of Multi-view Rotation Estimation Algorithms on Strecha Dataset}
    \centering
    \label{tab:strecha_rotation_accuracy}
    \begin{tabular}{lcccccc}
        \toprule
        \textbf{Methods}         & \textbf{Herz-Jesus-P25} & \textbf{Herz-Jesus-P8} & \textbf{Castle-P19} & \textbf{Castle-P30} & \textbf{Entry-P10} & \textbf{Fountain-P11} \\ \midrule
        {Chatterjee's algorithm} & 0.1063                & 0.0868               & 0.3566             & 0.3421            & \textbf{0.0839}  & 0.0422              \\
        {ROBA}      & \textbf{0.0632}        & \textbf{0.0813}      & \textbf{0.1975}            & \textbf{0.2821}            & 0.0967           & \textbf{0.0402}     \\
        \textbf{ours}      & \textbf{\textcolor{red}{0.0510}} & \textbf{\textcolor{red}{0.0636}} & \textbf{\textcolor{red}{0.1764}} & \textbf{\textcolor{red}{0.1208}} & \textbf{\textcolor{red}{0.0651}} & \textbf{\textcolor{red}{0.0262}} \\
        {improvement ratio}      & \textbf{$19.37\%$} & \textbf{$21.69\%$} & \textbf{$10.69\%$} & \textbf{$57.18\%$} & \textbf{$22.49\%$} & \textbf{$34.77\%$} \\ \bottomrule
    \end{tabular}

    \raggedright 
\vspace{2mm}

\end{table*}
\begin{table*}[ht]
    \centering
    \caption{Accuracy of \textit{GRRM} Algorithm and Global Optimization Algorithms on Strecha Dataset}
    \label{tab:strecha_pore_accuracy_global_opt}
    \begin{tabular}{lcccccc}
        \toprule
        \textbf{Methods}         & \textbf{Herz-Jesus-P25} & \textbf{Herz-Jesus-P8} & \textbf{Castle-P19} & \textbf{Castle-P30} & \textbf{Entry-P10} & \textbf{Fountain-P11} \\ \midrule
        {2th BA}    & 0.0562                & 0.0712               & 0.2281            & 0.2520             & 0.2034            & 0.0500              \\
        {4th BA}    & \textbf{\textcolor{red}{0.0479}} & \textbf{{0.0696}} & \textbf{\textcolor{red}{0.0642}} & \textbf{\textcolor{red}{0.0876}} & \textbf{{0.1282}} & \textbf{{0.0363}} \\
        \textbf{ours}     & \textbf{0.0510}       & \textbf{\textcolor{red}{0.0636}}      & \textbf{0.1764}    & \textbf{0.1208}   & \textbf{\textcolor{red}{0.0651}}  & \textbf{\textcolor{red}{0.0262}}     \\ \bottomrule
    \end{tabular}
\end{table*}

\subsubsection{Real-world Dataset Experiment.}
The accuracy results of our proposed \textit{GRRM} algorithm and rotation estimation algorithms are listed in Table \ref{tab:strecha_rotation_accuracy}. 
Our method consistently achieves the superior optimal accuracy across all datasets, with averaged $27.70\%$ improvement over the second best algorithm.

Global BA optimization is the final stage in the OpenMVG pipeline, which performs four rounds of BA optimization to ensure the accuracy and robustness of camera pose recovery and scene reconstruction. The process includes multiple rounds of optimization for both the intrinsic and extrinsic parameters of the cameras, as well as the positions of 3D points, combined with a rigorous filtering process for camera observations. The specific rounds are as follows:

\begin{itemize}
    \item {First BA:} Optimize only the position parameters of cameras and 3D scene points.
    \item {Second BA:} Simultaneously optimize the pose parameters of cameras and the position parameters of 3D scene points.
    \item {Third BA:} Simultaneously optimize both the intrinsic and extrinsic parameters of cameras as well as the position parameters of 3D scene points.
    \item {Fourth BA:} After completing the first three optimizations, remove observations with a reprojection error greater than 4 pixels or a parallax angle less than 2 degrees, and then perform the same optimization as in the third round.
\end{itemize}

We undertake an ambitious comparison between our proposed \textit{GRRM} algorithm and the second and fourth BA optimization iterations, as shown in Table \ref{tab:strecha_pore_accuracy_global_opt}.
The accuracy of our one-round algorithm surpasses that of the second BA optimization and achieves comparable performance to that of the fourth BA optimization. 
This 
also highlights its ability to maintain high accuracy with significantly greater computational efficiency and robustness, 
highlighting its significant potential in multi-view applications.

\section{Conclusion}

This paper provides a representation of the visual geometry on rotation manifold by decoupling camera translation from rotation. 
It shows how scene structure affects translation solution space and proposes an effective scene structure detection method to address the challenge of singular scene structures to translation estimation.  
The work extends the pose-only imaging geometry by further reducing the parameter dimensionality. 
A framework that minimizes the reprojection error on rotation manifold is introduced for 3D visual computing, which significantly enhances the accuracy (with averaged $17.01\%$ and $27.70\%$ improvement over the second best two-view optimization algorithm and multi-view rotation estimation algorithm, respectively) and robustness of camera rotation estimation in both two-view and multi-view scenes. The accuracy is notably comparable to four rounds of BA optimization in the OpenMVG platform.


%



\ifCLASSOPTIONcompsoc
  \section*{Acknowledgments}
\else
  \section*{Acknowledgment}
\fi

This work was supported in part by National Key R\&D Program (2022YFB3903802), National Natural Science Foundation of China (62273228) and Shanghai Jiao Tong University Scientific and Technological Innovation Funds.

\ifCLASSOPTIONcaptionsoff
  \newpage
\fi



\bibliographystyle{IEEEtran}
\bibliography{ref}
%

%

\begin{IEEEbiography}[{\includegraphics[width=1in,height=1.25in,clip,keepaspectratio]{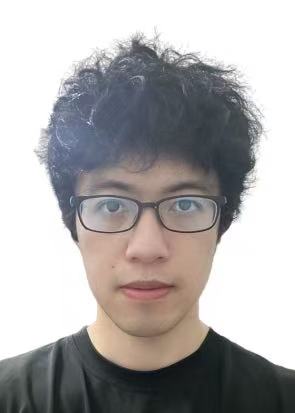}}]{Xinrui Li}
  received the B.Sc. degree in School of Physics, University of Electronic Science and Technology of China. He is now pursuing his Ph.D. degree in School of Electronic Information and Electrical Engineering, Shanghai Jiao Tong University. His research interest includes computer vision geometry and Simultaneous Localization and Mapping.
\end{IEEEbiography}

\begin{IEEEbiography}[{\includegraphics[width=1in,height=1.25in,clip,keepaspectratio]{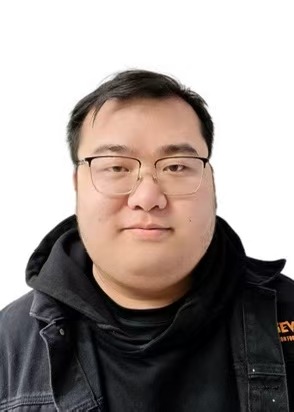}}]{Qi Cai}
  received his B.Sc. and M.Sc. degrees from Central South University, and earned his Ph.D. from the School of Electronic Information and Electrical Engineering at Shanghai Jiao Tong University. He was awarded the Shanghai Super Postdoctor Fellowship. He is currently a postdoctoral researcher at Shanghai Jiao Tong University, China. His research focuses on geodesy, computer vision, and inertial–visual fusion for navigation.
\end{IEEEbiography}


\begin{IEEEbiography}[{\includegraphics[width=1in,height=1.25in,clip,keepaspectratio]{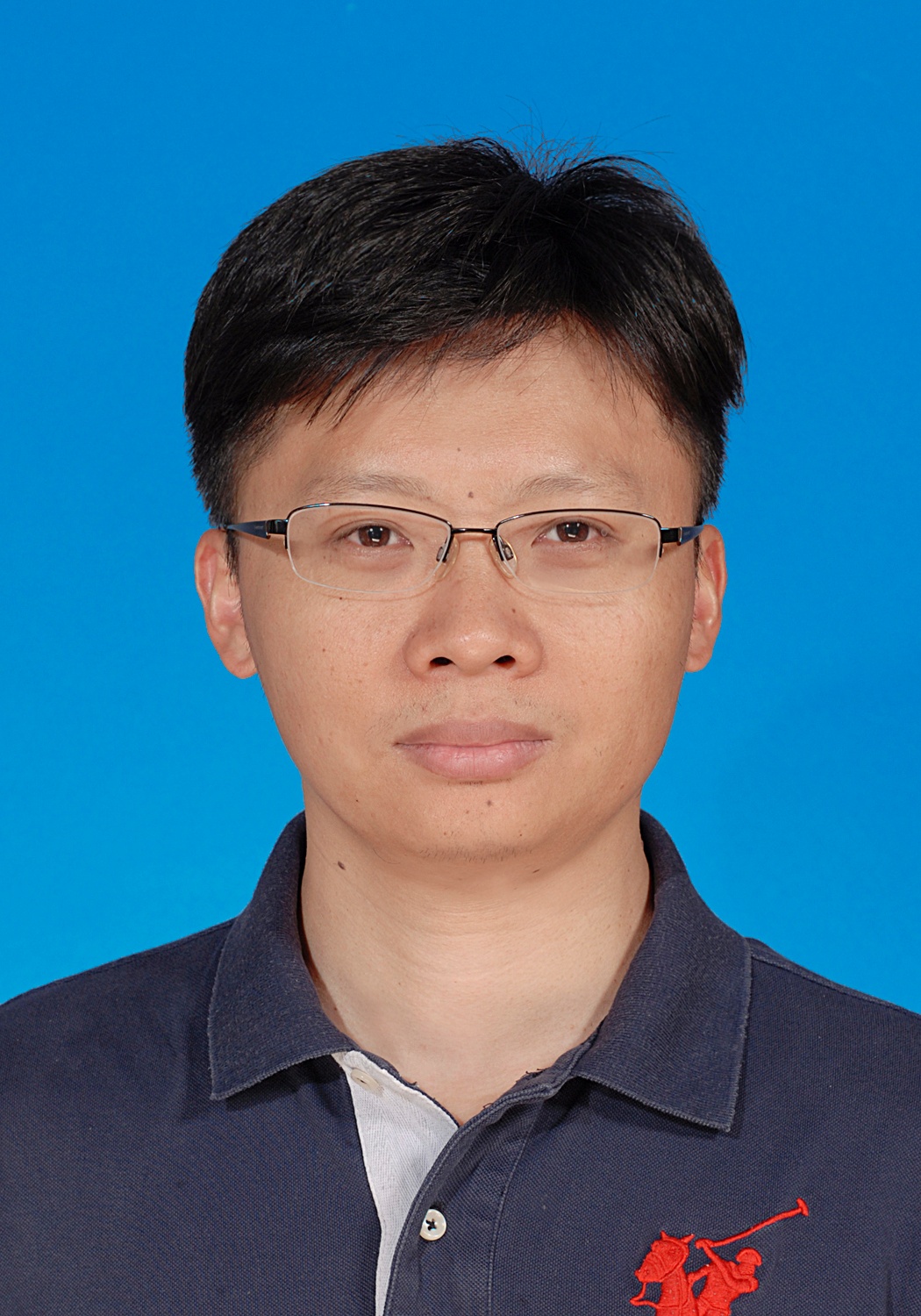}}]{Yuanxin Wu}
  (Senior Member, IEEE) received
  the B.Sc. and Ph.D. degrees from
  the Department of Automatic Control, National
  University of Defense Technology, Changsha,
  China, respectively.
  He was a Lecturer and an Associate Professor
  with the National University of Defense Technology, a visiting scholar with the
  Department of Geomatics Engineering, University of Calgary, Canada, and a Professor with Central South University, Changsha.
  He is currently a Professor with the School of Electronic Information and
  Electrical Engineering, Shanghai Jiao Tong University, Shanghai, China.
  His current research interests include inertial-based navigation system, inertial-visual fusion, and wearable human motion sensing.
  Dr. Wu was the recipient of NSFC Award for Excellent Young Scientists, Natural Science and Technology Award in University,
  and Elsevier's Most Cited Chinese Researchers. He is currently an Associate Editor for IEEE TRANSACTIONS ON AEROSPACE AND ELECTRONIC SYSTEMS, Program Committee
  for DGON Inertial Sensors and Systems, Saint Petersburg
  International Conference on Integrated Navigation Systems. He served as IEEE Aerospace and Electronic Systems Society Distinguished
  Lecturer (2020--2024).
\end{IEEEbiography}



\end{document}